\documentclass[11pt]{article}
\usepackage[T1]{fontenc}
\usepackage[utf8]{inputenc}
\usepackage[margin=2.6cm]{geometry}
\usepackage{amsmath,amssymb,amsthm}
\usepackage{booktabs}
\usepackage{array}
\usepackage{longtable}
\usepackage{placeins}
\usepackage{xcolor}
\usepackage{tikz}
\usetikzlibrary{arrows.meta,backgrounds,calc,fit,positioning}
\usepackage{pgfplots}
\pgfplotsset{compat=1.18}
\usepackage{url}
\usepackage[colorlinks=true,linkcolor=blue,citecolor=blue,urlcolor=blue]{hyperref}
\hypersetup{
  pdftitle={Structure-Preserving Uncertainty Propagation in First-Order Proof Search},
  pdfauthor={Tanel Tammet},
  bookmarksdepth=2
}

\newtheorem{proposition}{Proposition}
\newtheorem{corollary}[proposition]{Corollary}

\newcommand{\GK}{GK}
\newcommand{\ProbLog}{\textsc{ProbLog}}
\newcommand{\sfour}{\ensuremath{(s^+,s^-,c,g)}}
\newcolumntype{L}[1]{>{\raggedright\arraybackslash}p{#1}}
\newcolumntype{C}[1]{>{\centering\arraybackslash}p{#1}}

\definecolor{gkblue}{HTML}{2F6FA3}
\definecolor{gkred}{HTML}{B24A4A}
\definecolor{gkgold}{HTML}{B47A16}
\definecolor{gkteal}{HTML}{287D78}
\definecolor{gkpurple}{HTML}{725A9A}
\definecolor{gkgray}{HTML}{66717A}
\tikzset{
  gkbox/.style={draw=gkgray, rounded corners=2pt, fill=white,
    align=center, inner sep=4pt, font=\small},
  gkprocess/.style={gkbox, fill=gkblue!7, draw=gkblue},
  gkpositive/.style={gkbox, fill=gkblue!10, draw=gkblue},
  gknegative/.style={gkbox, fill=gkred!9, draw=gkred},
  gkblocker/.style={gkbox, fill=gkgold!12, draw=gkgold},
  gkresult/.style={gkbox, fill=gkteal!10, draw=gkteal},
  gkfaded/.style={gkbox, fill=gkgray!8, draw=gkgray},
  gkarrow/.style={-{Stealth[length=2.2mm]}, semithick, draw=gkgray},
  gkblockarrow/.style={-{Stealth[length=2.2mm]}, semithick, dashed, draw=gkgold}
}

\title{Structure-Preserving Uncertainty Propagation\\
in First-Order Proof Search}

\author{
Tanel Tammet\\
Tallinn University of Technology\\
\texttt{tanel.tammet@taltech.ee}
}

\date{August 2026}

\begin{document}
\maketitle

\begin{abstract}
\GK{} is a query-directed first-order prover that extends ordinary resolution-based proof search with explicit positive and negative claims,
numerical confidence values, and prioritized default rules with exceptions. It works directly with non-ground clauses,
including equality and function terms. Candidate proofs are found by bounded first-order proof search; exception conditions
of defaults are checked by further bounded searches, recursively when exceptions themselves depend on defaults.
This avoids requiring a finite global grounding, while allowing incomplete searches to be reported as such.

This paper adds structure-preserving quantitative reporting to that framework. Retained proof histories are used in two calculations.
The first reconstructs the uncertain ground premises used by each proof and computes the probability that at least
one retained proof is available, without counting shared premises independently. The second resolves positive
and negative support at intermediate atoms before that support is propagated through later rules;
the same calculation evaluates uncertain exception conditions for individual rule applications.
Reports separate positive support, negative support, conflict, and ignorance and identify detected
incomplete calculations or fallbacks. The implementation performs bounded reconstruction and dependency
traversal after proof search and still requires no global grounding. Analytic examples and independent simulators
reproduce the reference calculations on their stated fragments. Comparisons with probabilistic logic,
probabilistic ASP, default logic, and goal-directed ASP identify cases of agreement, semantic difference, unsupported translation, and incomplete computation.
\end{abstract}

\setcounter{tocdepth}{1}

\clearpage
\tableofcontents
\clearpage

\section{Introduction}

Knowledge bases assembled from heterogeneous sources may contain automatically extracted or generated claims
alongside curated facts and rules. Such claims may have different reliabilities and may conflict; 
generated text may also contain both supported and unsupported factual claims \cite{min2023factscore}. 
General rules may have exceptions, evidence for an exception can itself be uncertain or contradicted, 
and two proofs of the same conclusion can share part of their support.

\GK{} is a query-directed first-order reasoning system built on
\textsc{gkc}, a resolution-based prover for first-order logic with equality,
function terms, and arbitrary clauses~\cite{tammet2019}.
Earlier \GK{} work added confidence-carrying proof
search and combination of supporting and opposing evidence~\cite{tammet2021},
followed by recursively checked prioritized default rules~\cite{tammet2022}.
\GK{} therefore combines first-order proof search with explicit positive and
negative claims, input confidence values, and prioritized exceptions, without
requiring a finite global grounding.

This paper adds two quantitative report calculations to that framework.  One
combines proofs that may share uncertain premises.  The other resolves opposing
support at an intermediate atom before that atom is used by later rules; the
same calculation evaluates uncertain exception conditions for individual rule
applications.

Scalar proof values omit two distinctions needed by the combination rules:
whether alternative proofs reuse the same uncertain clause instances, and at
which atom opposing evidence enters a derivation.  We call an evaluation
\emph{dependency-aware} when it retains these identities and locations until
support is combined.  In \GK{} this evaluation is performed after proof
search.  It traverses the dependency graph between ground atoms
(Sections~\ref{sec:sharedthreshold} and~\ref{sec:premisesearch}) instead of
using a single value computed for each finished proof.

Probabilistic logic programming provides
probabilities and conditioning on a logic-program fragment
\cite{deraedt2007,fierens2015}; probabilistic ASP combines quantitative choices
with stable-model reasoning on groundable programs
\cite{baral2009,cozman2020,hahn2025}; and s(CASP) provides goal-directed
non-ground default negation without a probabilistic annotation semantics
\cite{arias2018}.  Markov logic accepts weighted first-order formulas but
defines a finite-domain ground graphical model rather than prioritized
defaults whose exception conditions are checked on candidate
proofs~\cite{richardson2006}.  \GK{} combines query-directed first-order resolution,
positive and negative input confidences, recursive prioritized exception
checks, and proof-producing answers without global grounding.

Consider $0.5::bird(a)$, $0.5::\neg bird(a)$, and
$0.9::flies(a)\leftarrow bird(a)$.  A calculation that combines opposition
only at the final query propagates positive support $0.45$ through $bird(a)$;
the equal positive and negative support for that premise has not been
resolved.  Similar errors arise when repeated uses of one clause are conflated
with its ground instances, when premises shared by several proofs are counted
more than once, or when an exception to one rule is treated as evidence for
the opposite conclusion.

\GK{} therefore retains proof provenance until same-polarity support has been
combined and evaluates opposition at the atom where it occurs.  A separate
directed clause-level search evaluates premise conflict and exception
conditions locally.

We compare \GK{} with implemented probabilistic and nonmonotonic reasoners
on three restricted fragments: one-sided probabilistic proof combination,
finite default rules with uncertain inputs, and query-directed first-order
problems.  The comparison distinguishes exact translations, approximate
analogues, unsupported translations, and incomplete runs.  Reports retain
each system's native output type; the comparison does not convert them to one
score.

The main contributions are:

\begin{enumerate}
\item a retained-proof calculation that reconstructs the uncertain ground
premises used by each proof and avoids double-counting premises shared by
several proofs;
\item a dependency-aware calculation that resolves positive and negative
support before propagation and evaluates uncertain exceptions for individual
rule applications;
\item a bounded implementation that reports which calculation was used and
marks detected fallbacks and unsupported cases; and
\item analytic, simulation, and system-comparison results that identify the
fragments on which the calculations agree with their reference semantics and
with other implemented systems.
\end{enumerate}

Reports identify the selected calculation and detected incomplete or
unsupported cases (Section~\ref{sec:reportstatus}); signed confidence is
positive support minus negative support.

Input confidences may represent source assessments or externally supplied scores.
Such scores need not be calibrated probabilities of correctness \cite{guo2017calibration}. 
This paper takes them as input and does not estimate or calibrate them.
The reported
values measure available proof support under the stated models and are not
claimed to be posterior probabilities of truth.  The distinction between proof
support and truth probability also affects comparisons: systems based on
different possible-world constructions can return different numbers for the
same surface syntax without an implementation error in either system.

Executable binaries, documentation, examples, and the public samplers are
available at \url{https://github.com/tammet/gkreasoner}.  

Section~2 gives motivating examples.  Section~3 defines the input language
and proof search, and Section~4 reviews related systems.  Sections~5 and~6
define the reference semantics and the bounded report calculations.
Sections~7--9 describe the evaluation and the system comparisons.
Sections~10--12 discuss related semantics, limitations, and conclusions.  The
appendices provide concrete syntax, further default cases, complete inputs,
and comparison provenance.

\section{Motivating examples}
\label{sec:example}

Four small examples introduce the problem before the formal definitions.  The
first two show cases in which the calculations agree.  The last two introduce
uncertain exceptions and opposition at an intermediate premise.  
Throughout the semantic exposition, we write
$p::H\leftarrow B_1\land\cdots\land B_k$ for a rule with input confidence $p$,
omit a confidence of one, and use $\neg$ for explicit
classical negation.  A default is written
$p::H\leftarrow B\ [\mathrm{unless}\ E]$; priorities are introduced later,
where they are needed.
Appendix~\ref{sec:gksyntax} gives the corresponding concrete \GK{} syntax and
report fields.

An \emph{atom} is a positive predicate application $A$; a \emph{literal} is
$A$ or its explicit negation $\neg A$.  For a query literal $L$, write
\[
  R(L)=\bigl(s^+(L),s^-(L),c(L),g(L)\bigr),
\]
where the four components are positive support, negative support, conflict,
and ignorance.  Positive support is unopposed support for $L$, while
negative support is unopposed support for the explicit opposite of $L$;
support in the conflict region is recorded separately.  We call
the number written before \texttt{::} the \emph{input confidence}.  It is an
activation probability in ground-instance activation semantics and determines
the support contribution of an applicable fact or rule in shared-threshold
semantics.  The final scalar $C(L)=s^+(L)-s^-(L)$ is called
\emph{signed confidence}; it is not a probability of truth.

\paragraph{A classical penguin default.}
The birds-and-penguins example is
\[
\begin{gathered}
bird(tweety),\qquad penguin(pingu),\\
bird(x)\leftarrow penguin(x),\qquad
\neg flies(x)\leftarrow penguin(x),\\
flies(x)\leftarrow bird(x)
  \quad[\mathrm{unless}\ \neg flies(x)],
\qquad Q=flies(x).
\end{gathered}
\]
\GK{} accepts $tweety$.  It also finds a candidate default proof for $pingu$,
but the proof of $\neg flies(pingu)$ blocks that default, so $pingu$ is
rejected and both proofs can be reported.  In the usual finite ASP encoding,
the default becomes \texttt{flies(X) :- bird(X), not -flies(X)}.  clingo,
DLV2, and s(CASP) give the same two conclusions on this program
\cite{alviano2017,gebser2019,arias2018,tammet2022}.  This example motivates
defaults but does not distinguish the systems.

\paragraph{Positive support propagated through a rule.}
Now consider two positive bird facts and one ordinary rule:
\[
\begin{gathered}
0.8::bird(tweety),\qquad 0.6::bird(robin),\\
0.9::flies(x)\leftarrow bird(x),
\qquad Q=flies(x).
\end{gathered}
\]
Each answer has one proof containing one ground fact instance and one ground
rule instance.  \GK{} therefore reports $0.8\cdot0.9=0.72$ for
$flies(tweety)$ and $0.6\cdot0.9=0.54$ for $flies(robin)$.  \ProbLog{} gives
the same query probabilities when the corresponding ground choices are
independent~\cite{deraedt2007,fierens2015}.  The calculations agree here
because each answer has one proof, support occurs in only one polarity, and
the answers share no premises.  Three distinctions developed below do not yet arise here: repeated uses of
one ground clause instance, overlap between alternative proofs, and opposing
evidence at a premise.

\paragraph{An uncertain exception.}
Now make the exception evidence uncertain:
\[
\begin{gathered}
bird(tweety),\qquad bird(robin),\qquad
0.9::\neg flies(tweety),\\
flies(x)\leftarrow bird(x)
  \quad[\mathrm{unless}\ \neg flies(x)],
\qquad Q=flies(x).
\end{gathered}
\]
In ground-instance activation semantics, the contrary clause is active in
$0.9$ of the worlds and the recursive exception policy blocks the default there.  In
the remaining $0.1$, the default is available.  In shared-threshold semantics,
ordinary support for the opposite literal is usable in a region of probability
$0.9$, and the contrary-gated default is usable only in the remaining
$0.1$.  The two
constructions therefore agree in this simple case and give
$R(flies(tweety))=(0.1,0.9,0,0)$.  Their agreement does not extend to an exception
condition that itself has opposing support, as
Section~\ref{sec:defaults} shows.  \GK{}
accepts $robin$ with signed confidence one but rejects the $tweety$ candidate with
signed confidence $0.1-0.9=-0.8$.  A stratified
\ProbLog{} encoding with an independent probabilistic contrary fact and a
negation-as-failure guard reproduces these two marginals.  Standard clingo and
DLV2 do not themselves assign a probability to the $0.9$ fact; probabilistic
logic programs and probabilistic ASP systems can do so, but their treatment of
multiple stable models or cycles requires an additional semantic choice
\cite{baral2009,cozman2020,totis2023}.

\paragraph{Opposition at a premise.}
Finally, consider the premise-conflict case:
\[
\begin{gathered}
0.5::bird(a),\qquad 0.5::\neg bird(a),\\
0.9::flies(x)\leftarrow bird(x),
\qquad Q=flies(a).
\end{gathered}
\]
A calculation that checks opposition only at the final query returns $0.45$:
it finds the positive proof of $flies(a)$ but does not inspect opposition to
its premise~\cite{tammet2021}.  Plain \ProbLog{} likewise treats $bird(a)$ and an
explicitly named negative atom as separate choices and returns $0.45$ for the
positive proof.  Adding an independent guard against the negative atom instead
returns $0.5\cdot0.5\cdot0.9=0.225$.  The policy used here instead resolves
the equal opposing support at $bird(a)$ before applying the rule: equally
strong opposing evidence at the premise leaves no usable support for either
polarity, so the rule does not fire.  \GK{} resolves
that opposition before propagating the premise.  Its report is
$R(flies(a))=(0,0,0,1)$: signed confidence is zero and $bird(a)$ is
identified as the
contested source.  Ambiguity-propagating
defeasible logics can impose a related blocking policy, whereas standard ASP
encodings require an explicit treatment of the inconsistent premise
\cite{governatori2004}.

The examples above have finite, function-free groundings.  In full
first-order default logic, deciding whether a default applies requires
checking that its exception condition has no admissible proof; this is
undecidable in general.  Most ASP systems and many implemented
default-logic systems therefore first construct a finite ground program
and then apply propositional or stable-model reasoning.  This approach
does not apply directly when function symbols generate an infinite
grounding.  \GK{} instead searches the original non-ground clauses,
delays exception checks until candidate proofs have been found, and
bounds both the main search and the recursive exception searches.  A
bounded failure to find an exception proof can therefore leave the
result incomplete.  s(CASP) also avoids global grounding, but uses a
goal-directed logic-programming execution model
\cite{arias2018,tammet2022}.

\subsection{Models, calculations, and reported quantities}

A \emph{retained proof} is an answer proof kept by the bounded search for
later report construction.  Its \emph{proof history} is the directed acyclic
graph (DAG) of recorded inference steps and parent links.  The retained proofs
for one answer substitution and one query polarity form its
\emph{retained proof set}.  The retained-proof calculation estimates the
probability that at least one proof in this set is available.

The paper defines two separate reference models.  Ground-instance activation
models the availability of uncertain clause instances.  Shared-threshold
semantics models the usable support at ground atoms.  \GK{} first constructs
a retained-proof result and attempts the dependency-aware calculation when it
detects a contested ground atom.  A completed dependency-aware result
replaces the retained-proof result; otherwise the retained-proof result is
returned directly or as a flagged fallback.
The selected calculation and its status are therefore part of the
interpretation of each answer row.  Different rows in one run may be produced
by different calculations.

For semantic interpretation, the author or analyst must specify whether an
input confidence is intended as a ground-instance activation probability or
as the parameter of a support contribution in shared-threshold semantics.
The standard \GK{} report does not enforce that choice.  Model-specific
analysis must use the corresponding reference
calculation or sampler.  Applying both models to one knowledge base is a
sensitivity comparison.  Values from the two models are never multiplied or
pooled.

\begin{center}
\scriptsize
\begin{tabular}{L{3.0cm}L{3.2cm}L{7.6cm}}
\toprule
Quantity & Where it belongs & Meaning \\
\midrule
Input confidence
& both models & activation probability in Model~1; parameter determining a
support contribution in Model~2 \\
Search-time proof score
& proof search & product carried by one search history; used for search,
retention, and exception checks, not as the final report semantics \\
Proof-availability probability
& Model~1 & probability that all activation events used by one retained proof
are present \\
Retained-proof value
& Model~1 calculation & union probability of the retained same-polarity proof
family, subject to replay and search coverage \\
Rule confidence
& Model~2 & parameter determining the support contribution of an applicable
directed rule \\
Pooled support values $a,b$
& Model~2 & positive and negative support after same-polarity aggregation and
before opposition resolution \\
$s^+,s^-,c,g$
& selected report & shared-threshold region probabilities for a completed
dependency-aware result; otherwise a proof-pool decomposition whose
interpretation depends on the calculation status; see
Section~\ref{sec:reportstatus} \\
Signed confidence $C=s^+-s^-$
& selected report & direction and magnitude derived from the selected
four-component report \\
\bottomrule
\end{tabular}
\end{center}

\begin{figure}[htbp]
\centering
\begin{tikzpicture}[
  mapnode/.style={font=\scriptsize, align=center, inner sep=2pt},
  maparrow/.style={-{Stealth[length=1.8mm]}, semithick, draw=gkgray}
]
  \node[mapnode] (search) at (0,0) {proof search};
  \node[mapnode] (hist) at (0,-.8) {retained proofs and\\proof histories};
  \node[mapnode] (events) at (-3.1,-1.8) {retained-proof\\calculation};
  \node[mapnode] (directed) at (3.1,-1.8) {directed dependency\\search};
  \node[mapnode] (proofresult) at (-3.1,-2.8) {retained-proof\\result};
  \node[mapnode] (threshold) at (3.1,-2.8) {shared-threshold\\result};
  \node[mapnode] (select) at (0,-3.8) {result selection};
  \node[mapnode] (report) at (0,-4.6) {report};
  \draw[maparrow] (search) -- (hist);
  \draw[maparrow] (hist) -- (events);
  \draw[maparrow] (hist) -- (directed);
  \draw[maparrow] (events) -- (proofresult);
  \draw[maparrow] (directed) -- (threshold);
  \draw[maparrow] (proofresult) -- (select);
  \draw[maparrow] (threshold) -- (select);
  \draw[maparrow] (select) -- (report);
\end{tikzpicture}
\caption{Calculation sequence used to assemble a \GK{} report.}
\label{fig:modelcalculationmap}
\end{figure}

\section{Language and proof search}
\label{sec:setting}

\subsection{Input language and clause confidences}

The rule notation of Section~\ref{sec:example} is expository; the underlying
reasoner accepts arbitrary first-order formulas and clausifies them before
search.  The quantitative result depends on the clauses produced by parsing
and clausification: separate occurrences of the same formula are treated as
separate inputs, an authored rule head can determine the direction of a rule,
and two classically equivalent source formulas can therefore receive
different results.  Confidence
and exception metadata attach to the individual resulting input clauses.  A confidence-bearing non-Horn
clause is written
\[
  p::L_1\lor\cdots\lor L_n.
\]
Any input clause may carry an input confidence $p\in(0,1]$.
If the input syntax permits a confidence on a compound formula, that notation
is syntactic shorthand.  The input reader first clausifies the formula into
$n$ clauses and assigns $p^{1/n}$ to each.  All later search,
provenance, and event identities refer to those clauses.  This split preserves
the product of all $n$ clause confidences, but a derivation may use only some
of the clauses.  This is an input convention; it makes no claim about
formula-level probabilities.

Rule-form input records its authored conclusion and may attach an
exception condition and a priority.  Section~\ref{sec:defaultrules}
defines the default notation, its Skolemization convention, and its
clausified representation.  Exception conditions are literals; a
compound source condition must first be represented by clauses defining
an auxiliary literal.  The orientation policy and the resulting
directed applications are defined in
Section~\ref{sec:premisesearch}.

Clause confidences are interpreted per ground instance.
Section~\ref{sec:twomodels} defines the resulting activation semantics, while
Section~\ref{sec:proofevents} defines how \GK{} identifies the ground
instances used by a retained proof.  Neither interpretation asserts that a
clause is true with posterior probability equal to its confidence.
Two identical uncertain statements entered as separate occurrences define
separate evidence sources and separate activation events.  Authors who intend
one source must enter one statement: entering a statement twice changes its
quantitative effect although it adds nothing logically.

Absence of support is not support for the negation.  \GK{} makes no
closed-world assumption and has no negation as failure: support for $\neg L$
has to come from a clause whose conclusion is $\neg L$.  A literal that no
derivation supports in either polarity is therefore distinct from one whose
explicit negation has support.  The specialized class-frequency interpretation used by the optional
paired reference-class construction is stated separately in
Appendix~\ref{sec:pairedexceptions}.

A query is either \emph{ground}, containing no variables, or \emph{open},
containing variables whose values are requested.

As in standard resolution question answering, an open query is negated and
extended with a special answer literal that records the substitution applied
to those variables; each recorded substitution $\sigma$ is one answer.
(Throughout, $\sigma$ names an answer substitution and $\theta$ a grounding
substitution inside a proof.)  An answer substitution may be ground or may
itself contain residual variables: with input $P(f(x))$ and query $P(y)$, the
recorded answer is $y=f(x)$.

The quantitative semantics below is defined for ground query instances.
\GK{} invokes the dependency-aware evaluator only for ground answer
instances; a non-ground answer instance currently retains the retained-proof
result, without a dedicated flag (Section~\ref{sec:limitations}).

The prover supports equality,
function terms, arbitrary clauses, and a shared-memory knowledge
base~\cite{tammet2019}.

\subsection{Default rules and priorities}
\label{sec:defaultrules}

Reiter default logic extends first-order logic with rules of the form
\[
  \frac{\alpha:\beta_1,\ldots,\beta_n}{\gamma}.
\]
Such a rule may derive $\gamma$ when $\alpha$ is derivable and no
negated justification $\neg\beta_i$ is derivable in the candidate
extension.  The notation used in this paper,
\[
  H \leftarrow B\ [\text{unless }E],
\]
is the one-exception case: $B$ is the prerequisite, $H$ is the
conclusion, and derivability of $E$ blocks the rule application.  In
Reiter's notation the corresponding justification is $\neg E$.  A
normal default has $E=\neg H$.

Quantified defaults require a choice of quantifier interpretation.
Following Reiter's Skolemized reading and the convention used in the
earlier \GK{} implementation, quantified defaults are Skolemized before
clausification.  For example,
\[
  \frac{:\exists x\,P(x)}{\exists x\,P(x)}
\]
is represented as
\[
  \frac{:P(c)}{P(c)},
\]
where $c$ is a fresh Skolem constant
\cite{reiter1980,tammet2022}.

A normal default ``birds normally fly'' is written in the primary notation as
\[
  flies(x)\leftarrow bird(x)
  \quad[\mathrm{unless}\ \neg flies(x);\ \mathrm{priority}\ \pi].
\]
The literal $E$ following \emph{unless} is the \emph{exception condition};
here it is $\neg flies(x)$.  The optional value $\pi$ is the priority of
the default.  Positive integer priorities are ordered numerically, with a
larger integer denoting higher priority.  During an exception check at
priority $\pi$, support derived through a default of strictly lower
priority is excluded.  An omitted priority is parsed as zero, meaning
unranked; priority zero is incomparable with the positive ranks.
Priorities may also be classes ordered by a taxonomy, in which case a
more specific class defeats a more general one
\cite{tammet2022}.  Section~\ref{sec:localrules} defines how priorities
also affect opposing defaults, and
Appendix~\ref{sec:rankclasses} gives the detailed restriction used
inside exception checks.

\GK{} records the exception condition and priority by augmenting the
ordinary clause with a \emph{blocker literal}:
\[
  \neg bird(x)\lor flies(x)\lor
  block(\pi,\neg flies(x)).
\]
The blocker literal is carried through candidate proofs.  Its
substitutions and the nested searches that it initiates are described
in Section~\ref{sec:proofsearch}.

\subsection{Proof search, exception checking, and proof histories}
\label{sec:proofsearch}
\label{sec:exceptionchecking}% at subsection level: \paragraph is unnumbered

\GK{} is query-directed and does not globally ground the knowledge base.
The main given-clause search operates on non-ground clauses.  Answer
literals record substitutions for open queries, while blocker literals
carry the instantiated exception conditions and priorities of default
applications through the proof.  Search continues after the first
answer so that several answer substitutions and several proofs of each
polarity can be retained.

The proof search can be summarized in three stages.
\begin{enumerate}
\item Parse and clausify the knowledge base and the negated query.  Add answer
literals to open queries and store confidence and exception metadata
on the input clauses.
\item Run a bounded given-clause search using resolution, paramodulation,
factoring, simplification, and arithmetic instantiation.  Retain
histories for answer clauses rather than stopping at the first proof.
\item For each answer substitution, collect proofs of the query literal
and its opposite-polarity literal.  Check blocker literals in those
proofs by nested searches for their instantiated exception conditions.
\end{enumerate}

Figure~\ref{fig:proofsearch} shows the main answer search and the nested
exception searches.  For each answer substitution, the two solid
branches collect proofs of the query and its explicit opposite.  Dashed
branches check the instantiated exception conditions carried by those
proofs.

\begin{figure}[htbp]
\centering
\begin{tikzpicture}[
  search branch/.style={-{Stealth[length=1.5mm,width=1.15mm]},
    line width=.55pt, draw=gkgray},
  exception branch/.style={search branch, dashed, draw=gkgold},
  edge label/.style={font=\scriptsize, inner sep=1pt, text=black},
  candidate label/.style={font=\footnotesize\bfseries, text=black}
]
  % One compact tree is repeated for each answer substitution.  Solid edges
  % are resolution searches; dashed edges are bounded blocker searches.
  \newcommand{\candidateproofsearch}[2]{%
    \begin{scope}[shift={(#1,0)}]
      \node[candidate label] at (0,.55) {answer candidate $\sigma_{#2}$};

      \draw[search branch] (0,0) --
        node[edge label, above left] {$Q\sigma_{#2}$} (-1.25,-.95);
      \draw[search branch] (0,0) --
        node[edge label, above right] {$\neg Q\sigma_{#2}$} (1.25,-.95);

      \draw[exception branch] (-1.25,-.95) --
        node[edge label, above left] {exception check} (-1.85,-1.85);
      \draw[exception branch] (-1.25,-.95) -- (-.65,-1.85);
      \draw[exception branch] (1.25,-.95) -- (.65,-1.85);
      \draw[exception branch] (1.25,-.95) -- (1.85,-1.85);

      \draw[search branch] (-1.85,-1.85) -- (-2.25,-2.55);
      \draw[search branch] (-1.85,-1.85) -- (-1.42,-2.55);
      \draw[search branch] (1.85,-1.85) -- (1.42,-2.55);
      \draw[search branch] (1.85,-1.85) -- (2.25,-2.55);

      \draw[exception branch] (-2.25,-2.55) --
        node[edge label, above left, align=right]
          {nested\\exception check} (-2.52,-3.20);
      \draw[exception branch] (-2.25,-2.55) -- (-1.98,-3.20);
      \draw[exception branch] (1.42,-2.55) -- (1.17,-3.20);
      \draw[exception branch] (1.42,-2.55) -- (1.67,-3.20);
    \end{scope}%
  }

  \node[font=\footnotesize] at (0,1.25)
    {answer-literal search for $Q(x)$, time limit $T$};
  \candidateproofsearch{-3.35}{1}
  \node[font=\footnotesize] at (0,.55) {$\cdots$};
  \candidateproofsearch{3.35}{n}

  \node[font=\scriptsize, text=gkgray] at (0,-3.55)
    {solid: proof search \quad dashed: exception search};
\end{tikzpicture}
\caption{Proof search structure.  Solid branches collect proofs of
$Q\sigma_i$ and $\neg Q\sigma_i$.  The main answer search has time limit $T$.
A dashed branch checks the exception
condition named by a blocker in that
proof, with a smaller limit $T_1$; a nested exception check runs with
$T_2<T_1$.  Candidate and proof searches continue after the first
substitution and first proof.}
\label{fig:proofsearch}
\end{figure}
\FloatBarrier

\paragraph{Nested exception checks.}
Exception checking is delayed until candidate answer proofs have been
found.  A blocker search is a bounded search for the exception condition
stored in a blocker literal.  Because the blocker receives the
substitutions applied to the default clause, the search concerns that
particular ground or partially ground rule application.  An exception
proof may contain further blocker literals, so checking is recursive.
Each recursive level receives a smaller time budget.  The priority
restriction of Section~\ref{sec:defaultrules} remains in force
throughout the nested search.

Each returned proof $d$ of an exception condition is considered
separately.  Let $w(d)$ be its search-time proof score.  The proof enters
recursive checking only if
\[
  w(d)\ge\lambda,
\]
where $\lambda$ is the configurable exception limit; its default value
is $0.5$.  Proofs are not paired with proofs of the opposite literal for
this operational test.  If several exception proofs exceed the limit,
each is checked recursively.  One exception proof that survives its own
nested checks disables the candidate rule application.
Because this is a threshold test, a small change in the proof scores near
the limit can change whether the exception is used and may cause an
abrupt change in the reported confidence.  The positive limit prevents
very weak exception proofs from initiating further searches and disabling
candidates in compatibility mode.  Setting $\lambda=0$ permits every
returned exception proof to enter recursive checking.  The limit is an
operational search filter and is not part of either reference semantics.

This recursive blocker procedure is used during proof search and in
compatibility-mode answer selection.  Dependency-aware reporting uses a
different calculation: positive and negative support at the exception
atom are collected by the report-time traversal and resolved before the
exception condition is used.  Shared predecessors of those derivations
are retained rather than treated as independent evidence.

Stages 2 and 3 use the confidence-carrying resolution implementation
described in \cite{tammet2021}.  The logical inference rules produce the
same resolvents as ordinary resolution, but every history also carries a
search-time proof score and the input-clause dependencies of the derivation.
Binary resolution and paramodulation multiply the two parent proof scores;
factoring preserves its parent's score.  The dependency sets of the parent
derivations are combined by set union.
Proof-score- and dependency-aware
subsumption prevents a logically more general clause from deleting a stronger
or independently supported derivation merely because ordinary subsumption
would permit it.  A relevance filter discards contradiction proofs that do
not depend on the query clause, limiting explosion from unrelated
inconsistencies.

These modifications affect search and proof retention.  The search-time
proof score is the product of the input confidences recorded along one
derivation; it ranks and retains derivations and tests exception proofs.  It
is not the final quantitative semantics of this paper: it neither establishes
that all proofs were found nor gives a query probability.  The retained-proof calculation later reconstructs
ground activation events and takes a union over retained proofs; the
dependency-aware evaluator instead calculates shared-threshold regions.
A multiplied history value counts uncertain uses
along the recorded derivation.  \emph{Proof replay} is the deterministic
re-execution of the recorded inference steps of a proof history to reconstruct
the substitutions and ground clause instances used in the proof.  It
identifies repeated uses of the
same ground instance, and set-level proof combination handles overlap among
several retained proofs.  The dependency-aware traversal then addresses
opposition at intermediate atoms.  Because proof search is bounded, its
strategy, memory limit, and time limit may cause it to omit proofs or
terminate an exception check.

Proof search produces candidate answers and their proof histories.
Report construction then has two paths.  \GK{} first replays the retained
histories, reconstructs the uncertain ground clause instances used by
each proof, and combines same-polarity proofs while preserving shared
instances.  If the query or a premise on a retained proof path may have
support in both polarities, \GK{} starts a separate report-time traversal
of the clausified knowledge base.  This traversal collects the relevant
directed positive and negative derivations and computes the
dependency-aware report.  A completed dependency-aware result replaces
the retained-proof result.  Otherwise \GK{} returns the retained-proof
result directly or as a flagged fallback.  Section~\ref{sec:implementation}
defines the calculations and their limits.

\FloatBarrier

\section{Related work}
\label{sec:related}

\subsection{Implemented systems}

The implemented systems compared here differ in which objects carry numerical values, how
nonmonotonicity is represented, what form of answer is returned, and whether a
finite grounding is required.  Throughout, \emph{native} means the unmodified
input or output supported directly by a system.  Tables~\ref{tab:systemobjects}
and~\ref{tab:systemstrategies} state these distinctions before particular
values are compared.  Numerical agreement is expected only when the translation is exact and the
input confidences have the same interpretation in both systems.  Every external
result is tied to a concrete system version, input, and query.  The system comparisons
include PASTA~\cite{azzolini2023}, TweetyProject~\cite{thimm2017}, and
I-DLV~\cite{calimeri2017}.  The compared TweetyProject modules implement
Reiter default logic (RDL) and defeasible logic programming (DeLP).  DLV2
2.1.1 is abbreviated as DLV in the tables.

\begin{table}[ht]
\centering
\scriptsize
\caption{Numerical quantities and native outputs returned by the comparison systems.}
\label{tab:systemobjects}
\begin{tabular}{L{2.5cm}L{6.3cm}L{5.2cm}}
\toprule
System & Numerical quantity, if any & Native output \\
\midrule
\GK{} & retained-proof value or shared-threshold region probabilities,
depending on the selected calculation
& supporting and defeating proofs, four-component report, and status flags \\
\ProbLog{} 2.2.10 & probability of probabilistic choices and query success
& query probability \\
PASTA 1.0.1 & lower and upper query probabilities under credal stable-model semantics
& lower/upper query probability \\
TweetyProject 1.31 RDL/DeLP & none in the compared modules
& extensions or warranted/rejected/undecided status \\
clingo 5.6.2; DLV 2.1.1 & none in the base systems
& stable models or cautious consequences \\
I-DLV 1.1.6 & none in the base system
& query answers or ground program for a solver \\
s(CASP) 1.1.4 & none in the base system
& partial stable model and justification \\
plingo 1.1.0 & weights or probabilities over stable models
& model or query probabilities \\
smProbLog 2.1.0.42 & probabilistic choices with stable-model completion
& marginal probabilities, including inconsistent mass \\
\bottomrule
\end{tabular}
\end{table}

\begin{table}[ht]
\centering
\scriptsize
\caption{Nonmonotonic and operational distinctions.  The entries summarize
the modes used or discussed here, not every capability of each system.}
\label{tab:systemstrategies}
\begin{tabular}{L{2.8cm}L{5.7cm}L{5.4cm}}
\toprule
System & Nonmonotonic mechanism & Evaluation strategy \\
\midrule
\GK{} & recursively checked exception conditions with explicit priorities
& query-directed first-order resolution plus bounded report traversal \\
\ProbLog{} & Prolog negation on its supported fragment; does not implement
\GK{}'s recursive exception checking & relevant grounding and formula compilation \\
PASTA & stable models and credal lower/upper semantics
& ASP grounding and model analysis \\
TweetyProject RDL/DeLP & default extensions or dialectical defeat
& finite extension construction or dialectical analysis, depending on the selected module \\
clingo/DLV & stable models and negation as failure
& full grounding before model search on the tested inputs \\
I-DLV & stratified query rewriting or grounding for a solver
& Magic Sets in query mode; otherwise ASP grounding \\
s(CASP) & goal-directed stable-model reasoning
& top-down, non-ground evaluation \\
plingo & weighted or probabilistic stable-model semantics
& clingo-backed model enumeration and optimization \\
smProbLog & total choices with stable-model completion
& ASP grounding followed by ProbLog-style knowledge compilation \\
\bottomrule
\end{tabular}
\end{table}

\subsection{Related approaches}

Reiter's default logic provides the classical default-rule setting, and
prioritized variants add explicit orders or
specificity~\cite{reiter1980,brewka1994}.  Earlier \GK{} versions combined
proofs from confidence-annotated clauses with recursively checked default
exceptions and priorities, combined opposition only at the answer, and
estimated proof overlap from scalar proof
information~\cite{tammet2021,tammet2022}.  The calculations defined here
retain ground-instance overlap and resolve opposition at intermediate atoms.
Defeasible logic distinguishes ambiguity-blocking from ambiguity-propagating
proof theories~\cite{governatori2004}.  Possibilistic approaches attach
necessity degrees to arguments or uncertain defaults
\cite{alsinet2008,dupinsaintcyr2008}, while probabilistic answer-set and rule
systems construct distributions over choices and models
\cite{baral2009,cozman2020,sneyers2013}.  \GK{} applies quantitative
operations to retained first-order proofs, including their shared premises
and intermediate opposition.

Structured argumentation distinguishes evidence for a contrary conclusion,
which constitutes a rebutting attack, from an undercutting attack on a
particular inference~\cite{pollock1987,prakken2018}.  \GK{} uses that distinction to keep evidence
for an exception condition separate from ordinary negative support for a
conclusion.  Quantitative bipolar argumentation provides a related
support-and-attack perspective~\cite{rago2016,potyka2020}; Section~10 returns
to the numerical quantities defined by these systems and to argument strength.

Under distribution semantics, systems such as \ProbLog{} sample or compile
independent ground choices and compute query success
\cite{sato1995,deraedt2007,fierens2015}; the retained-proof calculation uses
the same ground-instance interpretation on a retained, one-sided proof set.
Stable-model probabilistic systems combine quantitative choices with one or
more stable models~\cite{baral2009,cozman2020,totis2023}; the
shared-threshold calculation instead combines positive and negative support
at an atom before downstream use.  Section~10 gives the detailed
interpretations.

\GK{} reconstructs uncertainty events from histories produced by a
query-directed resolution prover and does not require a finite global
grounding.  Lifted probabilistic theorem proving also uses first-order
structure, but calculates weighted-model probabilities and does not
recursively check prioritized default justifications~\cite{gogate2011}.
s(CASP) is non-ground and goal-directed but does not provide the quantitative
semantics defined here~\cite{arias2018}.  Section~\ref{sec:furtherrelated}
returns to detailed model-level comparisons after the semantics and
evaluation.

\section{Reference semantics}
\label{sec:twomodels}

The paper uses two reference constructions.  Model~1 asks which uncertain
ground clauses are present and whether the resulting clause set proves the
query.  Model~2 keeps all clauses and asks, at each ground atom, which
positive or negative support is usable.  The first model combines complete
proofs; the second resolves opposition before support is propagated.
Formally, ground-instance activation (Model~1) defines uncertainty over
ground clause-instance availability, and shared thresholds (Model~2) define
support regions over directed ground applications.  Both models are
independent of proof search; Section~6 describes the bounded calculations
that \GK{} applies after search.

\subsection{Ground-instance activation}
\label{sec:groundactivation}

For an uncertain input clause $C$ and a ground substitution $\theta$ for its
variables, define the activation event
\[
  e_{C,\theta}=\{\text{the ground instance }C\theta\text{ is active}\}.
\]
Its probability is the input confidence on $C$.  Distinct input-clause occurrences
and distinct resulting ground instances define independent events; instances
with confidence one are always active.  If two substitutions produce the same
syntactic ground instance of the same identified clause occurrence, they name
one activation event.  A derivation is available exactly when every
activation event of the ground clause instances it uses holds.  Reading a
clause confidence as the activation probability of each ground instance
matches the ground-choice convention of distribution semantics
\cite{sato1995,deraedt2007}.

A ground-instance activation world is constructed and evaluated in three steps.
\begin{enumerate}
\item \emph{Ground.}  Form the Herbrand universe of the clausified knowledge
base and query, and replace clause variables by its ground terms.  Keep each
resulting ground instance of an identified clause occurrence once.  In a
function-free input with constant set $\mathcal C$, a clause with $v$
variables has at most $|\mathcal C|^v$ such instances.  A function symbol can
make the Herbrand universe and the grounding infinite.  Grounding is part of
the model definition; Section~6 describes the bounded calculation used in
reports.
\item \emph{Delete.}  Keep each ground instance of a clause with input confidence $p$
with probability $p$ and delete it otherwise, independently for every instance.
Instances of clauses without an input confidence are always kept.  What
remains is an ordinary clause set without confidence metadata.
\item \emph{Decide.}  Write $K_\omega$ for the active clause set of the
world $\omega$.  Let $K\vdash_q Q$ mean that binary resolution, factoring,
and equality paramodulation derive the empty clause from $K$ together with
the clausified negation of the ground query $Q$, with a query clause in the
ancestry.  The subscript $q$ marks this query-ancestry requirement; it
discards contradictions derived wholly from $K$, so an inconsistent active
clause set does not thereby prove every query.  Let $K\vdash^\Delta_q Q$
denote the default-aware extension of this relation, in which
an active default application is admitted only when its ordinary body is
derivable and no admissible proof of its exception condition exists under
the stated priority and cycle policies of Section~\ref{sec:defaults}; the
superscript $\Delta$ marks the use of defaults.  The
operational exception limit of Section~\ref{sec:exceptionchecking} does not restrict this reference
relation.  Without defaults,
$\vdash^\Delta_q$ and $\vdash_q$ coincide.  Determine whether
$K_\omega\vdash^\Delta_q Q$ and, separately in the same world, whether
$K_\omega\vdash^\Delta_q\neg Q$.
On an acyclic default-dependency graph, admissibility is defined inductively:
strict derivations are admissible; a default derivation is admissible when its
body derivations are admissible and no admissible proof of its exception
condition permitted by its priority restriction exists.  Exception-proof
admissibility is evaluated inductively at earlier nodes in a topological
ordering of the acyclic dependency graph.  Cyclic cases are assigned
by the query-relative policies of Appendix~\ref{sec:defaultextensions} and lie
outside this acyclic inductive definition.  Thus
$\vdash^\Delta_q$ is the idealized default-aware reference relation.
\end{enumerate}
Each world is thus classified into one of four cells: the query only, its
negation only, both, or neither.  The model's primary quantities are the ground-instance activation
probabilities
$P_{\mathrm{act}}(Q)$, $P_{\mathrm{act}}(\neg Q)$, and their difference
$P_{\mathrm{act}}(Q)-P_{\mathrm{act}}(\neg Q)$.
The probabilities of the ``both'' and ``neither'' cells are also well defined in
the model, but they depend on the joint dependence between positive and
negative derivability, and the retained-proof calculation of
Section~\ref{sec:proofevents} does not reconstruct that dependence: when that
calculation supplies a four-component report, its conflict and ignorance
fields are a decomposition of the two proof-pool marginals rather
than the activation-world probabilities of ``both'' and ``neither''.  Explicitly,
if $P_+$ and $P_-$ are the positive and negative retained-proof union
probabilities, the proof-derived tuple is
\[
 \bigl(\max(P_+-P_-,0),\ \max(P_--P_+,0),\ \min(P_+,P_-),\
       1-\max(P_+,P_-)\bigr).
\]
This decomposition preserves both proof-pool
marginals and their signed difference, sums to one, and places the smaller
marginal wholly inside the larger.  It is a compact maximal-overlap
decomposition of the two proof-pool marginals when the joint dependence of the two proof
families is unavailable.  Its last two components are not estimates of the
activation-world ``both'' and ``neither'' cells.

\subsection{Shared thresholds}
\label{sec:sharedthreshold}

No clause is deleted here, and no proof search is performed within a
shared-threshold world: once the program is grounded and oriented, each world
is evaluated deterministically.  A \emph{support contribution} is the value
supplied to a conclusion literal by one directed application when its body has
the required usable states and its exception condition does not disable the
application.  For a ground atom $A$, the \emph{positive support pool before
opposition} aggregates contributions concluding $A$, and the \emph{negative
support pool before opposition} aggregates contributions concluding $\neg A$.
To \emph{pool} same-polarity
contributions $r_i$ means to combine them as $1-\prod_i(1-r_i)$.
The pooled values $a$ and $b$ are support before opposition resolution.
The reported components $s^+$ and $s^-$ contain only support surviving
opposition; conflict is recorded in $c$.

Once the two pooled values are $a$ and $b$, $A$ is \emph{positively usable}
when its positive support is active and its negative support is not; it is
\emph{negatively usable} when the converse holds.  In the conflict and
ignorance regions, neither polarity is usable as a premise of a downstream
application.  A shared-threshold world is defined in three steps.
\begin{enumerate}
\item \emph{Ground and orient.}  Ground as above, and apply the directed
reading to each ground clause
(Section~\ref{sec:premisesearch}): each eligible conclusion literal, with the
remaining literals as its conditions, is one directed application, and
rule-form input normally supplies one authored conclusion.
Section~\ref{sec:premisesearch} states how eligible conclusions are
determined and why the direction is treated as part of the input.
\item \emph{Draw.}  Give every ground atom $A$ a shared uniform threshold
$U_A\sim\mathrm{Uniform}(0,1)$, independently across atoms.  Ordinary
opposition and strict-priority override compare both support pools with
$U_A$.  Each pool is tested only after same-polarity pooling, not once per
default application.  Sharing $U_A$ gives maximal overlap between ordinary
opposing pools.  Contrary-gated interactions add the independent auxiliary
threshold defined in Section~\ref{sec:localrules}.  These dependence choices
are part of the semantics; they are not implied by the input confidences.
\item \emph{Evaluate.}  For every directed application
$B_1\land\cdots\land B_k\Rightarrow H$, add an edge $B_i\to H$ for each
condition literal and an edge to $H$ from its literal exception condition.
Clause bodies are conjunctions of literals.  Disjunctive source formulas are
first clausified into directed applications; compound exception conditions
must be represented by an auxiliary literal as stated in Section~3.1.
In an acyclic graph, any topological order of the resulting dependency graph
places all dependencies before the atoms that use them; unrelated atoms may
occur in either order.  The core semantics and correspondence claim are
acyclic.  The explicit cycle extensions and unsupported cycle classes are
listed in Appendix~\ref{sec:defaultextensions}.  Take the atoms in such an
order.  At an atom, each directed application whose condition atoms are
usable in the required polarity in this world produces its specified support
contribution
unless it is a blocked default.  A default is blocked exactly when its
exception condition is positively usable after local opposition and priority
restrictions have been evaluated.  Negative-usable, conflicted, and ignorant
exception states do not block it.  The contributions
concluding $A$ are pooled into a single value $a$ and those concluding
$\neg A$ into $b$.  In ordinary opposition the shared threshold $U_A$ then
gives one of four regions: $A$ is positively usable if $b<U_A\le a$ and
negatively usable if $a<U_A\le b$; it falls in the conflict region if
$U_A\le\min(a,b)$ and in the ignorance region if $U_A>\max(a,b)$.
Contrary-gated and priority interactions use the local rules of
Section~\ref{sec:localrules}.
\end{enumerate}
The four reported components are the probabilities of these four regions at the
query atom.  A shared-threshold world is evaluated through directed
applications rather than by running a theorem prover.  An atom with strong
evidence on both sides falls in the \emph{conflict} region.  Input
confidence has a different role in this model.  Whenever a rule's conditions
are usable, the rule contributes its confidence to the conclusion's pool, and
the conclusion's threshold decides the outcome; the rule is not independently
activated with that probability.
All directed ground applications are evaluated in every world, in dependency
order, so the evaluation is a function of the drawn thresholds.
Section~\ref{sec:sharedthreshold} develops the arithmetic.

\subsubsection{Same-polarity pooling}

This section develops Model~2 of Section~\ref{sec:twomodels}.  Its third step
pooled the support contributions at an atom and, in ordinary opposition,
compared the two pools against one shared threshold; both operations are
defined here, and the exception conditions
that decide which contributions are present follow in
Section~\ref{sec:defaults}.

Only support from positively or negatively usable regions is propagated.  Conflict and
ignorance are reported but are not propagated as premise states.  They are
derived when the positive and negative support pools are compared; they
are retained in the answer report to distinguish cancellation from missing
evidence.  Conflict shows that zero signed confidence is caused by opposing
support rather than by an absence of support, and it supplies the optional
conflict-sensitivity interval
(Appendix~\ref{sec:defaultextensions}).  They are not themselves propagated,
so the four components are not four equally active propagation states.

Which contributions are present at an atom is itself uncertain, because a
contribution comes from a rule whose body atoms may or may not be usable.  The
\emph{dependency set} of a ground atom $A$ contains the predecessor ground atoms
whose outcomes determine which support contributions reach $A$.  A
\emph{configuration} $\kappa$ assigns one local state to each atom of that
dependency set;
Section~\ref{sec:condfirst} shows why configurations must be evaluated
separately rather than averaged first.  For a finite acyclic relevant graph,
let $\Omega=[0,1]^m$ contain its distinct threshold variables, with product
Lebesgue measure $\mu$.  Dependency order defines each predecessor state as a
function $X_B:\Omega\to\{+,-,c,g\}$.  For a configuration $\kappa$ on dependency
set $\operatorname{Pred}(A)$, define
\[
  \Omega_\kappa=\{\omega\in\Omega:X_B(\omega)=\kappa(B)
                  \text{ for every }B\in\operatorname{Pred}(A)\},
  \qquad P(\kappa)=\mu(\Omega_\kappa).
\]
The sets $\Omega_\kappa$ are disjoint and exhaustive.  This definition reuses one
threshold variable whenever paths share a predecessor.  An exact calculation
partitions $\Omega$ at the finite threshold boundaries introduced in
dependency order and sums the cells in each $\Omega_\kappa$.  Marginal state
probabilities may be multiplied only when their defining threshold sets are
disjoint.  For a ground atom $A$ and a
configuration $\kappa$, let $r_i^+$ be the value of each positive support
contribution present in $\kappa$ and pool them as
\begin{equation}
 a_\kappa=1-\prod_i(1-r_i^+).
 \label{eq:propool}
\end{equation}
The negative support contributions are pooled in the same way,
$b_\kappa=1-\prod_j(1-r_j^-)$.  The configuration is \emph{contested} when
$a_\kappa>0$ and $b_\kappa>0$.  Noisy-or here is the defined same-polarity aggregation
operator.  Noisy-or has the usual interpretation in which distinct causes or
sources fail independently.  For uncalibrated scores or dependent sources,
noisy-or remains a chosen aggregation rule rather than a derived probability
law.  Other pooling rules express different assumptions about dependence
between the underlying sources or causes;
using one formula without specifying those assumptions generally changes the
meaning of the result~\cite{genest1986,klement2000}.
A \emph{shared predecessor} of two dependency paths is a ground atom
occurring on both paths; its sampled state is reused rather than sampled
independently.  A \emph{region} is a subset of the threshold sample space,
its \emph{probability} is the measure assigned to that subset, a \emph{report
component} is one of $(s^+,s^-,c,g)$, and a sampler's \emph{sample frequency}
is a Monte Carlo estimate of that component.
This same-polarity formula is also the accumulation rule for MYCIN's original
measures of belief and disbelief~\cite{shortliffe1975}, and it is the support
and attack aggregator in DF-QuAD~\cite{rago2016}.  The systems differ in what
they do after aggregation and in what the resulting number denotes.

\subsubsection{Shared-threshold opposition}
\label{sec:sharedlevel}

For every ground atom $A$, draw the shared threshold
$U_A\sim\mathrm{Uniform}(0,1)$, independently across atoms.  Ordinary
opposition and strict-priority override use this shared threshold for both
support pools.  A pooled support value $x$ is active exactly
when $U_A\le x$.  Using the same $U_A$ for both polarities makes the two pass
events nested and gives them the largest overlap compatible with their
marginal support.  Within configuration $\kappa$:
\[
\begin{array}{rcl}
A\text{ is positively usable} &\Longleftrightarrow& b_\kappa<U_A\le a_\kappa,\\
A\text{ is negatively usable} &\Longleftrightarrow& a_\kappa<U_A\le b_\kappa,\\
\kappa\text{ is in the conflict region} &\Longleftrightarrow& U_A\le\min(a_\kappa,b_\kappa),\\
\kappa\text{ is in the ignorance region} &\Longleftrightarrow& U_A>\max(a_\kappa,b_\kappa).
\end{array}
\]
Thus the four components for one ordinary configuration are
\begin{equation}
\begin{split}
s^+&=\max(0,a-b),\qquad s^-=\max(0,b-a),\\
c&=\min(a,b),\qquad g=1-\max(a,b).
\end{split}
\label{eq:masses}
\end{equation}

Figure~\ref{fig:sidepooling} connects the two operations: noisy-or first pools
contributions within each polarity, and only then are the two pooled values
compared against the same threshold.  It focuses on the defined operator;
alternative pooling families make different assumptions about dependence
between sources or causes, as noted above.

\begin{figure}[htbp]
\centering
\begin{tikzpicture}[
  side box/.style={gkbox, font=\scriptsize, text width=2.55cm,
    inner sep=3pt, minimum height=1.0cm},
  positive box/.style={side box, fill=gkblue!10, draw=gkblue},
  negative box/.style={side box, fill=gkred!9, draw=gkred}
]
  \node[positive box, text width=3.0cm] (pin) at (-5.1,2.35)
    {positive contributions\\$r_1^+,r_2^+,\ldots$};
  \node[positive box] (apool) at (-1.7,2.35)
    {positive pool\\$a=1-\prod_i(1-r_i^+)$};
  \node[negative box] (bpool) at (1.7,2.35)
    {negative pool\\$b=1-\prod_j(1-r_j^-)$};
  \node[negative box, text width=3.0cm] (nin) at (5.1,2.35)
    {negative contributions\\$r_1^-,r_2^-,\ldots$};
  \node[gkprocess, font=\footnotesize, text width=3.4cm,
    inner sep=3pt] (threshold) at (0,0.25)
    {opposition resolution\\compare $a,b$ with\\shared $U_A$};

  \draw[gkarrow, draw=gkblue] (pin) -- (apool);
  \draw[gkarrow, draw=gkred] (nin) -- (bpool);
  \draw[gkarrow, draw=gkblue] (apool) -- (threshold);
  \draw[gkarrow, draw=gkred] (bpool) -- (threshold);

  \node[font=\scriptsize\bfseries, text=gkgray] at (-5.1,-1.5) {$a\ge b$};
  \filldraw[fill=gkpurple!22, draw=gkpurple] (-4.35,-1.95) rectangle (-1.0,-1.05);
  \filldraw[fill=gkblue!22, draw=gkblue] (-1.0,-1.95) rectangle (1.55,-1.05);
  \filldraw[fill=gkgray!16, draw=gkgray] (1.55,-1.95) rectangle (4.35,-1.05);
  \node[font=\scriptsize, align=center] at (-2.68,-1.5) {conflict\\$c=b$};
  \node[font=\scriptsize, align=center] at (0.28,-1.5) {positive support\\$s^+=a-b$};
  \node[font=\scriptsize, align=center] at (2.95,-1.5) {ignorance\\$g=1-a$};
  \node[font=\scriptsize] at (-4.35,-2.18) {$0$};
  \node[font=\scriptsize] at (-1.0,-2.18) {$b$};
  \node[font=\scriptsize] at (1.55,-2.18) {$a$};
  \node[font=\scriptsize] at (4.35,-2.18) {$1$};

  \node[font=\scriptsize\bfseries, text=gkgray] at (-5.1,-3.4) {$b>a$};
  \filldraw[fill=gkpurple!22, draw=gkpurple] (-4.35,-3.85) rectangle (-1.55,-2.95);
  \filldraw[fill=gkred!20, draw=gkred] (-1.55,-3.85) rectangle (1.55,-2.95);
  \filldraw[fill=gkgray!16, draw=gkgray] (1.55,-3.85) rectangle (4.35,-2.95);
  \node[font=\scriptsize, align=center] at (-2.95,-3.4) {conflict\\$c=a$};
  \node[font=\scriptsize, align=center] at (0,-3.4) {negative support\\$s^-=b-a$};
  \node[font=\scriptsize, align=center] at (2.95,-3.4) {ignorance\\$g=1-b$};
  \node[font=\scriptsize] at (-4.35,-4.08) {$0$};
  \node[font=\scriptsize] at (-1.55,-4.08) {$a$};
  \node[font=\scriptsize] at (1.55,-4.08) {$b$};
  \node[font=\scriptsize] at (4.35,-4.08) {$1$};
\end{tikzpicture}
\caption{Same-polarity pooling followed by shared-threshold opposition
resolution.  Here $U_A\sim\mathrm{Uniform}(0,1)$ is the single threshold shared
by the positive and negative support pools for ground atom $A$.  The
interval diagrams show the two possible orderings of the pooled values.  They
make visible why the four-component report distinguishes conflict from
ignorance while signed confidence keeps only direction.}
\label{fig:sidepooling}
\end{figure}

For any query literal $L$, define its \emph{signed confidence} by
\begin{equation}
 C(L)=s^+(L)-s^-(L).
 \label{eq:confidence}
\end{equation}
The sign gives the direction of the quantitative assessment when nonzero;
zero has no quantitative direction.  In dependency-aware reporting this
direction can determine accepted-versus-rejected list placement.
Compatibility mode instead uses the recursive exception check and can
therefore place a candidate differently from the report's direction
(Section~\ref{sec:reporting}).  When direction is already conveyed by list
placement, the displayed magnitude is $|C(L)|$.  The four-component report retains the positive and
negative support from which this confidence is computed, together with
conflict and ignorance.

For example, $a=0.7$ and $b=0.4$ give
$(s^+,s^-,c,g)=(0.3,0,0.4,0.3)$ and signed confidence $C=0.3$.  Reporting
signed confidence alone would omit the distinction between the $0.4$ conflict region
and the $0.3$ ignorance region.

\begin{proposition}[Partition]
The four quantities in Equation~\eqref{eq:masses} are non-negative, sum to
one, and at most one of $s^+$ and $s^-$ is non-zero.
\end{proposition}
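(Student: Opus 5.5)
The plan is to use only the facts that the pooled values satisfy $a,b\in[0,1]$ and then split into the two orderings shown in Figure~\ref{fig:sidepooling}. First I would check the range. Each support contribution $r_i$ lies in $[0,1]$, since it is an input confidence or a product of such values. So every factor $1-r_i$ in Equation~\eqref{eq:propool} lies in $[0,1]$, hence so does their product, and therefore $a_\kappa=1-\prod_i(1-r_i^+)\in[0,1]$. The empty product gives $a=0$. The same argument gives $b\in[0,1]$.

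Next I would treat the case $a\ge b$. Here $\max(a,b)=a$ and $\min(a,b)=b$, so Equation~\eqref{eq:masses} reads $s^+=a-b$, $s^-=0$, $c=b$ and $g=1-a$. Each of these is non-negative: $a-b\ge0$ by the case assumption, $b\ge0$ and $1-a\ge0$ by the range step. Their sum telescopes as $(a-b)+0+b+(1-a)=1$. The case $b>a$ is symmetric: $s^+=0$, $s^-=b-a>0$, $c=a$ and $g=1-b$, and the sum is again $1$.

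The exclusivity claim then follows directly from the case split. In every case one of $s^+=\max(0,a-b)$ and $s^-=\max(0,b-a)$ is zero, since $a-b$ and $b-a$ cannot both be positive. When $a=b$, both are zero, which is consistent with the phrase "at most one."

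As a cross-check I would also read the four quantities as the Lebesgue measures of the four regions in Section~\ref{sec:sharedlevel}. For $U_A\sim\mathrm{Uniform}(0,1)$, these regions are $(b,a]$, $(a,b]$, $[0,\min(a,b)]$ and $(\max(a,b),1]$. They are pairwise disjoint and together cover $[0,1]$. That gives non-negativity and the sum of one at once. It also gives exclusivity, because $(b,a]$ and $(a,b]$ cannot both be nonempty.

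I expect no real obstacle. The only point needing care is the range step, specifically justifying $r_i\in[0,1]$ from the input convention $p\in(0,1]$. Without that step, $g=1-\max(a,b)$ could be negative.
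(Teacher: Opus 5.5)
Your proof is correct and is essentially the paper's own argument: the same case split $a\ge b$ versus $b>a$, which yields the tuples $(a-b,0,b,1-a)$ and $(0,b-a,a,1-b)$. Your explicit check that $a,b\in[0,1]$, which the paper leaves implicit, and your region-measure cross-check are sound additions rather than a different route.
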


\begin{proof}
If $a\ge b$, the tuple is $(a-b,0,b,1-a)$; if $b>a$, it is
$(0,b-a,a,1-b)$.  The claim follows in both cases.
\end{proof}

\begin{proposition}[Robust dependence bounds]
Let $E_+$ and $E_-$ be events with marginal probabilities $P(E_+)=a$ and
$P(E_-)=b$, representing activation of the pooled positive and negative
support.  Then
\begin{align}
P(E_+\setminus E_-)-P(E_-\setminus E_+)&=a-b, \label{eq:signed}\\
\inf P(E_+\setminus E_-)&=\max(0,a-b), \label{eq:frechet}\\
\sup P(E_+\setminus E_-)&=\min(a,1-b), \label{eq:frechetsup}
\end{align}
where the extrema range over all couplings of $E_+$ and $E_-$ with these
marginals.  The shared-threshold coupling attains the lower bound.
\end{proposition}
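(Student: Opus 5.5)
The whole argument runs through one parameter, the overlap $t=P(E_+\cap E_-)$. Write
\[
P(E_+\setminus E_-)=a-t,\qquad P(E_-\setminus E_+)=b-t .
\]
Subtracting the two gives Equation~\eqref{eq:signed} at once, for every coupling.

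The remaining claims reduce to describing the set of feasible values of $t$. This is the step that needs care, because it must cover both the necessary constraints and the fact that every admissible $t$ is achieved.

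Necessity comes from the four cells of the joint distribution, $t$, $a-t$, $b-t$ and $1-a-b+t$, all being non-negative. That yields the Fr\'echet bounds
\[
\max(0,a+b-1)\le t\le \min(a,b).
\]

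For sufficiency I would construct a coupling for each $t$ in this interval. Take $U\sim\mathrm{Uniform}(0,1)$ and assign the four cells as consecutive subintervals of $[0,1]$ with lengths $t$, $a-t$, $b-t$ and $1-a-b+t$. Let $E_+$ be the union of the first two subintervals and $E_-$ the union of the first and third. These events have marginals $a$ and $b$ and overlap exactly $t$.

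Since $P(E_+\setminus E_-)=a-t$ is decreasing in $t$, its extrema come from the endpoints of the feasible interval. The infimum uses $t=\min(a,b)$, giving $a-\min(a,b)=\max(0,a-b)$, which is Equation~\eqref{eq:frechet}. The supremum uses $t=\max(0,a+b-1)$, giving
\[
a-\max(0,a+b-1)=\min(a,\,1-b),
\]
which is Equation~\eqref{eq:frechetsup}. Both extrema are attained, so they are minima and maxima.

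Finally, for the shared-threshold coupling I would set $E_+=\{U_A\le a\}$ and $E_-=\{U_A\le b\}$ with the same $U_A$. These events are nested, so $P(E_+\cap E_-)=\min(a,b)$ and $P(E_+\setminus E_-)=\max(0,a-b)$. This agrees with the value of $s^+$ in Equation~\eqref{eq:masses}, so the shared-threshold coupling attains the lower bound.
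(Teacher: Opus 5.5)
Your proof is correct and follows the paper's route: parametrize by the overlap $q=P(E_+\cap E_-)$, read off the signed identity from $(a-q)-(b-q)=a-b$, and obtain both extrema from the feasible interval $[\max(0,a+b-1),\min(a,b)]$, with the nested events $\{U\le a\}$ and $\{U\le b\}$ attaining the lower bound. The paper simply asserts that this interval is feasible. Your four-subinterval construction shows that every value in it is realized by some coupling, which is what justifies that the supremum is attained.
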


\begin{proof}
Write $q=P(E_+\cap E_-)$.  The left side of Equation~\eqref{eq:signed} is
$(a-q)-(b-q)=a-b$.  Also $P(E_+\setminus E_-)=a-q$.  The feasible interval for
$q$ is $[\max(0,a+b-1),\min(a,b)]$, which gives the two extrema.  With one
uniform $U$, take $E_+=\{U\le a\}$ and $E_-=\{U\le b\}$.  Their intersection has
probability $\min(a,b)$, attaining the lower bound.
\end{proof}

The unopposed positive support is consequently the Fr\'echet lower bound on
support that survives explicit opposition when only the two marginals are
known.  If maximal overlap is not justified, the interval
\[
  [\max(0,a-b),\min(a,1-b)]
\]
provides a sensitivity interval over all couplings; independence gives the interior
value $a(1-b)$.  The shared-threshold semantics uses the lower bound.
Equation~\eqref{eq:signed} also explains why the signed confidence $a-b$ used
in \GK{}~\cite{tammet2021} is independent of the unknown
coupling, although signed confidence alone omits conflict and ignorance.

Section~\ref{sec:poolingrules} compares this choice with alternative
opposition and pooling rules.

\subsubsection{Configuration-wise evaluation}
\label{sec:condfirst}

The final unconditional result is obtained by marginalizing over predecessor
configurations, but opposition must first be evaluated within each
configuration; opposing pools cannot in general be combined only after all
derivations have been flattened to marginals.  Let $\kappa$ range over the
mutually exclusive assignments to the dependency set of a ground atom $A$
that determine which support contributions for $A$ and $\neg A$
are present.  Conditional on $\kappa$, their pooled support values are
$a_\kappa$ and $b_\kappa$.
The unconditional report is
\begin{equation}
 (s^+,s^-,c,g)=
 \sum_\kappa P(\kappa)\bigl(
   \max(a_\kappa-b_\kappa,0),\max(b_\kappa-a_\kappa,0),
   \min(a_\kappa,b_\kappa),1-\max(a_\kappa,b_\kappa)
 \bigr).
 \label{eq:conditionfirst}
\end{equation}
Both total support directions can be positive because different
configurations can favor different sides, while the directions remain
disjoint in each world.

The case that motivated premise-level opposition resolution has a simpler closed form.

\begin{proposition}[One contested premise]
Let the only derivation of $H$ be a positive rule
$r::H\leftarrow B$.  Suppose the ordinary positive and negative pools at $B$ are $a$ and
$b$, there is no direct evidence about $H$, and the rule application and the
uncertainty determining $B$ have no shared uncertain predecessors.  Then
\[
  R(H)=\bigl(r\max(a-b,0),\;0,\;0,\;1-r\max(a-b,0)\bigr).
\]
In particular, equal opposing premise support yields pure ignorance at the
conclusion.
\end{proposition}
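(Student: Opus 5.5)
The plan is to apply the configuration-wise formula, Equation~\eqref{eq:conditionfirst}, at $H$, with the dependency set of $H$ reduced to the single atom $B$. Since $r::H\leftarrow B$ is the only derivation of $H$ and there is no direct evidence about $H$, the negative pool at $H$ is $b_\kappa=0$ in every configuration. The positive pool is $a_\kappa=r$ when $B$ is positively usable and $a_\kappa=0$ otherwise. The rule has a positive body literal $B$, so only the positively usable state enables it. The negatively usable, conflict, and ignorance states at $B$ are not propagated as premise states (Section~\ref{sec:sharedthreshold}), so they all give $a_\kappa=0$.

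First I compute the probability of the enabling configuration. The pools at $B$ are $a$ and $b$, and $B$ is compared against its own shared threshold $U_B$. By Equation~\eqref{eq:masses}, the probability that $B$ is positively usable is $\max(a-b,0)$. The no-shared-predecessor hypothesis is what allows this marginal to be used directly. The threshold $U_H$ and the thresholds determining the pools at $B$ are disjoint sets of variables, so by the remark after the definition of $\Omega_\kappa$ the marginal state probabilities may be multiplied. I expect this to be the only delicate step. I would argue it by noting that the relevant $\Omega$ factors as a product of the coordinates feeding $B$ and the coordinate $U_H$, and $X_B$ depends only on the first factor.

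Next I evaluate the two configuration terms. When $\kappa$ makes $B$ positively usable, with probability $p=\max(a-b,0)$, the pools are $(a_\kappa,b_\kappa)=(r,0)$. Equation~\eqref{eq:masses} then gives the tuple $(r,0,0,1-r)$. In every other configuration, with total probability $1-p$, the pools are $(0,0)$ and the tuple is $(0,0,0,1)$. Summing as in Equation~\eqref{eq:conditionfirst} gives
\[
\bigl(rp,\;0,\;0,\;p(1-r)+(1-p)\bigr)=\bigl(rp,0,0,1-rp\bigr),
\]
which is the stated report.

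Finally, equal opposing support $a=b$ gives $p=0$, and the report becomes $(0,0,0,1)$, which is pure ignorance. The Partition proposition confirms that the tuple is a valid partition of $\Omega$.
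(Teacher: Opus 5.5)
Your proof is correct and follows the paper's argument. The paper works directly with threshold intervals: $B$ is positively usable exactly on $b<U_B\le a$, only there does $H$ receive pool $r$, independence of $U_H$ from $U_B$ supplies the factor $r$, and all remaining mass is ignorance. That is the same two-class instance of Equation~\eqref{eq:conditionfirst} that you evaluate.
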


\begin{proof}
The premise is positively usable on the threshold interval
$b<U_B\le a$, whose length is $\max(a-b,0)$.  Only on that interval is the rule's
support contribution present at $H$.  In those worlds, the positive support
pool at $H$ has value $r$.  Since $U_H$ is independent of $U_B$, the condition
$U_H\le r$ holds in a fraction $r$ of them.
There is no negative support at $H$, so the remaining worlds are ignorant
rather than negative-only or conflicted.
\end{proof}

Thus $0.5::B$, $0.5::\neg B$, and $0.9::H\leftarrow B$ give pure ignorance
for $H$, rather than the query-level value $0.45$.  With $0.5$ for $B$ and
$0.2$ for $\neg B$,
the result is $0.9(0.5-0.2)=0.27$.  Evidence for $\neg B$ weakens the use of
$B$; it does not by contraposition become evidence for $\neg H$.

Consider a positive fact for $bird(a)$ with input confidence $0.5$.  A
negative rule has confidence $0.9$ and is present only when a
predecessor atom is usable, an event
of probability $0.2$.  Subtracting the marginal negative support $0.18$ from $0.5$
gives $0.32$.
Equation~\eqref{eq:conditionfirst} instead gives positive support
$0.8\cdot0.5=0.4$ and negative support $0.2\cdot(0.9-0.5)=0.08$.  The latter
calculation preserves the fact that the negative derivation is absent in 80\% of
the configurations.

A rule uses a premise only in the region where it is positively usable after opposition has
been resolved.  When premise and
rule paths share predecessors, the configurations must enumerate the shared
elements and their four possible outcomes.  Chains without shared predecessors
reduce to products.  A simple shared-predecessor example illustrates the
reason: $0.7$ for the predecessor,
$0.8$ for a premise rule, and $0.9$ for a conclusion rule give
$0.7\cdot0.8\cdot0.9=0.504$.  Multiplying two already marginalized links
would count the $0.7$ predecessor twice and return $0.3528$.

The shared-threshold model is evaluated on the complete relevant directed
ground dependency graph; Section~\ref{sec:premisesearch} describes how \GK{}
reconstructs a bounded query-relevant part of that graph.

\subsection{Comparison on a contested premise}

Take the contested-premise program of Section~\ref{sec:example}:
\[
0.5::bird(a),\qquad 0.5::\neg bird(a),\qquad
0.9::flies(x)\leftarrow bird(x),\qquad Q=flies(a).
\]
The only constant is $a$, so grounding replaces $x$ by $a$ and yields three
ground clauses, all uncertain: the two facts and the single rule instance
$0.9::flies(a)\leftarrow bird(a)$.  Model~1 therefore has $2^3=8$ worlds,
listed in Table~\ref{tab:eightworlds}.

\begin{table}[htbp]
\centering
\small
\caption{The eight ground-instance activation worlds of the contested-premise
program.  The first three columns say whether the instance was kept or deleted.
The query is derivable in the two worlds that keep both $bird(a)$ and the rule
instance; its negation is derivable in none.}
\label{tab:eightworlds}
\begin{tabular}{@{}cccrcc@{}}
\toprule
$bird(a)$ & $\neg bird(a)$ & $flies(a)\leftarrow bird(a)$ &
  probability & $flies(a)$? & $\neg flies(a)$? \\
\midrule
kept & kept & kept & $.225$ & yes & no \\
kept & kept & deleted & $.025$ & no & no \\
kept & deleted & kept & $.225$ & yes & no \\
kept & deleted & deleted & $.025$ & no & no \\
deleted & kept & kept & $.225$ & no & no \\
deleted & kept & deleted & $.025$ & no & no \\
deleted & deleted & kept & $.225$ & no & no \\
deleted & deleted & deleted & $.025$ & no & no \\
\bottomrule
\end{tabular}
\end{table}

The query column of Table~\ref{tab:eightworlds} contains ``yes'' in exactly
two rows --- the worlds keeping both $bird(a)$ and the rule instance --- each
of probability $0.5\cdot0.5\cdot0.9=0.225$; their sum is
$P_{\mathrm{act}}(flies(a))=0.225+0.225=0.45$.  The negation is derivable in no row: no
clause of the program concludes $\neg flies$, and in the two worlds keeping
both $bird(a)$ and $\neg bird(a)$ the resulting contradiction yields only
refutations that do not use the query, which $\vdash_q$ discards.  Model~1
therefore gives $P_{\mathrm{act}}(Q)=0.45$, $P_{\mathrm{act}}(\neg Q)=0$, and
difference $0.45$.

Shared-threshold semantics (Model~2) uses the shared threshold at each atom instead of deleting
clauses.  At
$bird(a)$, the positive and negative pools are both $0.5$.  For
$U_{bird(a)}\le0.5$ the atom falls in the conflict region; for
$U_{bird(a)}>0.5$ it falls in the ignorance region.  It is therefore
positively usable in no world.  The rule instance
$flies(a)\leftarrow bird(a)$ contributes its rule confidence $0.9$ only in worlds
where $bird(a)$ is positively usable, so it contributes in no world.  No
other application supports either $flies(a)$ or $\neg flies(a)$, so both
pools at the query atom are zero, and $flies(a)$ falls in the ignorance region in every
shared-threshold world.  Model~2 gives $R(flies(a))=(0,0,0,1)$, with the
conflict recorded
at $bird(a)$, where it arises, rather than at the query.  Model~2 still
applies the rule: the condition is evaluated first, and its state determines
whether the application contributes.

The two models calculate different quantities.  Model~1 counts worlds in
which the active clauses prove the query, and the clause $\neg bird(a)$ never
obstructs a proof that uses $bird(a)$: a world may contain both clauses, and
derivability from the query-relevant clauses is unaffected by their
inconsistency.  Model~2 resolves the opposing support for $bird(a)$ before
that premise can support $flies(a)$, so equally strong opposition leaves
nothing to propagate.  This paper uses Model~2 for premise-level opposition.
Table~\ref{tab:twostages} summarizes the two models.
Section~6 states how \GK{} selects between its bounded calculations, and
Section~\ref{sec:differences} classifies the resulting model and coverage
differences.

\begin{table}[!htb]
\centering
\small
\caption{Semantic comparison of the two reference constructions.}
\label{tab:twostages}
\begin{tabular}{@{}L{2.7cm}L{5.05cm}L{5.05cm}@{}}
\toprule
 & Model 1: ground-instance activation & Model 2: shared thresholds \\
\midrule
What is random
& Whether each ground instance of an uncertain clause is present
& One shared threshold per ground atom, with an additional independent
  threshold for contrary-gated interactions \\
What happens in one world
& The active clauses are tested for default-aware
  $\vdash^\Delta_q$-derivability of the query and of its negation
& All directed ground applications are evaluated in dependency order, with
  a least fixpoint only for finite monotone same-polarity components and named
  policies for the other supported cycles \\
Default application
& In each activation world, apply the recursive exception, priority, and
  cycle policy
& Evaluate the body and exception first; the default contributes only where
  the body is usable and the exception is not positively usable \\
Uncertain exception-condition opposition
& An active proof of the exception condition can block even when a proof of
  its explicit negation is also active
& Positive and negative support at the exception condition are resolved before it is
  used \\
Contrary-gated default
& Clause activation followed by an all-or-nothing exception check
& A separate local rule; ordinary support for the opposite literal blocks the default
  in its active region \\
What is counted
& The fraction of worlds deriving the query, minus the fraction deriving its
  negation
& The probabilities of the regions where the query atom is positively usable,
  negatively usable, conflicted, or ignorant \\
\bottomrule
\end{tabular}
\end{table}

\subsection{Defaults and exceptions}
\label{sec:defaults}

\subsubsection{One uncertain exception condition}

A default application has three components: the usable state of its
ordinary body, the usable state of its exception condition, and the rule
confidence applied when the exception does not block it.  In a shared-threshold
world, the body and exception are evaluated first.  If the body is not usable,
the application contributes nothing.  If the body is usable and the exception
is positively usable, the default is blocked.  Otherwise the rule contributes
its stated confidence to the head's support pool.  Evidence for the explicit
negation of the exception condition does not support the head directly; it
only reduces the region in which the exception blocks the rule.

Here \emph{ordinary support} means support contributed by a fact, a
non-default rule, or a default whose exception condition is not the negation
of its conclusion.  After any exception condition has been evaluated, this
support enters the relevant polarity pool directly.  The default above instead
describes an undercutting exception.
When the exception condition is
also the negation of the head, the same evidence additionally rebuts the
conclusion; that contrary-gated case is treated separately below.  The main
local default cases are summarized below.

\begin{center}
\small
\begin{tabular}{L{3.3cm}L{5.5cm}L{5.0cm}}
\toprule
Case & Effect on the rule application & Combination rule \\
\midrule
Arbitrary exception $B$ to head $H$
& usable support for $B$ undercuts that application only
& evaluate $B$ first; omit the application where $B$ is positively usable \\
Exception $\neg H$ to head $H$
& the same contribution undercuts the default and rebuts $H$
& contrary-gated rule; see Section~\ref{sec:localrules} \\
Opposed ranked defaults
& each conclusion is the other's exception
& mutual blocking at equal rank; strict-priority override at unequal rank \\
Paired reference-class extension
& an explicit exception rule partitions the blocked class
& optional class-frequency rule of Appendix~\ref{sec:pairedexceptions} \\
\bottomrule
\end{tabular}
\end{center}

\begin{quote}
\small
\textsc{EvaluateDefault}($D=r::H\leftarrow A\ [\mathrm{unless}\ B]$): evaluate
$A$ and $B$ first; if $A$ is not usable, return no contribution; if $B$ is
positively usable, return zero in the core semantics; the optional paired
reference-class construction defines the only nonzero blocked-branch case
considered here; otherwise contribute rule confidence $r$ to the support pool of $H$.
\end{quote}
If $B=\neg H$, the contrary-gated local rule of
Section~\ref{sec:localrules} is used because exception-condition evaluation and head
opposition are no longer acyclically separable.

Consider the ground program
\[
  .7::B,\qquad .3::\neg B,\qquad
  .9::H\ [\mathrm{unless}\ B].
\]
At $B$, the positive and negative pools are $.7$ and $.3$, so
\[
 R(B)=(.4,0,.3,.3).
\]
Thus $B$ is positively usable, and blocks the default, in $.4$ of the
threshold worlds.  Conflict at $B$ does not block the rule.  The remaining
$.6$ is unblocked, and the default's rule confidence $.9$ gives
\[
  s^+(H)=.6\cdot.9=.54,\qquad
  R(H)=(.54,0,0,.46).
\]
The threshold regions make the calculation explicit:
\begin{center}
\small
\begin{tabular}{L{5.0cm}rcc}
\toprule
Region at $B$ & Region probability & Blocked? & Contribution to $H$ \\
\midrule
$U_B\le .3$: conflict & .3 & no & $.3\cdot.9$ \\
$.3<U_B\le .7$: positively usable & .4 & yes & 0 \\
$U_B>.7$: ignorance & .3 & no & $.3\cdot.9$ \\
\bottomrule
\end{tabular}
\end{center}
The $.3$ negative pool at $B$ does not support $H$; it reduces the
region where the exception condition is positively usable from $.7$ to $.4$.

Ground-instance activation gives $.27$ instead.  In an activation world,
the default is blocked whenever the positive $B$ clause is active; an
independently active $\neg B$ clause does not remove the derivation of
$B$.  Including the default's own activation probability, the positive query
probability is therefore
\[
  .9(1-.7)=.27.
\]
Shared-threshold semantics first resolves the $.7/.3$ opposition at the
exception condition and therefore gives $.54$.  This is a difference between the two
reference semantics, not a sampling discrepancy.

\subsubsection{Blocked and unblocked cases}

The general definition is configuration-wise.  For each predecessor
configuration $\kappa$, the support contributed by a default application
$\delta=r::H\leftarrow A\ [\mathrm{unless}\ B]$ is
\[
\operatorname{contrib}_\delta(\kappa)=
\begin{cases}
0, & A\text{ is not usable in }\kappa,\\
r_{\mathrm{blocked}}, & A\text{ is usable and }B\text{ is positively usable in }\kappa,\\
r_{\mathrm{open}}, & A\text{ is usable and }B\text{ is not positively usable in }\kappa.
\end{cases}
\]
The core complete-blocking semantics has $r_{\mathrm{open}}=r$ and
$r_{\mathrm{blocked}}=0$.  The optional paired
reference-class construction of Appendix~\ref{sec:pairedexceptions} defines
nonzero blocked-branch components under its stated syntactic restrictions.

All contributions are pooled within each configuration before marginalizing
over configurations.  If
$\mathcal D^+$ is the set of positive applications at the head and
$\operatorname{contrib}_\delta(\kappa)$ includes the corresponding
ordinary-rule contribution when $\delta$ is
not a default, then
\begin{equation}
 a_\kappa=1-\prod_{\delta\in\mathcal D^+}
   \bigl(1-\operatorname{contrib}_\delta(\kappa)\bigr),
\label{eq:defaultconfigpool}
\end{equation}
with an analogous $b_\kappa$ for negative applications.  The local opposition or
combination rule is applied to $a_\kappa,b_\kappa$, and only then are its four
regions weighted by $\Pr(\kappa)$ and summed.  This order preserves shared predecessors between the
default, its body and exception condition, and other head applications.

For one isolated default, the configuration-wise expression reduces to an
expected support contribution conditional on the body being usable.  The same reduction applies in a
setting without shared predecessors in which later
scalar pooling is justified.  Conditional on the ordinary body $A$ being
usable, write
\[
 \beta_{B\mid A}
 =\Pr(B\text{ is positively usable}\mid A\text{ is usable}).
\]
This is a usable-region probability, not a truth probability for $B$.  The
support contributions in the unblocked and blocked classes are
$r_{\mathrm{open}}$ and $r_{\mathrm{blocked}}$, respectively.  Their
conditional expectation is
\begin{equation}
 r_{\mathrm{open}}(1-\beta_{B\mid A})+r_{\mathrm{blocked}}\beta_{B\mid A}.
\label{eq:blockmix}
\end{equation}
The unconditional expected support contribution is
\[
 \Pr(A\text{ is usable})\,
 \bigl[r_{\mathrm{open}}(1-\beta_{B\mid A})+
 r_{\mathrm{blocked}}\beta_{B\mid A}\bigr].
\]
Only under independence may $\beta_{B\mid A}$ be replaced by the marginal
probability that $B$ is positively usable.  For several complete-blocking
exception conditions whose positively usable regions and relevant predecessor support are independent, the
independent shortcut conditional on the body is
\[
 r_{\mathrm{open}}\prod_{i=1}^k(1-\beta_{B_i\mid A}).
\]
Shared body, exception-condition, or head support instead requires the configuration-wise
calculation above.

\begin{proposition}[Conditional-support preservation]
Suppose the positively usable state of $B$ partitions the usable-body cases
into an unblocked class of conditional probability $1-\beta_{B\mid A}$ and a blocked
class of conditional probability $\beta_{B\mid A}$.  If one isolated default
application contributes $r_{\mathrm{open}}$ and $r_{\mathrm{blocked}}$ in the
unblocked and blocked classes, respectively, then
Equation~\eqref{eq:blockmix} is its conditional expected support contribution.
In particular, complete blocking contributes $r_{\mathrm{open}}$ throughout
the unblocked class.
\end{proposition}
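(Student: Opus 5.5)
The argument is a direct application of the law of total expectation on the conditional measure given that the body $A$ is usable. I would work on the threshold sample space $\Omega$ of the finite acyclic relevant graph, with product measure $\mu$. Let $\Omega_A=\{\omega:A\text{ is usable in }\omega\}$, and assume $\mu(\Omega_A)>0$; otherwise the conditional quantities are vacuous. The contribution of the isolated default is the function $\operatorname{contrib}_\delta$ defined case-wise above. Because the application is isolated, this function depends on $\omega$ only through the local states of $A$ and $B$, and those states are functions $X_A,X_B$ of $\omega$ determined in dependency order. So $\operatorname{contrib}_\delta(\omega)$ is a simple, measurable random variable.

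The first step splits $\Omega_A$ into the blocked set $\Omega_A^{b}=\Omega_A\cap\{X_B=+\}$ and the unblocked set $\Omega_A^{u}=\Omega_A\setminus\Omega_A^{b}$. The unblocked set also collects the negatively usable, conflicted and ignorant states of $B$, since the semantics says these do not block. By the hypothesis of the proposition, the conditional measures of these two sets are $\beta_{B\mid A}$ and $1-\beta_{B\mid A}$, and they are disjoint and exhaustive in $\Omega_A$. The second step notes that $\operatorname{contrib}_\delta$ is constant on each piece: it equals $r_{\mathrm{blocked}}$ on $\Omega_A^{b}$ and $r_{\mathrm{open}}$ on $\Omega_A^{u}$. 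The third step computes $\mathbb E[\operatorname{contrib}_\delta\mid\Omega_A]$ as the integral of a simple function, namely the sum over the two pieces of value times conditional measure. This is exactly Equation~\eqref{eq:blockmix}. Multiplying by $\mu(\Omega_A)$, and using that the contribution is $0$ off $\Omega_A$, gives the unconditional expression stated before the proposition. Setting $r_{\mathrm{blocked}}=0$ gives the complete-blocking statement: the contribution equals $r_{\mathrm{open}}$ pointwise on $\Omega_A^{u}$, not merely in expectation.

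The main obstacle is justifying that the contribution really is constant on each class, and that is where ``isolated'' is used. If the default shared predecessors with other applications at $H$, the quantity that matters downstream would be the pooled value $a_\kappa$ of Equation~\eqref{eq:defaultconfigpool}. Noisy-or is not linear, so the expectation of the pool is not the pool of the expectations. I would therefore state explicitly that the proposition concerns only the support contribution of a single application, before pooling. I would also point out that $\beta_{B\mid A}$ is conditional on the body, so no independence between $A$ and $B$ is assumed. The marginal replacement mentioned in the text is a separate corollary, valid only when $\{X_B=+\}$ and $\Omega_A$ are independent.
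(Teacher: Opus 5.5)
Your proposal is correct and follows essentially the same route as the paper's proof: split the usable-body cases into the disjoint, exhaustive blocked and unblocked classes, weight the constant contributions $r_{\mathrm{blocked}}$ and $r_{\mathrm{open}}$ by their conditional probabilities $\beta_{B\mid A}$ and $1-\beta_{B\mid A}$, and add them, with the unblocked-class value read off directly (the paper does this by dividing the first weighted term by $1-\beta_{B\mid A}$). The measure-theoretic framing adds rigor but no new idea. The constancy of the contribution on each class is a hypothesis of the proposition, not something derived from isolation, so the step you call the main obstacle is immediate.
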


\begin{proof}
The two classes are disjoint and exhaustive within the usable-body cases.
Their weighted contributions are
$r_{\mathrm{open}}(1-\beta_{B\mid A})$ and
$r_{\mathrm{blocked}}\beta_{B\mid A}$; adding them gives
Equation~\eqref{eq:blockmix}.  Dividing the first term by the
unblocked probability returns $r_{\mathrm{open}}$.
\end{proof}

This proposition explains why exception-condition support is not subtracted directly from
the rule confidence.  Direct subtraction would return
$\max(r_{\mathrm{open}}-\beta_{B\mid A},0)$ for a
complete blocking.  It imposes a maximal-overlap coupling between the
application event
and the event in which the exception condition is positively usable and
generally fails to preserve $r_{\mathrm{open}}$
conditional on the exception condition not being positively usable.  A
supported exception condition is
therefore an undercutting attack on one inference; evidence for the negated
head is a rebutting attack and enters the atom's negative pool.  This provides
a quantitative implementation of Pollock's distinction between undercutting
and rebutting attacks~\cite{pollock1987}.
Prakken instead derives argument strength from a probability distribution over
the premises and conclusions of structured arguments~\cite{prakken2018}.
Possibilistic uncertain-default formalisms keep default selection and
certainty propagation as two stages~\cite{dupinsaintcyr2008}.

The limiting cases $\beta_{B\mid A}=0$ and $\beta_{B\mid A}=1$ select the
unblocked and blocked branches exactly.  With confidence-one inputs, priority assigns the
whole overlapping threshold region to the higher-ranked side.  The application
is then either retained or removed.

\subsubsection{Contrary-gated defaults}
\label{sec:localrules}

A default whose exception condition is the negation of its own conclusion ---
$flies(x)$ unless $\neg flies(x)$ --- is here called a
\emph{contrary-gated default}:
the evidence that makes its exception positively usable is also evidence for
the opposite conclusion.  This differs from an arbitrary exception such as
$flies(a)$ unless $injured(a)$: evidence for $injured(a)$ undercuts that
flying rule but does not support $\neg flies(a)$.  In the contrary-gated case
alone, the exception-condition support also enters the opposite head pool.

For an atom involved in a contrary-gated interaction, draw an auxiliary gate
threshold $V_A$, independently of $U_A$.  The default-versus-ordinary and
equally ranked contrary-gated cases use the independent thresholds $U_A$ and
$V_A$.  Strict-priority override is defined separately below and uses the
shared threshold $U_A$.  All present same-polarity contributions are
combined first, and each resulting pool is compared with one threshold,
rather than every default application receiving a separate threshold.  Let
$a$ and $b$ be the pooled positive and negative support values.  If neither
side is a contrary-gated default,
both instead use the single shared threshold $U_A$ of
Section~\ref{sec:sharedlevel}:
\[
(s^+,s^-,c,g)=(\max(a-b,0),\max(b-a,0),\min(a,b),1-\max(a,b)).
\]

The independent pair is a semantic postulate for this local default case.  It
makes a contrary-gated positive pool with value $a$ retain that value where
its exception condition is not positively usable, while a simultaneous
positive test for the exception disables the application.  A shared threshold would instead treat
the two pools as ordinary opposing evidence and put their overlap in conflict.
The change is not inferred from the provenance of the contributions; a
different coupling defines a different default semantics.

Now assume that the positive side is a contrary-gated default pool and the
negative side is ordinary support.  Ordinary opposing support with pooled
value $b$ is active when $U_A\le b$.  The contrary-gated default contribution
with value $a$ passes its auxiliary test when $V_A\le a$, and it is usable
only where the ordinary opposing support does not pass the shared-threshold
test, $U_A>b$.  The regions are exclusive:
\[
  (s^+,s^-,c,g)=\bigl(a(1-b),\; b,\; 0,\; (1-a)(1-b)\bigr).
\]
A default with rule confidence $0.6$ opposed by an ordinary fact with
input confidence $0.9$ reports $(0.06,0.90,0,0.04)$.  The mirror case, where
the negative side is the contrary-gated
default, is symmetric; it covers queries for the explicit negation, such
as ``which object does not fly'' against a flying default.

\subsubsection{Opposing defaults}

When both sides are contrary-gated defaults with the same priority, the
two thresholds $U_A$ and $V_A$ are independent and each side contributes
only if the other threshold test fails.  By convention the positive pool is
tested by $U_A$ and the negative pool by $V_A$:
\[
  (s^+,s^-,c,g)=\bigl(a(1-b),\; b(1-a),\; 0,\; (1-a)(1-b)+ab\bigr).
\]
The four threshold regions are:
\begin{center}
\small
\begin{tabular}{ccc}
\toprule
$U_A\le a$ & $V_A\le b$ & Outcome \\
\midrule
yes & no & positive-only \\
no & yes & negative-only \\
yes & yes & mutual blocking; ignorance \\
no & no & ignorance \\
\bottomrule
\end{tabular}
\end{center}
At $a=b=1$ this is the Nixon diamond: the whole probability is in the
ignorance region.
Mutual blocking yields ignorance because neither rule application remains
enabled after its exception condition is evaluated.  When both threshold tests pass, each default finds its
exception active and both rule applications are disabled.  No conclusion is
usable in that world, so the region is ignorance rather than conflict.  Two
certain contradictory facts remain active on both sides and instead yield
conflict.
With rule confidences $0.6$ and $0.9$ the
report is $(0.06,0.36,0,0.58)$.

\subsubsection{Priorities}

If both sides are contrary-gated defaults and one has strictly higher priority,
strict-priority override uses the shared threshold $U_A$, rather than the two
independent tests used for equally ranked contrary-gated defaults.  The overlap region is assigned to the
higher-priority side, and the lower-priority side keeps only its excess:
\[
  (s^+,s^-,c,g)=\bigl(a,\; \max(b-a,0),\; 0,\; 1-\max(a,b)\bigr)
  \qquad(\text{positive side higher}).
\]
A $0.6$ default at priority 3 against a $0.9$ default at priority 2 reports
$(0.6,0.3,0,0.1)$: the higher-priority side retains its full support, while the
lower-priority side retains only support outside the overlap.  At rule confidence
one this is deterministic strict-priority override.

\begin{proposition}[Local partitions and deterministic limits]
For $a,b\in[0,1]$, each tuple in this family of local combination rules is
non-negative and sums to one.  At confidence-one inputs, certain ordinary
contrary support defeats a contrary-gated default, two equally ranked
contrary-gated
defaults yield pure ignorance, and the
strictly higher-priority default receives full support.
\end{proposition}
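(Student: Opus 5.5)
The family of local combination rules has four members: the ordinary shared-threshold tuple of Equation~\eqref{eq:masses}, the contrary-gated default against ordinary support, two equally ranked contrary-gated defaults, and strict-priority override. The plan is to check each member directly, reusing the Partition proposition where possible. For the deterministic limits, substitute $a=b=1$ into the three default tuples.

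\emph{Ordinary opposition.} The partition claim for Equation~\eqref{eq:masses} is exactly the Partition proposition, so nothing new is needed.

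\emph{Contrary-gated default against ordinary support.} Non-negativity of $\bigl(a(1-b),b,0,(1-a)(1-b)\bigr)$ follows from $a,b\in[0,1]$. The sum is $a(1-b)+(1-a)(1-b)+b=(1-b)+b=1$.

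\emph{Equally ranked contrary-gated defaults.} The tuple $\bigl(a(1-b),b(1-a),0,(1-a)(1-b)+ab\bigr)$ is a sum of products of numbers in $[0,1]$, so it is non-negative. Its entries are the probabilities of the four cells of the product of two independent events, $\{U_A\le a\}$ and $\{V_A\le b\}$. Those cells sum to one.

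\emph{Strict-priority override.} For $\bigl(a,\max(b-a,0),0,1-\max(a,b)\bigr)$, split into two cases. If $b\le a$, the tuple is $(a,0,0,1-a)$. If $b>a$, it is $(a,b-a,0,1-b)$. Each case is non-negative and sums to one. The mirror cases, with the contrary-gated default on the negative side or the negative side holding the higher priority, follow by exchanging the roles of $a$ and $b$.

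\emph{Deterministic limits.} With $a=b=1$, the contrary-gated-versus-ordinary tuple becomes $(0,1,0,0)$, so the certain ordinary contrary support defeats the default. The equal-rank tuple becomes $(0,0,0,1)$, which is pure ignorance, the Nixon diamond. The strict-priority tuple becomes $(1,0,0,0)$, so the higher-priority side receives full support.

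I expect no step to be a real obstacle. The only care needed is to make "this family" precise: it means the listed tuples together with their polarity mirrors. One must also confirm that the max/min expressions need the case split given above rather than a single algebraic identity.
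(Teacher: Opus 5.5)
Your proof is correct and follows the paper's own argument: you check non-negativity and the sum of each local tuple, use the same $a\ge b$ versus $b>a$ split for strict-priority override, and substitute $a=b=1$ to obtain the three deterministic limits. Your additions are harmless refinements of the same route: invoking the Partition proposition for the ordinary tuple, reading the equal-rank tuple as the four cells of two independent threshold events, and noting the polarity mirrors.
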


\begin{proof}
For a default opposed by ordinary support the sum is
$a(1-b)+b+(1-a)(1-b)=1$.  For equal priorities it is
$a(1-b)+b(1-a)+(1-a)(1-b)+ab=1$.  For strict-priority override,
if $a\ge b$ the tuple is $(a,0,0,1-a)$, and if $b>a$ it is
$(a,b-a,0,1-b)$.  Non-negativity is immediate.  Substitution of $a=b=1$
gives the three stated deterministic limits.
\end{proof}

Strict-priority override applies only to ranked defaults.  A \emph{ranked
default} has an explicit positive priority $\pi\ge1$.  An \emph{unranked
default} has priority zero; an omitted priority is parsed as zero.  Priority
zero means unranked; it does not place the default below every positive
priority.  Ordinary
non-default support has no default
priority.  Strict-priority override applies exactly when every present
contribution on both sides comes from a ranked default, with the maximum
priorities compared strictly.  A present unranked default or ordinary support
prevents strict-priority override, and the two pools are combined by the
applicable ordinary rule.  A priority-zero exception condition may disable
its own rule application, but the corresponding support is not entered into
the priority comparison at the conclusion atom.  Equal priorities never
defeat each other; they produce the symmetric mutual blocking
above.

When several ranked contributions occur on one side, all present contributions
on that side are pooled.  The maximum present priority determines which pooled
side receives the overlap; lower-ranked contributions on that side remain in
its pool.  This differs from the priority restriction inside an exception
check, where support derived through a lower-priority default is excluded.
Strict-priority override is defined only when every contribution in both pools
comes from a positively ranked default.  If ordinary or unranked support is present, there is no rank for that
contribution, so the semantics uses the applicable ordinary combination rule
for the full pools.  This switch between combination rules is part of the
definition, even when
the ordinary contribution is small.

\begin{center}
\small
\begin{tabular}{L{4.5cm}L{4.2cm}L{5.0cm}}
\toprule
Positive and negative sides & Priority condition & Local reference rule \\
\midrule
ordinary support / ordinary support
& none & shared-threshold opposition \\
contrary-gated default / ordinary support
& none & default-versus-ordinary rule \\
two contrary-gated defaults
& same explicit rank $r\ge 1$ & mutual blocking \\
two contrary-gated defaults
& unequal explicit ranks $r,s\ge 1$ & strict-priority override \\
one pool contains an acyclic priority-zero default
& priority-zero contribution is treated as unranked
& select the local rule using the mixed-case procedure below \\
opposite priority-zero defaults
& reciprocal exception cycle & ignore the internal cycle edges; combine the
active applications as ordinary support without default priority \\
\bottomrule
\end{tabular}
\end{center}

Mixed cases are resolved configuration by configuration.  Pool each polarity,
then classify a side as \emph{ranked-default-only} exactly when every present
contribution on it is a contrary-gated default with positive rank.  The local rule is selected
exhaustively:
\begin{enumerate}
\item if both sides are ranked-default-only, compare the maximum ranks and use mutual blocking or strict-priority
override;
\item if exactly one side is ranked-default-only, use the default-versus-ordinary
rule, treating the other pool as ordinary even when that pool also contains
ranked defaults;
\item if neither side is ranked-default-only, use ordinary shared-threshold
opposition for the two full pools.
\end{enumerate}
Before this classification, the reciprocal priority-zero rule removes its two
internal exception edges.  Other priority-zero applications are unranked and
therefore prevent a side from being ranked-default-only.  External
evidence remains present.  Thus a pool containing both ranked-default and
ordinary or unranked contributions falls under item 2 or 3, depending on the
composition of the opposite pool.

For each directed application, self-reference is determined relative to that
application's own conclusion literal, independently of the polarity of the
current query.  In a reciprocal priority-zero pair, the exception reference
from either application to the opposite application is an internal cycle
reference.  These two internal references do not disable the pair.  Exception
support entering the pair from any other derivation remains external and may
disable the application it reaches.  After the internal references have been
removed, the active positive and negative applications are combined as
ordinary unranked support.  A default whose exception condition is its own
conclusion remains self-blocking and contributes nothing.

\subsection{Exactness conditions}

For the correspondence claim, an evaluated query is \emph{within the
correspondence fragment} when the following semantic conditions hold:
\begin{itemize}
\item the relevant directed ground program and dependency graph are finite and
acyclic;
\item every exception uses a local interaction form defined in
Section~\ref{sec:defaults};
\item no retained answer proof depends on a classical factoring step: the
directed shared-threshold evaluation defines no counterpart of classical
factoring, so factoring-derived answers are outside the fragment
(Section~\ref{sec:results}, Example~12).
\end{itemize}
The current correspondence claim additionally requires the following
implementation conditions:
\begin{itemize}
\item the report-time traversal enumerates every relevant directed ground
application and every generated premise is ground;
\item every support configuration and shared predecessor needed by the query is
enumerated without truncation or deadline;
\item the coarse dependency index does not merge distinct ground predecessors
that share a predicate-symbol and arity key and must be evaluated separately;
\item the operational exception filter admits every directed exception
derivation required by the reference calculation.
\end{itemize}
This is a condition on an evaluated query, not a restriction on the input
language accepted by the prover.  The exception-filter item is an operational
precondition for correspondence with \GK{}; it is not part of the
shared-threshold world definition.

\begin{proposition}[Dependency-evaluator correspondence]
\label{prop:dependencycorrespondence}
For an evaluated query within the correspondence fragment defined above, the
dependency-aware evaluator returns the query-region measures induced by the
shared-threshold world definition.
\end{proposition}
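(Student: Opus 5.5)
The plan is an induction along a topological order of the finite acyclic relevant ground dependency graph. The induction shows that the evaluator's internal objects are exactly the threshold regions of the shared-threshold world definition. The invariant is this: for every processed atom $A$, the evaluator holds a finite partition of $\Omega=[0,1]^m$, refined at the threshold boundaries introduced so far, on which the state map $X_A:\Omega\to\{+,-,c,g\}$ is constant. The state it assigns to each cell equals the state the world definition assigns there. The query-region measures are then the Lebesgue measures of the unions of cells with $X_Q=+,-,c,g$. That is Equation~\eqref{eq:conditionfirst}, with $\kappa$ ranging over configurations of $\operatorname{Pred}(Q)$.

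First I will reduce the implementation conditions to the semantic object. By the fragment's implementation items, the report-time traversal enumerates every relevant directed ground application, with all premises ground. The dependency index does not merge distinct predecessors. The operational filter admits every required exception derivation. Together these give a directed ground program identical to the grounded and oriented program of Step~1 of the world definition, restricted to the ancestors of $Q$. The factoring exclusion ensures no retained answer relies on an inference with no directed counterpart.

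I will also argue that the ancestors suffice: restricting attention to them does not change $X_Q$. Atoms outside the ancestor set never feed an edge into $Q$'s dependency set, and the threshold variables are independent across atoms, so integrating them out is harmless.

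For the inductive step at atom $A$, I will fix a cell $\kappa$ of the predecessor partition. On that cell, every body and exception state is constant by the hypothesis, so the set of present contributions is determined. The contributions are computed by \textsc{EvaluateDefault}, giving $\operatorname{contrib}_\delta(\kappa)$ as defined above. Pooling by Equation~\eqref{eq:defaultconfigpool} gives $a_\kappa$ and $b_\kappa$. The local rule is selected by the exhaustive mixed-case procedure. The per-cell split of the fresh thresholds into regions then comes from three sources:
\begin{itemize}
\item Equation~\eqref{eq:masses} for ordinary opposition, by the Partition proposition;
\item the $(U_A,V_A)$ region tables for contrary-gated and equal-rank cases;
\item the shared-$U_A$ rule for strict priority, by the Local partitions proposition.
\end{itemize}
These refine the cell into measurable subcells. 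Because $U_A$ and $V_A$ are fresh coordinates independent of the predecessor coordinates, each subcell has measure $\mu(\Omega_\kappa)$ times the corresponding local region length or area. This matches the world definition cellwise.

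The key point is that shared predecessors reuse one coordinate, not a resampled one. This holds because the evaluator refines a single joint partition rather than multiplying marginals, which is the distinction illustrated in Section~\ref{sec:condfirst}.

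The main obstacle I expect is showing that the evaluator's configuration enumeration really is this joint refinement, and not a factorized product that is valid only for disjoint threshold sets. I would handle it by proving a lemma. The lemma says that whenever the evaluator multiplies two probabilities, the atoms involved have disjoint ancestor threshold sets; otherwise it enumerates the shared atom's four states explicitly. This relies on the no-truncation item to ensure that every shared predecessor is enumerated. The remaining steps are routine bookkeeping: finiteness of the partition, and termination without deadline.
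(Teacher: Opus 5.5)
Your proposal is correct and takes essentially the same approach as the paper's proof sketch. Both argue by induction along a topological order: agreement on body and exception states in each configuration enables the same contributions, the local contrary-gated and priority rules split the fresh thresholds $U_A,V_A$, and each shared predecessor reuses a single outcome. Your partition-of-$\Omega$ invariant, ancestor restriction, and factorization lemma make explicit what the sketch takes directly from the fragment's no-truncation and no-merging conditions. One small fix: your closed list of local splits should also include the paired reference-class rule, since E2 and E3 are classed DA/in.
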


\begin{proof}[Proof sketch.]
Order the finite dependency graph from premises to the query.  At an input
atom, same-polarity pooling followed by the shared threshold gives exactly the
four regions of Equation~\eqref{eq:masses}.  Inductively assume that the
evaluator and the world definition agree on every body atom.  They then enable
the same ground rule contributions in every configuration.  For a default
application, the induction hypothesis makes the evaluator and threshold world
agree on whether the body is usable and whether the exception condition is
positively usable;
both therefore include the rule contribution in exactly the same
configurations.  In a contrary-gated interaction, both use the shared and
auxiliary threshold regions specified in Section~\ref{sec:localrules}.  The
local priority rules partition those configurations by their defining
threshold conditions, while shared-predecessor enumeration reuses the same
predecessor outcome on every dependent path.  Hence the evaluator sums exactly
the same threshold-region measures at the head atom, and induction gives the
claim at the query.  This is exact evaluation of measures induced by the
continuous shared-threshold world definition, not enumeration of individual
threshold worlds.
\end{proof}

\subsection{Polarity symmetry}
\begin{corollary}[Polarity invariance of completed reports]
\label{cor:polarity}
Let $A$ be a closed ground atom for which dependency-aware evaluation
completes within the correspondence fragment.  If
\[
  R(A)=(s^+(A),s^-(A),c(A),g(A)),
\]
then evaluation of the opposite-polarity query satisfies
\[
  R(\neg A)=(s^-(A),s^+(A),c(A),g(A)).
\]
Consequently,
\[
  C(\neg A)=-C(A).
\]
Both orientations have the shared-threshold calculation method, are within
the fragment, and are shared-threshold partitions.
\end{corollary}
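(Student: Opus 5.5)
The plan is to reduce the corollary to Proposition~\ref{prop:dependencycorrespondence}, so that only the shared-threshold world definition has to be examined.  Whether the query is posed as $A$ or as $\neg A$, the ground program, its directed applications, its dependency graph, and its threshold variables are unchanged.  The first step is to check that the fragment conditions do not refer to query polarity.  Finiteness and acyclicity of the relevant graph concern the atom $A$ and its predecessors.  The local interaction forms and the exception-filter condition are defined per directed application.  The paper states that self-reference is determined relative to each application's own conclusion literal, independently of the polarity of the current query.  Hence if the evaluation of $A$ is within the fragment, so is that of $\neg A$.  By Proposition~\ref{prop:dependencycorrespondence}, both reports are then the region measures of the same shared-threshold worlds.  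This also gives the final sentence of the statement, that both orientations use the shared-threshold method and yield partitions.

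The second step fixes the report of a literal in terms of the regions at its atom.  I define $R(\neg A)$ to be the measures of the regions in which $\neg A$ is positively usable, negatively usable, conflicted, or ignorant.  By the world definition, ``$\neg A$ positively usable'' is precisely ``$A$ negatively usable'', and conversely, since the negative pool $b$ at $A$ consists of the contributions concluding $\neg A$.  Conflict and ignorance are conditions on the unordered pair $\{a,b\}$.  Thus, configuration by configuration, the local tuple for $\neg A$ is the tuple for $A$ with its first two entries swapped.  For ordinary opposition this is immediate from Equation~\eqref{eq:masses}, which is symmetric under exchanging $a$ and $b$ with $s^+$ and $s^-$.

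The main obstacle is the local rules of Section~\ref{sec:localrules}, whose formulas are written with a designated positive side.  These are the default-versus-ordinary rule, the equal-rank rule, which tests the positive pool by $U_A$ and the negative pool by $V_A$ ``by convention'', and strict-priority override.  I would argue that each rule is defined on the roles of the sides (contrary-gated or ordinary, and rank) and not on their polarity.  The paper explicitly states that the mirror case is symmetric.  For the equal-rank rule, swapping which of $U_A$ and $V_A$ tests which side changes no measure, because $U_A$ and $V_A$ are i.i.d.\ uniform and independent of everything else.  So the measure-preserving map exchanging $U_A$ and $V_A$ carries one convention to the other.  Strict-priority override assigns the overlap to the higher-ranked side whatever its sign.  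The mixed-case classification into ranked-default-only sides is symmetric as well.

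Summing over configurations with the same weights $P(\kappa)$ in Equation~\eqref{eq:conditionfirst} then gives $R(\neg A)=(s^-(A),s^+(A),c(A),g(A))$.  It follows that $C(\neg A)=s^-(A)-s^+(A)=-C(A)$.
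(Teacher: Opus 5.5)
Your proposal is correct and follows the paper's route: reduce to the correspondence proposition, observe that the fragment and completion conditions belong to the atom-level calculation and do not depend on query orientation, and note that reorienting the query as $\neg A$ only renames the positive-only and negative-only regions at $A$ while leaving conflict and ignorance unchanged. The step you call the main obstacle is sound but not needed in the paper's argument. The world definition always takes the positive pool to be the contributions concluding $A$, so the local rules are never re-run with $\neg A$ as the designated side. Your $U_A/V_A$ exchange and mirror-case check show something slightly stronger: even a polarity-relative evaluation, without the evaluator's canonicalization, would give the same tuple.
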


\begin{proof}
The shared-threshold world definition evaluates one pair of positive and
negative pools at the canonical ground atom $A$.  Reorienting the query as
$\neg A$ exchanges the names of the positive-only and negative-only regions
but leaves the conflict and ignorance regions unchanged.  The relevant
directed dependency graph, its acyclicity, and the completion conditions are
properties of that same atom-level calculation and are unchanged by the
presentation orientation.  The correspondence proposition transfers these
region measures and statuses to every completed within-fragment evaluator result.
\end{proof}

\paragraph{Implementation.}
The dependency-aware evaluator follows this orientation directly.  It canonicalizes
the queried ground atom, evaluates the positive orientation once, and obtains
the report for the explicitly negated query by exchanging the positive and
negative components.  Classification of a self-referential exception is made
relative to the head literal of its own directed application.
Self-reference classification is therefore independent of query orientation.
The optional report detail states whether the component-swap relation is
guaranteed.

The correspondence proposition is stated for a finite acyclic dependency
graph.  The reciprocal priority-zero case evaluated in
Section~\ref{sec:defaultcalcs} is a separately defined and independently tested
cyclic extension.  It is not covered by the acyclic correspondence
proposition.

\section{Computing \GK{} reports}
\label{sec:implementation}

The reference models define the target quantities.  \GK{} constructs a
retained-proof result from answer-proof histories and, where required, attempts
a dependency-aware evaluation from the clausified knowledge base.
Figure~\ref{fig:confidencepipeline} shows the two paths and the rule for
selecting the report.

\begin{figure}[htbp]
\centering
\begin{tikzpicture}[
  pipeline box/.style={gkbox, font=\footnotesize, text width=2.45cm,
    inner sep=3pt, minimum height=.85cm},
  pipeline process/.style={pipeline box, fill=gkblue!7, draw=gkblue},
  pipeline result/.style={pipeline box, fill=gkteal!10, draw=gkteal},
  pipeline fallback/.style={pipeline box, fill=gkgray!8, draw=gkgray,
    text width=2.75cm},
  pipeline label/.style={font=\scriptsize, inner sep=1pt, text=gkgray}
]
  \node[pipeline box] (found) at (-5.1,0)
    {retained proof\\histories};
  \node[pipeline process] (replay) at (-1.7,0)
    {activation-event\\reconstruction};
  \node[pipeline process] (pool) at (1.7,0)
    {same-polarity\\proof pools};

  \node[pipeline process, text width=4.6cm] (trigger) at (-3.4,-1.35)
    {dependency-evaluation trigger\\and supported-step checks};
  \node[pipeline box, text width=2.4cm, inner xsep=2pt] (clauses) at (-5.15,-2.85)
    {clausified\\knowledge base};
  \node[pipeline process, text width=3.0cm, inner xsep=2pt] (reeval) at (-1.65,-2.85)
    {directed dependency\\traversal};
  \node[pipeline result, text width=2.75cm] (threshold) at (1.7,-2.85)
    {shared-threshold\\region calculation};

  \node[pipeline fallback] (select) at (5.1,-1.4)
    {select completed\\result};
  \node[pipeline result] (four) at (5.1,-3.25)
    {four-component\\report};
  \node[pipeline result, text width=2.8cm] (verdict) at (5.1,-4.75)
    {signed\\confidence};

  \draw[gkarrow] (found) -- (replay);
  \draw[gkarrow] (replay) -- (pool);
  \draw[gkarrow] (found) -- (trigger);
  \draw[gkarrow] (trigger) -- (reeval);
  \draw[gkarrow] (clauses) -- (reeval);
  \draw[gkarrow] (reeval) -- (threshold);
  \draw[gkarrow] (threshold) -- (select);
  \draw[gkarrow, dashed] (pool) -- node[pipeline label, above right]
    {direct or fallback} (select);
  \draw[gkarrow] (select) -- (four);
  \draw[gkarrow] (four) -- (verdict);

  \node[font=\footnotesize\bfseries, text=gkblue] at (0,1.0)
    {retained-proof path};
  \node[font=\footnotesize\bfseries, text=gkteal] at (0,-3.85)
    {dependency-aware evaluation};
\end{tikzpicture}
\caption{Post-search report construction.  Retained proof histories produce
the retained-proof result.  When a contested atom is detected, \GK{} attempts
a separate dependency-aware traversal of the clausified knowledge base.  A
completed traversal replaces the retained-proof result; otherwise the
retained-proof value is returned directly or as a flagged fallback.  Signed
confidence and answer placement are derived from the selected four
components.}
\label{fig:confidencepipeline}
\end{figure}

\subsection{Ground-instance event identity}
\label{sec:proofevents}

Section~\ref{sec:groundactivation} defines the activation model.  This subsection defines the event
identifiers used by \GK{}'s retained-proof calculation.  For every derived clause,
the prover records the inference step and pointers to its parents.  The
resulting DAG of an answer clause is its \emph{proof history}.  Replaying that
history recovers the ground instance of every uncertain input clause used by
the proof (Section~\ref{sec:replay}); the retained proofs of one polarity can
then be combined without treating a shared instance as independent.

When every reconstructed activation-event identifier is ground and proof replay and
inclusion--exclusion remain within their bounds, the calculation is the exact
union probability of the retained proof set.  It equals
the full activation-model value when that set covers all minimal
explanations and is otherwise a lower bound.  Non-ground identifiers or exceeded
calculation bounds instead produce the approximations described below.

\label{sec:eventidentity}

Recall from Section~\ref{sec:setting} that the \emph{activation event}
$e_{C,\theta}$ of an uncertain input clause $C$ and a ground substitution
$\theta$ for its variables holds, with probability $p_e$ equal to the
input confidence of $C$, exactly when the ground instance $C\theta$ is active,
independently of every other activation event.  Here $C$ denotes an identified
input-clause occurrence, not merely a clause formula.  Two activation events
are identical exactly when they have the same input-clause identifier and the
same resulting ground clause instance.  Grounding substitutions that produce
that same instance therefore name one event.  Syntactically identical statements entered
twice have distinct identifiers and therefore define distinct activation
events.  The pair consisting of the identified occurrence $C$ and its
resulting ground instance $C\theta$ is the \emph{activation-event identifier}.
The \emph{activation-event set} $S_D$ of a derivation $D$ is the set of
activation events of the clause instances $D$ uses.  Being a set, $S_D$ records
one event for an instance that $D$ uses twice and two events for one clause
used at two different substitutions.  The \emph{proof-availability event} of
$D$ is the joint event
\[
  A_D=\bigcap_{e\in S_D}e
\]
that every event in $S_D$ holds; the derivation is available exactly on
$A_D$, so
\begin{equation}
  P(A_D)=\prod_{e\in S_D} p_e.
  \label{eq:proofproduct}
\end{equation}

This convention matches the grounding of a probabilistic clause into separate
probabilistic facts in \ProbLog~\cite{deraedt2007,fierens2015}.  Uncertainty
shared across all ground instances of a rule can instead be represented by one
uncertain auxiliary premise.

Counting once per ground instance is a choice among three conventions, and they
differ observably.  Suppose a rule has input confidence $0.8$.  If a proof needs two
different ground instances of it, the per-instance product is $0.8^2=0.64$; if
it needs one instance whose conclusion is used in two branches that join again
later, that instance is a single event and the product is $0.8$.  Multiplying
once per use of the clause would give $0.64$ in both cases, and multiplying
once per input clause would give $0.8$ in both.
Proposition~\ref{prop:countingorder} states that the three conventions are
ordered.

\subsection{Combining retained proofs}
\label{sec:proofunion}

For retained proofs $D_1,\ldots,D_m$ of the same answer and polarity, the
retained-proof value is the probability that at least one proof-availability
event holds:
\begin{equation}
 P\left(\bigcup_{i=1}^{m}A_{D_i}\right)
 =\sum_{\emptyset\ne I\subseteq\{1,\ldots,m\}}
   (-1)^{|I|+1}
   \prod_{e\in\cup_{i\in I}S_{D_i}}p_e.
 \label{eq:ie}
\end{equation}
The union inside each term counts a shared event once, which is what makes the
formula sensitive to overlap.  Four consequences follow directly: two proofs
with the same event set contribute as one; a proof whose event set contains
another's adds nothing and can be dropped; proofs with disjoint event sets
combine by noisy-or; and partially overlapping proofs fall between the largest
single proof value and that noisy-or value.

\begin{corollary}[Proof-union invariance]
\label{cor:proofunioninvariance}
For a fixed retained proof set whose activation-event identifiers are ground and
within the inclusion--exclusion bound, Equation~\eqref{eq:ie} is invariant
under proof ordering, systematic variable renaming, and duplication of a proof
with the same activation-event set.
\end{corollary}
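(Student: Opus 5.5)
The approach is to show that the right-hand side of Equation~\eqref{eq:ie} depends only on the \emph{set} of distinct activation-event sets $\{S_{D_1},\ldots,S_{D_m}\}$, since $P(A_D)$ is a function of $S_D$ alone. The three invariances then follow because each operation leaves that set unchanged.

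The first step establishes that the left-hand side $P(\bigcup_i A_{D_i})$ equals the right-hand side. Under the independence of distinct activation events from Section~\ref{sec:eventidentity}, $P(\bigcap_{i\in I}A_{D_i})=P(\bigcap_{e\in\cup_{i\in I}S_{D_i}}e)=\prod_{e\in\cup_{i\in I}S_{D_i}}p_e$. Equation~\eqref{eq:ie} is then ordinary inclusion--exclusion for a finite union, and this identity holds whenever the identifiers are ground and the sum is actually computed, which is the hypothesis of the bound. It is therefore enough to argue invariance for the union event $\bigcup_i A_{D_i}$ itself.

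The remaining steps take the three operations in turn.
\begin{itemize}
\item \emph{Ordering.} Permuting $D_1,\ldots,D_m$ permutes the family of sets being united, and a union does not depend on the order of its members. One could also note that the map $I\mapsto\pi(I)$ is a bijection on nonempty index sets that preserves $|I|$ and $\bigcup_{i\in I}S_{D_i}$.
\item \emph{Duplication.} If $S_{D_{m+1}}=S_{D_j}$, then $A_{D_{m+1}}=A_{D_j}$, and adding an identical event to a union changes nothing.
\item \emph{Renaming.} This is the step where I expect the main obstacle, because it concerns representation rather than algebra. I would argue that an activation-event identifier is the pair consisting of the clause-occurrence identifier and the \emph{ground} instance $C\theta$. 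A systematic renaming of variables in a proof history changes only non-ground intermediate clauses. Proof replay must reconstruct the same ground instances, because the grounding substitutions compose with the renaming to yield identical ground terms. Hence each $S_{D_i}$ is unchanged. The hypothesis that the identifiers are ground is exactly what excludes residual variables, whose names could otherwise differ.
\end{itemize}

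Finally, I would remark that the inclusion--exclusion bound depends only on $m$ or on the number of distinct sets. I would make sure that removing duplicates, or reordering, does not push a within-bound computation outside the bound. If the bound is stated in terms of $m$, then duplication could exceed it, and the corollary should then be read as applying when both proof sets are within the bound, as the statement's hypothesis says.
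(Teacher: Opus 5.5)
Your proposal is correct and follows essentially the same route as the paper. Ordering only permutes the indexed union, renaming leaves the reconstructed ground instances (and hence each $S_{D_i}$) unchanged, and duplicating a proof with the same activation-event set is absorbed because union is idempotent.
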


\begin{proof}
Proof ordering only permutes the indexed union.  Systematic variable renaming
does not change the ground clause instances named by the reconstructed events.
Finally, union is idempotent, so duplicating an identical activation-event set
does not change the event whose probability is computed.
\end{proof}

Figure~\ref{fig:proofpooling} shows two proofs with one shared activation
event.  The diagram carries information that a pair of already pooled scalar
values does not.

\begin{figure}[htbp]
\centering
\vspace{0.35\baselineskip}
\begin{tikzpicture}[
  event box/.style={gkbox, font=\footnotesize, text width=2.7cm,
    inner xsep=4pt, inner ysep=5pt, minimum height=1.05cm},
  private box/.style={event box, fill=gkblue!10, draw=gkblue},
  proof box/.style={event box, fill=gkteal!10, draw=gkteal}
]
  \node[event box, fill=gkpurple!10, draw=gkpurple] (shared)
    at (0,1.35) {shared event $e_0$\\activation probability $p_0$};
  \node[private box] (left) at (-3.0,-0.45)
    {proof-specific event $e_1$\\activation probability $p_1$};
  \node[private box] (right) at (3.0,-0.45)
    {proof-specific event $e_2$\\activation probability $p_2$};
  \node[proof box] (d1) at (-3.0,-2.25)
    {proof $D_1$\\$S_{D_1}=\{e_0,e_1\}$};
  \node[proof box] (d2) at (3.0,-2.25)
    {proof $D_2$\\$S_{D_2}=\{e_0,e_2\}$};

  \draw[gkarrow, draw=gkpurple] (shared.south) -- (left.north);
  \draw[gkarrow, draw=gkpurple] (shared.south) -- (right.north);
  \draw[gkarrow, draw=gkblue] (left) -- (d1);
  \draw[gkarrow, draw=gkblue] (right) -- (d2);

  \node[gkfaded, font=\footnotesize, text width=9.3cm,
    inner sep=3pt] (formula) at (0,-3.95)
    {$P(A_{D_1}\cup A_{D_2})=p_0p_1+p_0p_2-p_0p_1p_2$\\[2pt]
     shared $e_0$ is counted once in the intersection};
\end{tikzpicture}
\caption{Provenance-aware same-polarity pooling.  The two proofs are neither
independent nor duplicates: they share $e_0$ and have distinct remaining
events.  Retaining the event sets permits exact union calculation within the
inclusion--exclusion bound; maximum and noisy-or use only the two scalar proof
values.}
\label{fig:proofpooling}
\end{figure}
\FloatBarrier

The overlap rule in \cite{tammet2021} interpolates between maximum and
noisy-or using the fraction of shared input clauses and a global independence
parameter.  Equation~\eqref{eq:ie} removes that parameter when ground
activation-event identities can be reconstructed.  When all event identifiers are
ground and the inclusion--exclusion bound is respected, it is exact for the
retained proof set under the activation model.  The calculation does not
include derivations omitted by the bounded search.  It equals the full
activation-model query value when the retained proofs cover all relevant
minimal explanations.  This is the quantity a probabilistic logic programming
system obtains by compiling every explanation~\cite{fierens2015}.  Coverage
holds in the small
examples of this paper; Section~\ref{sec:differences} gives a recursive example
where the gap is instead the dominant effect.

For two proofs $D_1$ and $D_2$, let
$h=\prod_{e\in S_{D_1}\cap S_{D_2}}p_e$ be the probability that all their
shared events are active, with $h=1$ when they share no event.
Inclusion--exclusion reduces to
\begin{equation}
  P(A_{D_1}\cup A_{D_2})
  =P(A_{D_1})+P(A_{D_2})-\frac{P(A_{D_1})P(A_{D_2})}{h}.
  \label{eq:measuredoverlap}
\end{equation}
The value ranges from noisy-or at $h=1$ to the maximum when one
activation-event set contains the other.  Unlike the shared-item-count interpolation of
\cite{tammet2021},
$h$ is the joint activation probability of the common events.  Equation
\eqref{eq:measuredoverlap} must not be folded over three or more proofs because
the intermediate scalar does not retain event identity; Equation~\eqref{eq:ie} is the
set-level generalization.

\subsection{Reconstructing ground instances}
\label{sec:replay}

The activation events of a derivation are not recorded during search:
resolution works on non-ground clauses, and a derived clause records its
history, not the substitutions that produced it.  The activation events are
reconstructed afterwards.  The
implementation walks the history DAG of the answer, renames variables apart at
the input leaves, re-executes each recorded resolution or factoring step, and
checks the clause it obtains against the clause the search stored at that node.
The substitutions accumulated along the way name the ground instance used at
each uncertain leaf, and those instances are the activation-event identifiers.
Reconstructing them after proof search keeps ground-instance records off the
derived clauses used by the search.

Replay does not always yield ground substitutions.  A non-ground identifier
represents several possible ground instances.

Within one proof, uses of the same identified clause occurrence whose
identifiers remain mutually unifiable are merged into a single event.  This
treats every pair of uses that can denote the same ground instance as denoting
one instance.  Requiring one fixed member of the represented family is
sufficient but not necessary for the proof to be available, which gives a
lower bound for one proof.

Across several proofs, the current implementation treats unresolved
identifiers as proof-specific.  Their possible ground-instance families may
overlap in unknown ways, so combining them as independent proof-specific
events is neither exact nor guaranteed to be a lower bound on the proof
union.  Such a result is therefore a retained-proof approximation.

Exact union requires finite grounding of the relevant identifiers and
recovery of their actual overlap.  Without that grounding, the guaranteed
lower bound is the maximum of the individual proof lower bounds.

Replay currently covers resolution, factoring, and the supported unary inference-history
steps.  An unsupported equality or rewriting history, a verification failure,
or a cap returns the search-time product that counts every uncertain clause
use separately.  This fallback is
conservative in the following sense.

\begin{proposition}[Counting order]
\label{prop:countingorder}
For one proof with event activation probabilities in $[0,1]$, let $p_u$ multiply a factor for
every uncertain use, let $p_i$ multiply once per distinct ground instance, and
let $p_c$ multiply once per distinct input clause.  Then
\[
       p_u\le p_i\le p_c.
\]
\end{proposition}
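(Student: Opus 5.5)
The plan is to compare the three products factor by factor through two coarsening maps, and to use only the fact that every factor lies in $[0,1]$. Fix the proof $D$ and let $u_1,\dots,u_n$ be its uncertain clause uses, each tagged with its identified input-clause occurrence and its reconstructed ground instance. Define two surjections. The map $\phi$ sends each use to its activation-event identifier (clause occurrence, ground instance), giving the set $S_D$. The map $\psi$ sends each activation event to its clause occurrence. All uses mapping to one event carry the same confidence $p_e$, since the event's probability is the input confidence of its clause. Likewise, all events mapping to one clause $C$ carry the common value $p_C$. So
\[
 p_u=\prod_{e\in S_D}p_e^{\,|\phi^{-1}(e)|},\qquad
 p_i=\prod_{e\in S_D}p_e,\qquad
 p_c=\prod_{C\in\psi(S_D)}p_C,
\]
and also $p_i=\prod_{C\in\psi(S_D)}p_C^{\,|\psi^{-1}(C)|}$.

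The two inequalities then follow from one elementary lemma: for $x\in[0,1]$ and an integer $k\ge1$, $x^k\le x$. This holds because $x^{k-1}\le1$. Each fibre of $\phi$ and of $\psi$ is nonempty because the maps are surjective. So $p_e^{|\phi^{-1}(e)|}\le p_e$ for every $e$, and $p_C^{|\psi^{-1}(C)|}\le p_C$ for every $C$. Multiplying nonnegative factors preserves these inequalities termwise, which gives $p_u\le p_i$ and $p_i\le p_c$.

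The main obstacle is definitional rather than computational. The argument needs the three counting conventions to be well defined on the same proof, and it needs $\phi$ to be a genuine function refined by $\psi$. For this I would rely on the event-identity convention of Section~\ref{sec:eventidentity}: events are indexed by the identified occurrence together with its ground instance, so a single event never mixes two clauses. When replay leaves identifiers non-ground, the merging rule of Section~\ref{sec:replay} still assigns each use to exactly one (possibly non-ground) identifier, and $\psi$ remains well defined, so the same computation applies. Certain clauses have confidence one and are never uncertain; they contribute factor one to all three products and can be ignored. The order is not strict in general: equality holds whenever no instance is used twice, respectively no clause is used at two instances, as in the $0.8$ versus $0.64$ illustration.
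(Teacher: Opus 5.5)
Your proof is correct and follows the paper's own argument. The paper also observes that passing from uses to ground instances, and then from instances to clauses, only merges factors lying in $[0,1]$, so the product cannot decrease; your surjections $\phi,\psi$ and the lemma $x^k\le x$ make that merging step explicit (the aside on non-ground identifiers is not needed, since the statement concerns distinct ground instances).
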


\begin{proof}
Passing from uses to instances merges equal uses.  Passing from instances to
clauses merges all instances of one clause.  Every merge removes one or more
factors in $[0,1]$ without changing the remaining factors, so it cannot
decrease the product.
\end{proof}

\GK{} is written in C.  The proof search, parsing, indexing, shared-memory
database, resolution, and equality machinery are inherited from \textsc{gkc}.
The uncertainty code records proof histories and dependencies, reconstructs
activation events after search, combines same-polarity proofs, performs recursive
blocker searches, and runs the separate directed clause traversal used to
build the shared-threshold report.

\subsection{Directed search for opposing support}
\label{sec:premisesearch}

The shared-threshold calculation needs all relevant directed positive and
negative derivations at each contested ground atom, including their shared
dependency sets.  Retained answer proofs identify where this evaluation
is needed but do not in general contain the opposing derivations.  \GK{}
therefore reconstructs a bounded query-relevant dependency graph by a separate
report-time traversal of the clausified knowledge base.  Only a completed
traversal replaces the retained-proof result.  An open premise or resource
bound causes a flagged retained-proof fallback; partial dependency-aware
results are not combined with retained-proof results.

A ground atom is \emph{potentially two-sided} when clauses retrieved from the
conclusion index contain
eligible conclusions of both polarities that unify with it.  Premise opposition is not
inferred analytically from the retained answer
proof.  After the main prover has produced an answer, report construction first
uses that proof only to test whether its inference steps are in the supported
fragment and to collect the ground query atom and the ground body atoms used
by its retained inference steps.  If any collected atom is potentially
two-sided, the \emph{dependency-evaluation trigger} starts a second,
report-time search at the ground answer atom.  Otherwise the report uses the
answer-level proof pools directly.
This is a syntactic test.  If it does not fire, \GK{} uses the direct
retained-proof result; this does not prove that no opposing derivation lies
outside the paths reached by the test.  Section~\ref{sec:limitations} lists
known cases in which the test misses an opposing derivation.

For each ground target literal, the second search scans the clausified
knowledge-base clauses, excluding query goals.  It matches each eligible
conclusion to the target without instantiating the target, applies the
resulting substitution to the rule body, and recursively evaluates the required polarity of every body
literal.  Both positive and negative derivations are collected: the traversal
enumerates all matching directed derivations reached before an implementation
cap or deadline, rather than stopping after a first-proof
refutation search.  The subsequent numerical stage enumerates configurations
specifying which directed derivations are present, pools each polarity,
preserves dependencies shared by the positive and negative pools, and applies
Equation~\eqref{eq:masses}.  The retained
answer proof supplies neither these clause matches nor their substitutions.

\paragraph{Directionality.}
An \emph{eligible conclusion} is a clause literal that the orientation policy
permits as the conclusion of a \emph{directed application}.  Write
$\overline L$ for the complementary literal: $\overline A=\neg A$ and
$\overline{\neg A}=A$.  For a ground
clause $L_1\lor\cdots\lor L_n$ and selected conclusion $L_i$, that application
is
\[
 \overline L_1\land\cdots\land\overline L_{i-1}\land
 \overline L_{i+1}\land\cdots\land\overline L_n
 \Rightarrow L_i.
\]
A unit clause contributes directly to its sole literal.
The eligible conclusion is recovered from syntax.  A fact concludes itself.
For a clausified default, blocker metadata records the authored head, which is
used as the eligible conclusion.  For a non-default clause without
authored-head metadata, positive literals are eligible conclusions.  An
all-negative clause without blocker metadata uses the deterministic guard,
source-head, and stored-order fallback specified in
Appendix~\ref{sec:gksyntax}.
For rule-form input, authored-head metadata is used in addition to the
syntactic tests.  Thus the rule
\[
  \neg q(x)\leftarrow\neg p(x)
\]
is read in its authored direction, with $\neg q(x)$ as its conclusion, even
though clausification produces an all-negative clause.  The metadata is
additive: it preserves authored rule direction without forcing an arbitrary
orientation on genuine disjunctions.  Direction is less stable for
equivalences and other non-rule formulas that clausify into
several products: there may be no single authored head for every product, and
the stored-literal fallback is order-dependent.  Classically equivalent source formulas
can therefore yield different quantitative reports after different
clausifications or orientations.  Directed quantitative meaning is attached
to the authored or selected applications, not to classical equivalence alone.

Only the selected head direction is used.  In particular,
$A\Rightarrow B$ can support $B$ from $A$ but does not support $\neg A$ from
$\neg B$.  This is motivated by the quantitative reading of a conditional: a
rule confidence attached to the forward reading does not determine the rule
confidence of its contrapositive without further information such as base
rates.  \GK{} applies this directed reading to all rules, including
confidence-one rules.  For a confidence-one rule intended as a strict
implication, classical contraposition would be sound; \GK{} currently does
not distinguish that case from a defeasible directed rule
(Section~\ref{sec:limitations}).

\paragraph{Why premise opposition uses a separate traversal.}
The existing exception mechanism asks an author-designated question of the
form ``can this blocker be derived?''  It therefore launches nested resolution
searches, including searches that test whether a blocker proof is itself
defeated, and gives deeper checks progressively smaller portions of the time
budget.  Proof existence under the full prover is the quantity that a blocker
check requests, so this strategy matches its operational semantics even though
its result can depend on search strategy and available time.

Premise opposition requires all directed derivations contributing to
positive and negative support at each reachable atom, together with their
ground dependencies and shared predecessors.  A single general
first-order refutation would return existence and one proof, not the two pools needed by
the arithmetic.  Repeating it for every retained body atom would require
retaining and replaying every found proof before same-polarity pooling and
conditioning on dependencies shared by the positive and negative support
pools; otherwise shared evidence would again be
double-counted.  It would also introduce contrapositives and hence compute a
classical relation different from the directed relation above, unless both
sides of every atom were redesigned accordingly.

The current evaluator therefore performs one recursive clause traversal rooted
at the answer, reaching relevant premises as body targets rather than spawning
a full prover run for each one.  Its fixed depth, width, and joint caps make its
coverage less dependent on proof-search strategy.  A report deadline still
limits the traversal and is flagged when reached.  Replacing this traversal
by diminishing-budget prover calls would thus change both the computed
relation and the information available for pooling, not merely the algorithm
used to obtain the same value.

\subsection{Reference calculations and result selection}
\label{sec:reporting}
\GK{} does not sample either reference model.  For Model~1 it computes the
probability of the union of the retained proof-availability events by
inclusion--exclusion over the ground instances those proofs use
(Section~\ref{sec:proofevents}); it does not enumerate activation worlds or
all derivations available in the activated clause set.  When all
activation-event identifiers are ground and proof replay succeeds, its value
equals the full model value if the retained proofs cover all minimal
explanations and is otherwise a lower bound.  When activation-event
identifiers remain non-ground, the across-proof treatment is an
approximation; the maximum of the individual proof lower bounds remains a
guaranteed lower bound (Section~\ref{sec:proofunion}).

For Model~2, \GK{} first performs a bounded backward search for directed
derivations and then computes the resulting configuration and
threshold-region measures by finite summation
(Sections~\ref{sec:sharedthreshold} and~\ref{sec:premisesearch}).

Neither calculation grounds the knowledge base: the first sees only the
finitely many ground instances its proofs actually used, and the second
instantiates top-down from the ground query atom by matching, so it never
enumerates substitutions for a variable that the match left unbound.  This
keeps function symbols and large constant domains usable.  It is also why an
unbound body variable makes the second evaluation return no dependency-aware
result: the evaluator does not expand the unbound variable
(Section~\ref{sec:premisesearch}).

Each calculation is exact for the retained proofs or directed derivations
actually collected only when its replay, union, traversal, and enumeration
bounds are not exceeded.  Otherwise the first may use a deterministic
approximation and the second returns the retained-proof fallback.

The two reference semantics are kept apart because they draw different
random variables: Model~1 draws clause activations, Model~2 draws atom
thresholds, and no experiment draws both.  Each reference-model quantity is a probability in
its own construction.  A direct retained-proof result, fallback, or
approximation is a status-qualified calculation over the proof information
actually retained and need not equal the full reference-model value.  The paper
makes no joint statement over the two.  On a
one-sided derivation that uses each uncertain instance once, both semantics
return the same query value,
which is why the one-sided cases of Section~\ref{sec:example} do not separate
them.

Exception checking and report-time dependency evaluation are separate
computations.  Exception checking applies the limit of Section~\ref{sec:exceptionchecking} to
individual returned proofs and recursively checks each proof whose score
reaches the limit, using diminishing time budgets.  Report-time dependency
evaluation instead computes the region in which the exception atom is
positively usable, conditional on the rule body.  It
then calculates the effect on the affected application.  If this calculation
cannot be completed, \GK{} returns a flagged retained-proof fallback.  The
retained-proof value may itself be classified as an approximation when replay,
grounding, or proof-union conditions are not satisfied.
Compatibility mode uses the operational exception check when accepting or
rejecting candidate answers.

The user-visible decisions are distinct.  Proof search first produces answer
candidates.  In compatibility mode every retained proof is checked
recursively; one proof that is not disabled by a nested exception is
sufficient to accept the candidate.  A candidate whose proofs are all
disabled is rejected.  In dependency-aware reporting, $C(L)>0$ places $L$
in the positive-answer list, $C(L)<0$ places the opposite-polarity literal in
the negative-answer list, and $C(L)=0$ gives no directional answer.  A fallback or approximation retains this placement
convention, but the reported number carries a status qualifier and does not
represent a completed shared-threshold decision.

\subsection{Report status and limits}
\label{sec:reportstatus}

\paragraph{Fields emitted by \GK{}.}
The optional detail object contains three implementation fields:
\begin{itemize}\sloppy
\item \texttt{calculation} identifies the selected calculation:
\texttt{canonical\_atom}, \texttt{blocked\_flat}, \texttt{proof\_fallback},
or \texttt{flat};
\item \texttt{coverage\_status} records the implementation's coverage
assessment: \texttt{complete}, \texttt{incomplete}, or
\texttt{unsupported\_fragment};
\item \texttt{polarity\_status} records whether the result was calculated
once for a canonical positive atom, so that querying its explicit negation
exchanges the positive and negative components while leaving conflict and
ignorance unchanged: \texttt{guaranteed} or \texttt{not\_guaranteed}.
\end{itemize}

\begin{center}
\scriptsize
\begin{tabular}{L{2.8cm}L{5.4cm}L{5.8cm}}
\toprule
\texttt{calculation} value & Computation & Paper-level interpretation \\
\midrule
\texttt{canonical\_atom}
& atom-centred dependency-aware evaluation
& completed shared-threshold result only when the fragment conditions also hold \\
\texttt{blocked\_flat}
& blocked/unblocked calculation for one share-free blocked proof
& query-directed local support decomposition using the Model~2 default
operation; outside the atom-level correspondence fragment \\
\texttt{proof\_fallback}
& opposition resolution over retained proof pools after dependency-aware evaluation failed
& flagged retained-proof fallback \\
\texttt{flat}
& direct retained-proof calculation
& proof-pool decomposition rather than an atom-level partition \\
\bottomrule
\end{tabular}
\end{center}

\texttt{coverage\_status=complete} means that the selected calculation
completed without a detected operational failure.  It does not by
itself establish that the query satisfies the correspondence fragment.
\texttt{polarity\_status=guaranteed} states only that the result has this
canonical polarity orientation.

\paragraph{Classifications used in this paper.}
The paper additionally uses three paper-level classifications.  They are not
emitted fields.  The \emph{calculation method} records which calculation
produced the tuple: shared-threshold result, local blocked/unblocked result,
direct retained-proof result, retained-proof fallback, or retained-proof
approximation.  The shared-threshold method used the
dependency-aware evaluator.  The direct retained-proof method is used when the
syntactic test finds no contested ground atom.  The
fallback method was selected because dependency traversal, shared-predecessor
construction, configuration enumeration, or a deadline prevented dependency-aware
evaluation from completing.  The approximation method returned a retained-proof value after
a replay failure, non-ground activation-event identifier, or proof-union cap.
The local blocked/unblocked method is selected when the retained positive
answer has a stored exception strength, has one share-free blocked proof, and
the local calculation accepts its ground exception pattern.  Atom-centred
evaluation is used when the exception shares a predecessor with another
contribution or requires derived counterevidence.  If that evaluation is
incomplete, the local method is selected only when its calculation completes.

The separate \emph{fragment status} states whether the exactness conditions
for the selected calculation hold.  The meaning of \texttt{in} and
\texttt{out} depends on the method code.  A factoring-derived answer can have an exact direct
retained-proof tuple while remaining outside the shared-threshold comparison
fragment.  A nonmonotonic cycle can return a flagged retained-proof fallback
while lying outside the fragment for completed shared-threshold evaluation.

\paragraph{Meaning of the four reported numbers.}
The third classification is the interpretation of the four displayed
numbers.  A completed shared-threshold result is an atom-level partition: its
four fields are the probabilities of the positive-only, negative-only,
conflict, and ignorance regions of one ground atom.  A direct retained-proof
result or flagged retained-proof fallback is a four-component decomposition
of the two proof-pool marginals as stated in Section~\ref{sec:twomodels};
its four fields sum to one but do not form one polarity-invariant atom-level
partition.  An outside-fragment numerical result is reported without an
atom-level partition interpretation.  This classification is distinct from
\texttt{polarity\_status}: a tuple can be presented in a canonical polarity
orientation without thereby becoming a completed semantic partition.
A \texttt{BL/out} tuple accounts for the blocked and unblocked local cases,
but remains a query-directed support
decomposition rather than an atom-level partition.

For \texttt{RP} and \texttt{FB}, the retained-proof fragment means that
proof replay succeeds, every activation-event identifier is ground, and the
union calculation remains within its bounds.  The resulting value is exact
for the fixed retained proof set.  Equality with the full Model~1 value
additionally requires that this set cover every relevant minimal explanation.
\texttt{AP} marks failure of the retained-set exactness conditions.  The
\texttt{in}/\texttt{out} suffixes do not name one common fragment; their
meaning is given for each code below.  In particular, \texttt{RP/out} can be
exact for its retained proof set while a known exclusion, such as factoring,
prevents the comparison claimed for \texttt{DA/in}.

Cause-specific flags continue to report detected conditions such as open
premises, replay failure, traversal or enumeration limits, and deadlines.  A
consolidated \texttt{status\_causes} list is not yet emitted.  Result tables
use these paper-level method and status codes:

\begin{center}
\scriptsize
\begin{tabular}{L{1.5cm}L{6.2cm}L{5.2cm}}
\toprule
Code & Calculation & Fragment status \\
\midrule
\texttt{DA/in} & dependency-aware shared-threshold result & within the correspondence fragment \\
\texttt{RP/in} & direct retained-proof result & exact for the retained proof set \\
\texttt{FB/in} & flagged retained-proof fallback & exact for the retained proof set; detected dependency-aware failure \\
\texttt{BL/out} & local blocked/unblocked result & complete local calculation; outside atom-level correspondence \\
\texttt{AP/out} & retained-proof approximation & replay failure, non-ground identifier, or proof-union limit \\
\texttt{RP/out} & direct retained-proof result & outside the ST fragment: factoring or another known exclusion \\
\texttt{FB/out} & flagged retained-proof fallback & ST calculation unsupported or incomplete \\
\bottomrule
\end{tabular}
\end{center}

These paper-level classifications are not all emitted by the implementation
evaluated here;
Example~12, for example, is \texttt{RP/out} in the paper because its direct
retained-proof tuple depends on factoring, although the evaluator does not
detect that exclusion.

\paragraph{Implementation limits.}
The numeric caps below are implementation safeguards, not values derived from
the semantics, and the paper does not claim that they are optimal.  Replay
limits cause a retained-proof fallback with a replay cause.  Directed
traversal, derivation, body, and joint-enumeration limits cause a flagged
retained-proof fallback when detected.  Partial dependency-aware results are
not combined with retained-proof results.  Report deadlines are user-adjustable and
flagged.  The proof-union cap currently produces only a warning, and
dependency-set overflow can remain silent; both are listed as limitations
below.

The exact proof union uses bit masks for at most 64 distinct events and full
inclusion--exclusion for at most 20 reduced activation-event sets.  Above the proof
limit it uses a deterministic sequential approximation and emits a warning.  Proof replay
is bounded at 3,000 nodes, depth 200, 64 uncertain leaf uses, and 32 literals
per replay clause.  The directed-evaluation bounds are:

\begin{center}
\scriptsize
\begin{tabular}{L{3.0cm}L{5.5cm}L{1.8cm}L{3.4cm}}
\toprule
Stage & Bounded object & Limit & Result when exceeded \\
\midrule
derivation collection & matching directed derivations & 24 per polarity
& flagged retained-proof fallback \\
presence enumeration & collected derivations & 16 total
& flagged retained-proof fallback \\
body construction & literals in one directed rule body & 8
& flagged retained-proof fallback \\
dependency tracking & predicate-symbol/arity pairs, each associated with one representative ground atom & 32
& may select a retained-proof result without a cause flag \\
recursive traversal & nested target atoms & depth 16
& flagged retained-proof fallback \\
joint calculation & shared predecessor ground atoms & 6, hence at most $4^6$ assignments
& flagged retained-proof fallback \\
report timing & one answer / whole report & 400 / 1,600 ms
& flagged retained-proof fallback \\
\bottomrule
\end{tabular}
\end{center}

The implementation uses a coarse dependency index to detect shared
dependencies and select ground atoms for joint enumeration.  Each
predicate-symbol and arity key stores one representative ground atom, so
distinct ground dependencies with the same key can be merged.  Such a query is outside
the correspondence fragment and may not receive a cause flag.  These are
implementation bounds rather than semantic parameters.

Proof replay is linear in the retained proof-history DAG within its bounds, but the
premise-opposition traversal is not linear in that DAG and does not currently
memoize intermediate results.
It scans the input clause list for each recursive target, so with branching
$b$ it can grow on the order of $b^d$ up to depth $d=16$ before its caps and
deadline intervene.  Two further explicit exponential operations are bounded:
$2^m$ inclusion--exclusion terms for $m\le20$ reduced activation-event sets and at
most $4^k$ joint assignments for $k\le6$ shared predecessor atoms.  Activation
events are reconstructed after proof search, so clauses carried by the main resolution
search do not grow with uncertainty metadata.  The mechanism therefore avoids
global grounding and changes neither the prover's search clauses nor its
indexes, at the cost of a separate bounded report-time traversal.

If dependency-aware evaluation encounters an unsupported derivation form, an
open premise, a detected implementation limit, or a report deadline, \GK{}
returns the retained-proof components and marks \texttt{PROOF\_FALLBACK} with
cause flags.  A completed shared-threshold evaluation can legitimately return
zero; a zero caused by cancellation reports the contested atom responsible.
The distinction
allows a caller to decide whether a flagged estimate is acceptable.  It does
not imply that every limitation is signaled: non-ground answer
instances, a missed dependency-evaluation test, and dependency-set overflow
can still select the retained-proof calculation
silently, as discussed in Section~\ref{sec:limitations}.

The current treatment of open premises is conservative.  During dependency-aware evaluation,
a premise that remains non-ground is assigned no usable state and normally
causes a retained-proof fallback.
The diagnostic \texttt{-dwopen} mode tries substitutions drawn from the known
constants, subject to a fixed bound.  Cycles behave as defined in Section~\ref{sec:cycles}:
cycles through exception conditions at the queried atom use the query-relative
credulous policy, self-blocking
defaults contribute nothing, and recursion through a contested atom falls back with
flags; the independent reference implementation of Section~\ref{sec:method}
reports all nonmonotonic cycles as outside its fragment.

\section{Evaluation design}
\label{sec:method}

\subsection{Evaluation questions}

The evaluation separates analytic checks, stochastic checks, system
comparisons, and proof-search capability.  It addresses four questions:
\begin{enumerate}
\item Does the retained-proof calculation reproduce analytic proof-union
probabilities when the retained proofs cover all relevant minimal
explanations?
\item Does the dependency-aware calculation reproduce the shared-threshold
semantics for opposition and uncertain exceptions?
\item Do the implementation's reports and status fields identify completed
calculations, fallbacks, and unsupported cases?
\item On exact common fragments, do \GK{} and the compared systems return
corresponding outputs?
\end{enumerate}
Two secondary expectations accompany these questions: the per-use,
per-clause, and shared-item-count approximations should show larger errors on
the discriminating examples, and undercutting one rule application should
outperform discounting the pooled conclusion under the stated synthetic
decision model.  Agreement is required only for completed E-labeled
quantities; N-labeled nonmonotonic cases may produce different native outputs
because the systems' possible-world or stable-model constructions differ, and
A-, P-, and U-labeled cases identify translation differences, output
differences, or unsupported encodings.

\subsection{Reference programs and samplers}
\label{sec:samplers}

There are two sampler families: the clause-activation sampler (CA) and the
shared-threshold sampler (ST).  The public repository contains both samplers,
the knowledge bases used below, and the data package for the
implemented-system comparison of
Section~\ref{sec:systemcomparison}.\footnote{Relative to the repository
root: both public sampler modes are invoked through
\path{montecarlo/gkmc.py}; shared-threshold evaluation is implemented in
\path{montecarlo/threshold_worlds.py}; the focused priority-zero check is
\path{montecarlo/test_threshold_rank0.py}; knowledge bases are under
\path{Examples/}; the comparison package is \path{comparisons/}.}  Other
numerical experiments are specified in the paper but are not claimed to be
part of that repository.

\subsubsection{Clause-activation sampler}
The clause-activation sampler implements the
grounding and activation steps of Model~1 directly.  It grounds the clausified
input over the constants occurring in it and in the query, rejecting inputs
that contain function symbols and refusing to truncate a ground set that
exceeds its cap rather than sampling a biased part of it.  Per world it keeps
each uncertain ground instance with probability equal to its clause's input
confidence.  To determine the outcome of one sampled world, the sampler runs bounded
unweighted \GK{} twice on its active ground clauses: once for the ground
query $Q$ and once for its explicit negation $\neg Q$.  These runs use the
default-aware, query-relevant derivability relation defined in
Section~\ref{sec:groundactivation}.  For completed worlds the sampler counts
query-only, negation-only, both, and neither outcomes and reports the two
derivability marginals and their signed difference.  These four cells are
activation-world derivability cells, not the conflict and ignorance regions
of shared-threshold semantics.

The reported CA runs used \texttt{-nonegative -plain} for each unweighted
world check, the default exception limit $\lambda=.5$, a 10-second limit per
query, and no additional proof-search or candidate-retention options.  Every
returned exception proof in an unweighted world has score one and therefore
passes this exception limit.  The bounded search can still omit a proof or
reach the time limit.

These checks use the ordinary bounded \GK{} prover.  The sampler therefore
does not independently decide derivability.  Each prover subprocess has a
configurable time limit, 10 seconds by default; a timeout or an omitted
proof leaves the sampled world unresolved.
Let $W$ be the requested number of sampled worlds, let $d_+$ and $d_-$ be
the numbers of worlds in which the completed checks derive $Q$ and $\neg Q$,
and let $t_+$ and $t_-$ be the corresponding timeout counts.  A semantically
interpretable run must retain these counts per row, together with the number
of worlds in which both checks completed and the denominator of every
reported value.  The conservative marginal bound is
\[
  \frac{d_+}{W}\ \leq\ P_{\mathrm{act}}(Q)\ \leq\ \frac{d_++t_+}{W}.
\]
The difference between the two derivability probabilities has the
conservative interval
\[
 P_{\mathrm{act}}(Q)-P_{\mathrm{act}}(\neg Q)\in
 \left[
 \frac{d_+-d_--t_-}{W},\
 \frac{d_++t_+-d_-}{W}
 \right].
\]
Thus a CA point estimate is evidence about Model~1 only when $t_+=t_-=0$
(or when a separate complete finite checker decides the unresolved worlds);
otherwise the interval, not a completed-world ratio, is the semantic result.
The stored aggregate underlying the 3,000-world values does
not preserve these counts.  Where shown, those CA values are therefore reported only
as bounded-prover operational outputs, not as estimates of mathematical
$\vdash^\Delta_q$-derivability.  Within its bounds the sampler can nevertheless reach
derivations that \GK{}'s report-time traversal is not designed to reach;
Section~\ref{sec:differences} uses exactly that operational gap.
The CA sampler reuses \GK{}'s prover for derivability checks.  It
independently tests activation sampling, but not proof search, cycle
handling, or recursive exception checks.

\subsubsection{Shared-threshold sampler}
The shared-threshold samplers perform the three steps of Model~2 and invoke no
prover.  Model~2 determines, per world, which region the
query atom's threshold falls in, not whether anything is provable, and once the
program is grounded and each clause read in one direction, that question is a
function of the drawn thresholds: the choice of clauses, directions, and
instantiations is fixed before sampling begins, so the outcome depends on the
drawn thresholds alone and no search is required.  The program compiles the input into ground directed rule
instances and draws the shared threshold $U_A$ for every atom.  It also draws an independent auxiliary threshold
$V_A$ for atoms participating in contrary-gated interactions.  Ordinary and
strict-priority cases use only $U_A$.  It evaluates
the atoms in dependency
order.  A finite cycle containing only positive same-polarity dependencies is
handled by a least fixpoint.  For one fixed threshold world, the sampler
repeatedly adds every fact whose support passes its threshold and every rule
conclusion whose body is already satisfied and whose pooled support passes
the applicable threshold test.  On such a component this monotone iteration
from the empty set reaches the least fixpoint; fair update order gives the
same result.  The least-fixpoint construction is not applied to opposition or
exception cycles.  Those cycles use only the named policies of
Appendix~\ref{sec:cycles}, or are reported outside the sampler's fragment.
The sampler then applies the
opposition-resolution and default-interaction arithmetic, including
strict-priority override and its restriction, the rank restriction on direct
derivations, the credulous resolution of cycles through exception conditions at the queried
atom, and per-instance evaluation of open queries, before counting the four
query-atom outcomes.  It also implements the reciprocal priority-zero rule by
distinguishing cycle-internal exception references from external exception
evidence.  Contextual priority-zero cases whose inherited priority restriction
excludes a candidate contribution are currently reported outside the sampler's
fragment rather than evaluated numerically.  On its stated fragment, the
sampler directly samples the defined shared-threshold semantics; its only
numerical approximation is Monte Carlo sampling error.  Agreement with \GK{}
is expected up to sampling precision, and a persistent disagreement
indicates an implementation or semantic discrepancy.

\subsubsection{Independent shared-threshold checker}

The public shared-threshold module independently implements the atom-level
model and does not launch \GK{}.  It reads function-free JSON-LD-Logic input
written as directional clauses or as a restricted implication with atomic
antecedents and one consequent.  The last ordinary clause literal, or the
implication consequent, is the authored conclusion.  The module normalizes
this implication form, grounds the clauses over the named constants, samples
atom thresholds, and counts the four outcomes.  It does not perform general
clausification or process other formula connectives.  Monotone same-polarity
cycles use a least fixpoint.  Reciprocal priority-zero opposition and blocker
cycles through the queried atom use their specified policies; other cycles
through contested atoms or exception conditions are not scored.  Two fixed
seeds check reproducibility; they do not establish
statistical significance.  This is the independent reference implementation
used for the general shared-threshold comparisons.

\subsection{Test suites}

\paragraph{Analytic and default cases.}
The internal report regression suite contains small knowledge bases with expected
report fields derived by hand.  The default suite with closed queries and
zero-contribution branches has 45 non-open cases.  Compatibility mode is
checked separately as a repository
regression test.  Three instance-event comparison cases separate per-use,
per-instance, and per-clause counting.

The independent sampler regression passes all eight priority-zero
cases.  They include both query orientations and cases that distinguish
reciprocal cycle members from external exception evidence.  Separately, the
\GK{} polarity-pair runner verifies the two D4-0 query orientations reported
in Section~\ref{sec:defaultcalcs}.  The mixed case within the sampler's
implemented priority-zero fragment has the analytic and independently sampled
partition $.16/.56/.24/.04$.

The shared-threshold suite contains examples Y1--Y18, two variants, and two
trace checks.  Together they compare 54 scalar frequencies with their analytic
values at 400,000 worlds per case; each example contributes between one and
four checked scalars.

Separate cases test the blocked/unblocked calculation, numerical priority, and
the paired-exception case with an additional unpaired argument.  Two decision
experiments compare the policies using binary log-score, interpreted as Kelly
growth under the stated betting setup, and cost--loss.

A final two-world case measures unopposed support and signed confidence under
shared-threshold and independent couplings and compares the opposition rules
quoted in Section~\ref{sec:sharedthreshold}.  These cases do not invoke \GK{}.

The following table lists the calculation and limitation tested by each
reported group; raw totals alone are not used as coverage evidence.
\begin{center}
\scriptsize
\begin{tabular}{L{3.2cm}L{5.2cm}L{5.1cm}}
\toprule
Tests & Defined calculation or branch & Limitation or alternative tested \\
\midrule
IE1--IE3
& ground-instance identity and proof union,
Equations~\eqref{eq:ie} and~\eqref{eq:measuredoverlap}
& per-use and per-clause counterfactual baselines \\
Default examples in Table~\ref{tab:defaultexamples}
& evaluated exception, paired class, strict priority, and shared predecessor
& rejected scalar shortcuts shown in the last column \\
Priority-zero cases
& reciprocal cycle, opposite query, certain case, one-sided undercutting case,
mixed support pool, and contextual-priority inclusion or exclusion of a contribution
& internal reciprocal exception references versus external exception evidence;
contextual priority-zero case outside the sampler's fragment \\
Examples 1--11 and 17
& retained-proof union on supported one-sided programs
& incomplete retained-proof coverage is separated from semantic agreement \\
Examples 5--7, 18--19, and 22
& ordinary opposition and premise propagation
& activation-model difference and polarity reversal \\
Examples 12--16 and 20--23
& factoring, open premises, nonmonotonic cycles, and non-ground answers
& direct, flagged-fallback, and outside-fragment statuses \\
Limit regressions
& replay, traversal, joint-enumeration, and deadline exits
& cause-specific fallback flags; silent limits remain in
Section~\ref{sec:limitations} \\
\bottomrule
\end{tabular}
\end{center}

\paragraph{Proof-overlap experiments.}
The provenance-aware proof-overlap experiment constructs two proofs from
independently sampled evidence items, with a known number of shared items.  It
compares Equation~\eqref{eq:measuredoverlap} with the earlier interpolation
that uses only the number of shared evidence items (the
\emph{shared-item-count interpolation})~\cite{tammet2021}, maximum, and
noisy-or.  A betting
variant draws the overlap and item input confidences on every round.

\paragraph{Example suite.}
The examples cover same-polarity aggregation, rule chains, answer
substitutions, social-network propagation, contradictory evidence, Bayesian
network encodings, and penguin examples~\cite{tammet2021}.  We ran \GK{} and the
shared-threshold checker for 100,000 worlds with seeds 7 and 42.  The
stored clause-activation run used 3,000 worlds and seed 7, but its per-row
completion counts and effective denominators are unavailable.  Those values
are omitted from the suite table.  The complete \GK{} source inputs are
printed in Appendix~\ref{sec:completeinputs}.
For the 100,000-world shared-threshold runs, the binomial standard error of
each component is at most approximately $.0064$ when its probability is near
one half; we use this as a
conservative per-component Monte Carlo tolerance.  Section~\ref{sec:samplers}
states the completion data required for a new CA run.

\paragraph{Repository regression tests.}
\label{sec:examplesuite}

The public repository contains a set of examples ranging from ordinary facts
and rules to defaults, priorities, arithmetic, equality, planning, and logic
generated from English.  Not all of these examples are treated as semantic experiments: many exercise
input formats or first-order prover capabilities and contain no defaults or
input confidences.  On the uncertainty-bearing supported subset, the
threshold-world sampler agrees with \GK{} within Monte Carlo error, including
exception, priority, and mutual-blocking cases.  Cases requiring function
terms, arithmetic, equality, structured context terms, compact syntax handled
only after clausification, or taxonomy-valued priorities are outside that
sampler's fragment.  Their pass/fail status belongs in the repository's
regression records.  The semantic evidence retained in this paper consists of
the fully specified, discriminating examples of Sections~\ref{sec:example}
and~\ref{sec:results}.

\paragraph{Probabilistic and causal scope cases.}
A separate standard-library suite uses exact enumeration as its reference result for
small Bayesian networks and structural causal models, with two seeded
100,000-world Monte Carlo runs.  Checked-in ProbLog encodings and a pgmpy
variable-elimination verifier provide optional independent checks.  \GK{} is
run only on derived-output encodings of forward rules and on knowledge bases in which an
intervention has already been performed by replacing the affected structural
equation.  These cases test the limits of the input-confidence interpretation;
they are not used to claim native conditioning, intervention, or
counterfactual inference.

\subsection{System comparison protocol}
\label{sec:comparisonprotocol}

The comparison uses three independent classifications.  Translation status
describes the relationship between a source case and a target-system
encoding:
\begin{description}
\item[N --- native:] the system directly accepts the construct being tested;
\item[E --- exact fragment translation:] the translation preserves the stated
interpretation on the explicitly delimited fragment;
\item[A --- analogue:] a non-equivalent encoding reproduces a selected
behavior or marginal but does not preserve the tested construct or semantics;
\item[P --- derived-output encoding:] only an already computed output
distribution or externally transformed forward model is encoded; and
\item[U --- unsupported:] no faithful translation is claimed.
\end{description}
\GK{} calculation method and fragment status are recorded separately using
the notation of Section~\ref{sec:reportstatus}.  Where a comparison table
carries an interpretation column, it states in plain terms which outputs are
being compared and whether they arise from exact translations, analogues, or
different native semantics; it is not a mathematical relation between every
cell in the row.
Numerical agreement is interpreted as semantic agreement only for completed
E-labeled cases; A- and P-labeled rows compare selected behaviors or outputs,
and U means that no corresponding encoding was used.  An operational failure
is distinct: a suitable input exists, but parsing, grounding, search, or a
resource limit prevents a completed result.
For example, after fixing a finite grounding, an uncertain activation rule
$p::H(x)\leftarrow B(x)$ can be translated for a probabilistic-fact system by
introducing a fresh probabilistic activation atom for each ground instance
$\theta$ of the source rule occurrence $R$,
\[
  p::act_{R,\theta},\qquad
  H\theta\leftarrow B\theta\land act_{R,\theta}.
\]
This is class E only for the independent ground-instance activation
interpretation of Model~1; it is not a translation of shared-threshold
opposition or of \GK{}'s recursive exception checking.  Explicit classical negation is
mapped to strong negation where available.  A fresh atom such as
\texttt{neg\_h} is documented as a distinct atom, with any coherence
constraint stated explicitly.  Negation-as-failure guards are not described
as evidence for the explicit negation.  \GK{} priorities are class U unless a
separate equivalence argument is supplied.  Conditioning, learning,
interventions, and counterfactuals unsupported by \GK{} are likewise reported
as unsupported rather than simulated by adding independent facts.

The machine-readable comparison package identifies, for each case that was
run, the input, query, command,
translation status, output in the form returned directly by the system,
expected relation, and completion status.
The compared versions are \ProbLog{} 2.2.10, PASTA 1.0.1, plingo 1.1.0,
smProbLog 2.1.0.42, TweetyProject 1.31, clingo 5.6.2, DLV 2.1.1, I-DLV
1.1.6, and s(CASP) 1.1.4.  Running smProbLog required the
syntax and invocation adjustments
documented in the comparison package; the probabilistic facts, rules, queries, and
stable-model semantics were unchanged.  Standalone clingo 5.6.2, I-DLV 1.1.6,
and plingo 1.1.0 are
treated as distinct configurations.  We report semantic agreement only when
the translation is exact on the stated fragment and both computations
complete; bounded search, timeouts, parser rejection, and grounding
exhaustion are reported as resource limits, rejected syntax, or unsupported
language features.

\subsection{First-order workloads}
The birds family contains a basic default problem, a version with recursively
nested function terms, and versions with up to 100,000 irrelevant constants.
The comparison also uses Steamroller (PUZ031+1), Dreadbury (PUZ001+2),
Lukasiewicz (LCL047-1), and four large CSR problems from TPTP
\cite{sutcliffe2017}.  These cases test whether uncertainty reporting returns reports on inputs
handled by the first-order core.

\section{Evaluation results}
\label{sec:results}

\subsection{Event identity and proof overlap}

\subsubsection{Deterministic checks}

The shared-threshold suite passed all 45 non-open analytic cases.
Table~\ref{tab:instances} shows three complete examples printed in
Appendix~\ref{sec:completeinputs}.  The per-use and per-clause columns are
counterfactual counting conventions.
The compatibility-mode outputs match their stored reference files byte for
byte, and every case in the suite terminates with a well-formed report.
These checks test regression behavior; the semantic comparisons use the
analytic and sampling results below.

\begin{table}[ht]
\centering
\scriptsize
\caption{Three examples distinguishing per-use, per-ground-instance, and
per-clause activation identity.  CA is the
bounded-prover clause-activation output from a seed-7 sample of 3,000
worlds; completion counts are unavailable.  ST is the shared-threshold sampler
(100,000 worlds, seed 7).  ST does not support the conjunctive query form in
the first row.  RP/in is a direct retained-proof result within its stated
fragment; the last two columns are counterfactual counting baselines, not
system outputs.  The per-use baseline treats each occurrence of a rule in a
proof as an independent activation.  The per-clause baseline shares one
activation across all uses and ground instances of the source clause.}
\label{tab:instances}
\begin{tabular}{L{4.2cm}crrrrr}
\toprule
Case and identity & \shortstack{Method/\\status} & \GK{} & CA & \shortstack{ST result/\\status} &
\shortstack{per-use\\baseline} & \shortstack{per-clause\\baseline} \\
\midrule
IE1: two ground instances of one $.8$ rule & RP/in & .640 & .631 & unsupported query form & .640 & .800 \\
IE2: one $.8$ instance reused in two branches & RP/in & .800 & .790 & .799 & .640 & .800 \\
IE3: $.9$ and $.85$ facts joined by a $.7$ rule & RP/in & .3427 & .343 & .342 & .3427 & .4284 \\
\bottomrule
\end{tabular}
\end{table}

The suite also checks ordinary opposition, cancellation at a premise,
structured opposition, and multiple exception conditions directly against the closed
forms of Sections~\ref{sec:sharedthreshold} and~\ref{sec:defaults}.  The
following examples compare the defined calculations with simpler scalar
approximations.

These cases also exercise identification of the ground atom at which both
polarities receive support.  Premise-conflict
reports identify the contested atom rather than only the final query; default
reports retain the exception condition and both the supporting and defeating
proofs; and flagged incomplete cases report the corresponding cause.  These diagnostics
distinguish contradiction from missing evidence and a localized undercutting
attack from general negative support for the conclusion.

\subsubsection{Provenance-aware proof overlap}

In the experiment, two four-item proofs used input confidence $.85$ and
shared $k=0,\ldots,4$ evidence items.  Equation~\eqref{eq:measuredoverlap} matched the
simulated proof-union frequency for every $k$; the largest deviation was
$.0007$.  The shared-item-count interpolation missed by as much as $.0210$.
In an asymmetric case whose shared items had input confidences $.9$ and $.75$, the
simulation and provenance-aware overlap value were both $.6642$, while the
shared-item-count interpolation returned $.7400$.

The three-proof chain example shows why the full inclusion--exclusion
calculation is needed.  Direct
inclusion--exclusion returns $.8984$ (simulation $.8995$), while folding the
two-proof formula with the second pairwise overlap returns $.8318$ and
forgetting that overlap returns $.9169$.  In the betting variant with a fresh
visible overlap each round, each proof contains four evidence items and the
number shared is uniform on $\{0,\ldots,4\}$.  Item probabilities are drawn
uniformly from $[.6,.95]$, and item outcomes are independent Bernoulli draws.
The observed event is that every item in at least one proof succeeds.  The two
forecasts are Equation~\eqref{eq:measuredoverlap} and the shared-item-count interpolation
with parameter $1-k/(8-k)$.  For forecast $q$ and outcome $y\in\{0,1\}$, the
even-odds growth is
\[
  \log_2\!\left(\frac{q^y(1-q)^{1-y}}{.5}\right).
\]
Over 100,000 rounds, the provenance-aware forecast attained $.0480$ mean bits
per round and shared-item-count interpolation attained $.0466$.

The two experiments above are separate activation-event simulations that
record the identity of every proof item; they are independent of CA world
evaluation.\footnote{\path{pooling_correspondence/corr_gk_measured_overlap.py}
supplies the reported frequencies;
\path{pooling_correspondence/decide_kelly_gk_world.py} supplies the betting
result.}  The ST implementation records only threshold-world query outcomes,
so it has no separate result for the asymmetric and three-proof cases.

C3 supplies a further same-polarity check on two retained-proof cases with a
shared premise.  \GK{} returns $.846$ and $.959$, while independent
possible-world sampling gives $.8450$ and $.9596$.  The check compares event unions independently of the \GK{}
implementation.

\subsection{Opposition and defaults}

\subsubsection{Motivating examples}
The two reference models agree on the one-sided examples of
Section~\ref{sec:example} and differ on the contested-premise example.  For
the classical penguin input,
ground-instance activation semantics (Model~1) gives signed values $+1$ for
$tweety$ and $-1$ for $pingu$, while
shared-threshold sampling gives respectively $(1,0,0,0)$ and $(0,1,0,0)$,
equal to \GK{}.  For the two positive bird facts, both reference models have values
$.72$ for $flies(tweety)$ and $.54$ for $flies(robin)$.  For the uncertain
exception, Model~1 gives signed values $-.8$ for $tweety$ and $+1$ for
$robin$; shared-threshold sampling measured $(.102,.899,0,0)$ and
$(1,0,0,0)$,
against the \GK{} values $(.1,.9,0,0)$ and $(1,0,0,0)$.  Finally, in the balanced
contested-premise example Model~1 gives $.45$, because the negative premise
does not derive the negative query, whereas shared-threshold sampling and
\GK{} both give $(0,0,0,1)$.

The \texttt{gkmc.py} sample returned values numerically close to these
simple Model~1 calculations, including $-.804$ for the uncertain-exception
$tweety$ case.  Its completion counts were not retained, so those outputs
illustrate bounded-prover operational behavior rather than establish Model~1
probabilities.

\subsubsection{Default calculations}
\label{sec:defaultcalcs}

The \GK{} and 18-example specialized shared-threshold results agree on their
common default fragment.  A fixed regression suite covers the default family;
its specialized sampler reproduces every expected output within $0.01$ at
$10^5$ draws.

The clause-activation outputs are numerically consistent with the
corresponding simple Model~1 calculations, but their missing completion
counts prevent treating them as semantic estimates.  For example, the
uncertain-exception example of Section~\ref{sec:example} has analytic
Model~1 provability $.10$ and negation provability $.90$; the sampler
returned the same rounded values.  The query for the explicit negation has
analytic signed value $+.64$ and \GK{} tuple $(0.72,0.08,0,0.2)$; the sampler
returned $+.6409$.  The clause-activation sampler counts query-only,
negation-only, both, and neither worlds.  Those cells are activation-world
derivability counts, not shared-threshold conflict and ignorance regions;
equality of the shared-threshold four-component report is therefore checked
by the shared-threshold calculation.

The specialized shared-threshold sampler shares no code with the reasoner.
The clause-activation sampler does call the unweighted reasoner to decide
derivability, but contains none of \GK's uncertainty arithmetic.  Persistent
disagreement on the stated common fragments therefore indicates an
implementation or semantic discrepancy.

A completed dependency-aware evaluation of a contested premise illustrates
the calculation:
\[
 \begin{gathered}
 p(a),\qquad .6::w_1(a),\qquad .5::w_2(a),\\
 bird(x)\leftarrow p(x),\qquad
 \neg bird(x)\leftarrow w_1(x)\land w_2(x),\qquad
 canfly(x)\leftarrow bird(x).
 \end{gathered}
\]
For $Q=canfly(a)$, the negative derivation of $bird(a)$ has support
$.6\cdot.5=.3$, while the positive derivation is certain.  The report is
$.70/.00/.00/.30$.  The query atom has zero conflict.  The report separately
states conflict $.30$ at the predecessor $bird(a)$ and sets
\texttt{CONTESTED}.  This conflict prevents the contested part of the
support for $bird(a)$ from propagating.
The opposing route need not occur in the printed positive
proof because it is found by the report-time clause search.  Clause-activation
sampling instead gives signed value one, while shared-threshold sampling gives
$.70/0/0/.30$, matching \GK{}.

Three cases distinguish the default calculations.  First, the recursively
contested exception condition $.7::B,\ .3::\neg B$ with a certain body leaves
conditional probability $\beta_{B\mid A}=.4$ that the exception condition is
positively usable, so a $.9$
complete-blocking default contributes
$.54$.  Second, a
contrary-gated default with rule confidence $.6$ against ordinary support for the
opposite literal
$.9$ reports $.06/.90/0/.04$.  Third, two equally ranked contrary-gated defaults
with pooled support values $a$ and $b$ report
\[
  \bigl(a(1-b),\,b(1-a),\,0,\,(1-a)(1-b)+ab\bigr),
\]
including in ignorance the region $ab$ in which both independent threshold
tests pass.

All 54 scalar frequencies checked across the numbered examples, variants,
and trace cases were within Monte Carlo precision of their closed forms.  The
examples include cancellation at a
premise, a recursively evaluated exception condition, paired reference-class
exceptions with extra conditions, two
exceptions, two complete-blocking defaults, priority, shared exception-condition
support, and the case
where a paired reference-class exception also rebuts an independent derivation.
Table~\ref{tab:defaultexamples} summarizes one input for each distinct
calculation that is not already fully worked in Section~\ref{sec:example}.
The complete inputs are in Appendix~\ref{sec:defaultcaseinputs}.

\begin{table}[ht]
\centering
\scriptsize
\caption{Default cases that distinguish the calculations.  \GK{} and ST
values are ordered as positive support/negative support/conflict/ignorance;
CA reports $P(Q)-P(\neg Q)$.  The CA seed-7 sample used 3,000 worlds but
its completion counts are unavailable; ST used 400,000 worlds and seed 7.
ST is the shared-threshold sampler, and DA/in denotes dependency-aware
evaluation within its stated fragment.  The last column gives selected
support components only.}
\label{tab:defaultexamples}
\begin{tabular}{L{3.1cm}cL{1.9cm}L{1.1cm}L{2.3cm}L{4.0cm}}
\toprule
Case & \shortstack{Method/\\status} & \GK{} \sfour & CA & ST sample \sfour &
\shortstack[l]{Alternative calculation\\and selected components} \\
\midrule
E1: evaluated exception condition
& DA/in
& .54/0/0/.46
& .274
& .538/0/0/.462
& direct subtraction: positive $.5000$ \\
E2: paired reference class
& DA/in
& .48/.48/0/.04
& $-.115$
& .479/.481/0/.040
& ordinary opposition: positive/negative $0/0$ \\
E3: paired rule with extra condition
& DA/in
& .42/.24/0/.34
& .124
& .419/.240/0/.341
& assign unallocated blocked-branch probability to positive support: $.7200$ \\
E4: strict priority
& DA/in
& .60/.30/0/.10
& .332
& .599/.301/0/.100
& discard overlap: positive/negative $0/.3000$ \\
E5: shared exception predecessor
& DA/in
& 0/0/0/1
& 0
& 0/0/0/1
& body usable $\times$ exception not positively usable $\times$ rule
confidence: positive $.1728$ \\
\bottomrule
\end{tabular}
\end{table}

In E1, subtracting the exception marginals gives $.5000$.  In E2, treating
the paired branches as ordinary opposition gives $0/0$.  In E3, assigning the
unallocated \(1-.8\times.5=.6\) part of the blocked branch to positive support
gives $.7200$.  In E4, discarding both applications in their overlap gives
$0/.3000$.  In E5, the shortcut multiplies body usability $.48$, the
complement of exception-condition usability $1-.6$, and rule confidence $.9$,
giving $.1728$.

The next experiment tests whether rule confidence is preserved within the
unblocked cases.
With $r_{\mathrm{open}}=.9$ and recursively evaluated conditional probability
$\beta_{B\mid A}=.4$ that the exception condition is positively usable, the sampler in which blocked and unblocked
branches are disjoint gives
head support $.5410$ against the analytic $.5400$.  Its measured conditional
probability that the rule application contributes is
\[
\Pr(\delta\text{ contributes}\mid B\text{ is not positively usable})=.9006.
\]
Direct subtraction assumes maximal overlap between the rule-contribution
event and the event in which the exception condition is positively usable.  It gives $.4992$
against $.5000$, but its measured conditional probability that the rule
application contributes is $.8334$ rather
than the stated $.9$.  At $\beta_{B\mid A}=.9$, mixture gives $.0902$ and direct subtraction gives
zero.  A separate population model with actual exception classes reproduces
the mixture for $r_{\mathrm{blocked}}=0$ and $.3$.

\paragraph{Priority-zero reciprocal defaults.}
When both priorities are omitted in D4-0, they are parsed as zero.  The two
contrary-gated applications then form a reciprocal priority-zero pair.
Under the rule of Section~\ref{sec:defaults}, the two exception references
internal to that pair do not disable each other.  External exception evidence
continues to disable the application it reaches.  The active applications are
otherwise combined as ordinary unranked positive and negative support.

For premise input confidences $.8$ and $.6$, the Quaker-only, Republican-only,
both-premise, and neither-premise cases have probabilities $.32$, $.12$,
$.48$, and $.08$.  The resulting atom-level partition is therefore
\[
  R(pacifist(n))=(.32,.12,.48,.08),
\]
and polarity reversal gives
\[
  R(\neg pacifist(n))=(.12,.32,.48,.08).
\]
The corresponding signed confidences are $+.20$ and $-.20$.

Both runs match the analytic tuples above.  Their detail fields are
\begin{center}
\begin{tabular}{ll}
\texttt{calculation} & \texttt{canonical\_atom}\\
\texttt{coverage\_status} & \texttt{complete}\\
\texttt{polarity\_status} & \texttt{guaranteed}.
\end{tabular}
\end{center}
They are opposite-polarity presentations of one atom-level calculation.

Here \texttt{coverage\_status=complete} means that the implemented
priority-zero dependency-aware calculation completed without an operational
failure.  It does not place D4-0 inside the finite acyclic correspondence
fragment of Proposition~\ref{prop:dependencycorrespondence}.  D4-0 is
validated against the separately defined reciprocal priority-zero rule and
the public priority-zero sampler.

The public \path{montecarlo/test_threshold_rank0.py} sampler passes all eight
priority-zero cases, including both D4-0 query orientations.  It
implements the same priority-zero rule.  In the mixed case within the
sampler's implemented priority-zero fragment, it independently returns
$.16/.56/.24/.04$, matching the analytic reference value.  A contextual case
whose inherited priority restriction excludes one of the candidate
contributions is reported outside the sampler's fragment rather than assigned
a numerical value.  The mixed case adds an additional ordinary negative fact
with input confidence $.5$ to the reciprocal pair:
\[
\begin{gathered}
 .8::quaker(n),\qquad
 .6::republican(n),\qquad
 .5::\neg pacifist(n),\\
 pacifist(x)\leftarrow quaker(x)
   \ [\mathrm{unless}\ \neg pacifist(x);0],\\
 \neg pacifist(x)\leftarrow republican(x)
   \ [\mathrm{unless}\ pacifist(x);0].
\end{gathered}
\]
Its file and the other inputs and captured outputs of this paragraph are listed in
Appendix~\ref{sec:rank0artifacts}.

\subsubsection{Decision experiments}

Agreement with the named shared-threshold sampler checks the implemented
branches covered by the test suite.  The decision experiments address a
narrower question: whether two formulas match the synthetic model from which
outcomes are generated.  In the two-rule experiment, one rule contributes
$.7$ and is disabled with probability $p_B$, while an independent rule
contributes $.5$.  The true outcome probability is
$1-(1-.7(1-p_B))(1-.5)$.  Each policy reports a probability, and mean binary
log score is reported as bits per round.  Discounting only the affected rule
matches the generating probability and scores $.1188$; discounting the pooled
conclusion scores $-.1806$.

The cost--loss experiment uses a binary action.  Acting on a false conclusion
costs $c$, while failing to act on a true conclusion costs $1-c$.  For a
complete-blocking rule with confidence $r_{\mathrm{open}}$, acting when
$\beta_{B\mid A}<1-c/r_{\mathrm{open}}$ gives the same decisions as the
quantitative calculation at
$c=.20,.45,.70$.  Using the fixed cutoff $\beta_{B\mid A}<.5$ agrees only at
$c=r_{\mathrm{open}}/2=.45$.  These results verify the formulas under the stated synthetic
models.

\subsection{Coverage and fallbacks}

\begin{table}[!htb]
\centering
\small
\caption{Model difference and fallback difference in two examples from the
example suite.  DA/in is dependency-aware evaluation within its fragment;
FB/in is a retained-proof fallback within the retained-proof fragment.  The
last two columns contain signed scalar values.}
\label{tab:twodiffs}
\begin{tabular}{@{}L{2.35cm}L{3.15cm}cL{2.3cm}L{2.6cm}@{}}
\toprule
Example & Cause of difference & \shortstack{Method/\\status} & Comparison value & Shared-threshold value \\
\midrule
Example 18, $flies(a)$ & different reference semantics & DA/in & Model~1: $0.45$ & $0$ \\
Example 15, $smokes(4)$ & open-premise fallback & FB/in & retained-proof fallback: $0.44$ & $0.30$ \\
\bottomrule
\end{tabular}
\end{table}
\subsubsection{Model differences}
\label{sec:differences}

A model difference means that two calculations measure different quantities
and each value can be correct for its own semantics.  This subsection isolates
that case; the following subsections separate incomplete coverage from known
implementation limitations.

\paragraph{Contested premises: a model difference.}  The regression case has
$bird(tweety)$ with $0.5$ positive and $0.2$ negative support, and birds fly at
$0.9$.  \GK{} resolves the opposition at the premise before using it and
reports $(0.5-0.2)\cdot0.9=0.27$.  The unguarded ground-instance activation program does not
produce that reduction.  In a sampled world the clause ``tweety is not a
bird'' is an ordinary additional clause.  No rule of the program concludes
``tweety does not fly'', so this clause removes no proof of the query and
proves no negation, and the count stays at $0.5\cdot0.9\approx0.46$.  The
mechanism is the same as in the eight-world enumeration of
Section~\ref{sec:twomodels}.  Under the shared-threshold modeling choice
adopted here, support for $\neg bird(tweety)$ reduces every downstream use of
$bird(tweety)$, even when it does not derive the negation of the final
conclusion.  The threshold-world sampler, which realizes the
shared-threshold semantics used for \GK{}'s dependency-aware result, returns
\GK{}'s value.

\subsubsection{Coverage limits and fallbacks}

\paragraph{An open-premise fallback.}
Consider
\[
 p(a),\quad q(c),\quad bird(x)\leftarrow p(x),\quad
 \neg bird(a)\leftarrow q(y),\quad canfly(x)\leftarrow bird(x).
\]
Matching the ground head $\neg bird(a)$ leaves the body target $q(y)$ open.
The evaluator does not enumerate substitutions for the unbound variable,
although general resolution can use
$q(c)$.  It discards the incomplete dependency-aware result and reports the
retained-proof fallback $1.00/.00/.00/.00$.  The flags are
\texttt{CONTESTED}, \texttt{OPEN\_PREMISE}, \texttt{PROOF\_FALLBACK}, and
\texttt{SCRUTINY\_INCOMPLETE}.  The diagnostic \texttt{-dwopen} mode instead
tries substitutions drawn from the known constants, subject to a fixed
bound.  Other detected implementation limits produce the same flagged
retained-proof fallback and carry their own cause flags.

\paragraph{Recursive rules: a coverage difference.}  The tested example has
nine certain links and a transitivity rule at $0.9$.  \GK{} retains one
derivation, a single chain using eight distinct ground instances of that rule,
and reports $0.9^8=0.4305$; the example contains no opposing evidence, so no
contested atom triggers the dependency-aware evaluation and the report is the
direct retained-proof result.  Both reference models assign the query a
probability close to one; the ST sampler and bounded-prover CA sample return
values near that limit.  The query has many distinct derivations using ground
instances of the transitivity rule, so almost every sampled world contains at
least one derivation.  \GK{}'s $0.4305$ is Equation~\eqref{eq:ie} evaluated on
the single derivation retained by the bounded search, which
Section~\ref{sec:proofunion}
identified as a lower bound on the reference value whenever the retained proof
set is incomplete; this example has the largest observed difference in the
paper between that bound and the reference value.  The returned value depends
on which proofs the bounded search retains.  Recursive
programs of this shape are therefore a documented limitation of retained-proof
coverage.  They do not test agreement between the two reference semantics.

The example-suite rows add the remaining coverage classes: factoring lies
outside the directed fragment; contested cycles produce flagged fallbacks; and
non-ground answers remain retained-proof approximations.  Their individual
paths and causes are listed in Section~\ref{sec:examplesuiteresults}.

\subsubsection{Directed-reading limitation}

\paragraph{An unflagged direction limitation.}
Let
\[
 .9::penguin(a),\qquad .8::\neg bird(a),\qquad
 bird(x)\leftarrow penguin(x),\qquad
 swims(x)\leftarrow penguin(x),
\]
and ask $swims(a)$.  The directed reading contains no rule concluding
$\neg penguin$ or $\neg swims$.  Although $bird$ is contested, the retained
proof of $swims(a)$ does not pass through it, so dependency-aware evaluation is not triggered.
The result is $.90/.00/.00/.10$ with no flag.  A classical reading of the
certain confidence-one rule $bird(x)\leftarrow penguin(x)$ would permit a
modus-tollens derivation of $\neg penguin(a)$ from $\neg bird(a)$.
The defined directed semantics excludes this route and does not flag the
exclusion.  Both reference constructions follow the defined
direction and therefore do not add that classical route.

\subsection{Example suite results}
\label{sec:examplesuiteresults}

The example driver produced 43 answer rows.  Table~\ref{tab:studyclass}
reports the \GK{} result and the corresponding shared-threshold sample
rather than only an aggregate classification.  \GK{} and shared-threshold (ST) cells use the order
$s^+/s^-/c/g$.  The stored clause-activation values are omitted because their
completion counts are unavailable.  ST used 100,000 worlds with seed 7 and
the public threshold-world implementation
\path{montecarlo/threshold_worlds.py}.  The local status codes are:

\begin{center}
\scriptsize
\begin{tabular}{L{1.5cm}L{11.7cm}}
\toprule
Code & Meaning \\
\midrule
\texttt{RP/in} & direct result exact for the retained proof set \\
\texttt{DA/in} & completed dependency-aware result within the correspondence fragment \\
\texttt{FB/in} & fallback exact for the retained proof set \\
\texttt{BL/out} & completed local blocked/unblocked result outside atom-level correspondence \\
\texttt{RP/out} & direct retained-proof result outside the ST fragment \\
\texttt{FB/out} & retained-proof fallback outside the completed shared-threshold fragment \\
\texttt{AP/out} & retained-proof approximation after replay failure, a non-ground identifier, or a union limit \\
\bottomrule
\end{tabular}
\end{center}

The code-specific suffix meanings are defined in
Section~\ref{sec:reportstatus}; they do not name one common fragment.  The three
non-ground candidates are approximations; contested cycles are fallbacks
outside the completed shared-threshold fragment.  ST supports neither case.
Appendix~\ref{sec:completeinputs} gives every example in full without embedded
output logs.
The interpretation column classifies only the \GK{}--ST relation.
``Agreement'' means component-wise agreement within the conservative
per-component Monte Carlo tolerance $.0064$ stated in
Section~\ref{sec:samplers}; fallback, cycle, factoring, or non-ground labels state why no
within-fragment equality claim is made.

{\scriptsize
\begin{longtable}{L{2.8cm}L{3.6cm}cL{3.1cm}L{2.7cm}}
\caption{Complete set of results for the example suite.  Numerical ST
results use the order \(s^+/s^-/c/g\).  The suite tests specified semantic and
implementation cases.}
\label{tab:studyclass}\\
\toprule
Example and answer & \GK{} \sfour & \shortstack{Method/\\status} & ST result & \GK{}--ST interpretation \\
\multicolumn{5}{@{}l@{}}{Codes: DA dependency-aware; RP retained-proof; FB fallback; AP approximation; in/out: within/outside its fragment.}\\
\midrule
\endfirsthead
\multicolumn{5}{c}{\tablename\ \thetable\ continued}\\
\toprule
Example and answer & \GK{} \sfour & \shortstack{Method/\\status} & ST result & \GK{}--ST interpretation \\
\multicolumn{5}{@{}l@{}}{Codes: DA dependency-aware; RP retained-proof; FB fallback; AP approximation; in/out: within/outside its fragment.}\\
\midrule
\endhead
\bottomrule
\endfoot
1: $bird(a)$ & .750/0/0/.250 & RP/in & .751/0/0/.249 & agreement \\
2: $bird(a)$ & .800/0/0/.200 & RP/in & .801/0/0/.199 & agreement \\
3: $twobirds$ & .300/0/0/.700 & RP/in & .303/0/0/.697 & agreement \\
4: $(a,a)$ & .500/0/0/.500 & RP/in & .500/0/0/.500 & agreement \\
4: $(a,b)$ & .300/0/0/.700 & RP/in & .303/0/0/.697 & agreement \\
4: $(b,a)$ & .300/0/0/.700 & RP/in & .303/0/0/.697 & agreement \\
4: $(b,b)$ & .600/0/0/.400 & RP/in & .601/0/0/.399 & agreement \\
5: $bird(a)$ & 0/0/.500/.500 & DA/in & 0/0/.500/.500 & agreement \\
6: $bird(a)$ & .300/0/.500/.200 & DA/in & .300/0/.500/.199 & agreement \\
7: $\neg bird(a)$ & 0/.300/.500/.200 & DA/in & 0/.301/.500/.199 & agreement \\
8: $smokes(carl)$ & .096/0/0/.904 & RP/in & .096/0/0/.904 & agreement \\
9: $smokes(carl)$ & .1376/0/0/.8624 & RP/in & .138/0/0/.862 & agreement \\
10: $smokes(ann)$ & .800/0/0/.200 & RP/in & .800/0/0/.200 & agreement \\
10: $smokes(bob)$ & .688/0/0/.312 & RP/in & .689/0/0/.311 & agreement \\
10: $smokes(carl)$ & .1376/0/0/.8624 & RP/in & .138/0/0/.862 & agreement \\
11: $smokes(sam)$ & .3764/0/0/.6236 & RP/in & .379/0/0/.621 & agreement \\
12: $smokes(sam)$ & .3084/0/0/.6916 & RP/out & .302/0/0/.698 & outside fragment: factoring \\
13: $burglary(t1)$ & $.0822/0/.9156/.0022$ & FB/out & unsupported: cycle & nonmonotonic cycle \\
14: $earthquake(t1)$ & $.0388/0/.9448/.0164$ & FB/out & unsupported: cycle & nonmonotonic cycle \\
15: $smokes(1)$ & $.440/0/0/.560$ & FB/in & .439/0/0/.561 & fallback value agrees \\
15: $smokes(2)$ & 1/0/0/0 & RP/in & 1/0/0/0 & agreement \\
15: $smokes(3)$ & .300/0/0/.700 & RP/in & .298/0/0/.702 & agreement \\
15: $smokes(4)$ & $.440/0/0/.560$ & FB/in & .298/0/0/.702 & fallback value differs \\
16: $asthma(1)$ & $.176/0/0/.824$ & FB/in & .176/0/0/.824 & fallback value agrees \\
16: $asthma(2)$ & .400/0/0/.600 & RP/in & .400/0/0/.600 & agreement \\
16: $asthma(3)$ & .120/0/0/.880 & RP/in & .119/0/0/.881 & agreement \\
16: $asthma(4)$ & $.176/0/0/.824$ & FB/in & .119/0/0/.881 & fallback value differs \\
17: $abeautifulmind$ & .9997/0/0/.0003 & RP/in & 1/0/0/0 & agreement \\
17: $casinoroyale$ & 1/0/0/0 & RP/in & 1/0/0/0 & agreement \\
17: $sleepyhollow$ & .9082/0/0/.0918 & RP/in & .908/0/0/.092 & agreement \\
17: $robinhood$ & .9764/0/0/.0236 & RP/in & .977/0/0/.023 & agreement \\
18: $flies(a)$ & 0/0/0/1 & DA/in & 0/0/0/1 & agreement \\
19: $bird(a)$ & 0/0/.500/.500 & DA/in & 0/0/.500/.500 & agreement \\
20: non-ground answer candidate & 0/.4345/.090/.4755 & AP/out & unsupported: non-ground query & non-ground answer \\
20: $flies(a)$ & 0/0/0/1 & FB/out & unsupported: cycle & nonmonotonic cycle \\
21: non-ground answer candidate & 0/.455/.090/.455 & AP/out & unsupported: non-ground query & non-ground answer \\
21: $bird(a)$ & 0/0/.500/.500 & FB/out & unsupported: cycle & nonmonotonic cycle \\
22: $flies(tweety)$ & .8991/.0001/.0009/.0999 & DA/in & .900/.000/.001/.099 & agreement \\
22: $flies(pennie)$ & 0/.100/.900/0 & DA/in & 0/.099/.901/0 & agreement \\
23: non-ground answer candidate & 0/.910/.090/0 & AP/out & unsupported: non-ground query & non-ground answer \\
23: $flies(messy)$ & 0/0/0/1 & FB/out & unsupported: cycle & nonmonotonic cycle \\
23: $flies(pennie)$ & 0/.100/.900/0 & FB/out & unsupported: cycle & nonmonotonic cycle \\
23: $flies(tweety)$ & .800/0/.100/.100 & FB/out & unsupported: cycle & nonmonotonic cycle \\
\end{longtable}
}

The two fallback differences are visible in rows 15(4) and 16(4):
\GK{} preserves retained-proof support $.44$ and $.176$, while ST gives
approximately $.30$ and $.12$.  Both \GK{} rows carry the fallback cause;
Section~\ref{sec:twomodels} works through the 15(4) mechanism.
The remaining numerical difference is Example~12: the direct retained-proof
method returns $.3084$ and ST approximately $.302$.  Its non-Horn encoding permits classical factoring in
the resolution proof, and the directed reading used by ST defines no
counterpart of that step.  Factoring-derived answers are therefore excluded
from the stated fragment of Section~\ref{sec:samplers}.  This row does not
contradict the correspondence proposition.
The evaluator does not yet detect the exclusion, so the row carries no flag;
a flagged retained-proof fallback for such answers, or a directed-support
counterpart of factoring, is future work
(Section~\ref{sec:limitations}).  Examples 13, 14, 20, 21, and 23 have no ST
number because their contested cycles are outside the sampler's fragment.

\subsection{Literature default benchmarks}

Table~\ref{tab:lifschitz} reports the independently specified A1--A6
benchmarks from Lifschitz's \emph{Benchmark Problems for Formal Nonmonotonic
Reasoning}.  The reference conclusion is the one stated in the published
benchmark, not an
output chosen to match \GK{}.  Six of the seven queried conclusions agree.
A5 is qualified because the matching answer is a retained-proof fallback for
an unsupported formula.  A6b is a known divergence: its intended
conclusion requires a minimal-exception choice from a disjunctive premise,
which \GK{}'s local blocker procedure does not implement.
The case labels and intended conclusions follow the cited source benchmark.
Here \texttt{DA/in} means a completed
shared-threshold result within the correspondence fragment, while
\texttt{FB/out} means a retained-proof fallback outside that fragment.

\begin{table}[ht]
\centering
\small
\caption{Literature-originated default benchmarks.  ``DA/in'' concerns
\GK{}'s dependency-aware calculation.  No result is attributed to another system
because no faithful translation of these cases was run in another system.}
\label{tab:lifschitz}
\begin{tabular}{L{1.1cm}L{4.5cm}L{4.0cm}L{3.6cm}}
\toprule
Case & Intended conclusion & Comparison with intended conclusion & Method/status or limitation \\
\midrule
A1 & B is on the table & match & DA/in \\
A2 & B is on the table despite irrelevant information & match & DA/in \\
A3 & B is on the table and A is red & match & DA/in \\
A4 & B is on the table with A's default disabled & match & DA/in \\
A5 & every heavy block other than A is on the table
& intended conclusion returned by flagged fallback & FB/out \\
A6a & C is on the table
& match & DA/in \\
A6b & exactly one of A and B is not on the table
& divergence & unsupported minimal-exception policy \\
\bottomrule
\end{tabular}
\end{table}

\section{Comparison with implemented systems}
\label{sec:systemcomparison}

Table~\ref{tab:systemmaster} reports each system's output in its native
type; probabilities, intervals, model sets, and argument statuses are not
converted to a common score.  A scalar is the result for the displayed query; a \GK{} scalar is its
signed confidence.  The entries \(0\) (ign.) and \(0\) (bal.) distinguish
pure ignorance from balanced positive and negative support.  ``2 models''
means that two stable models were returned.  For TweetyProject, ``undec.''
means that it constructs opposed DeLP arguments and warrants neither
conclusion.  Other
textual answers retain their native meaning.  Native inputs have no prefix,
and different output kinds are not numerically comparable.
The entries \emph{error} and \emph{does not ground} record the system's own
failure output for the stated input; the captured messages are in the
comparison package.
Complete case definitions are in Appendix~\ref{sec:crossinputs}.

\begin{center}
\scriptsize
\begin{tabular}{L{2.2cm}L{8.0cm}}
\toprule
Notation & Meaning \\
\midrule
scalar & result for the displayed query \\
$x/y$ & displayed query / explicit opposite query \\
$[l,u]$ & lower / upper probability \\
E & exact translation on the stated fragment \\
A & non-equivalent analogue \\
U & no faithful encoding claimed \\
\(0\) (ign.) & signed confidence zero with pure ignorance \\
\(0\) (bal.) & signed confidence zero with nonzero positive and negative
support; the scalar omits conflict and ignorance \\
\(C\;(s^+/s^-)\) & \GK{} signed confidence with its positive and negative
support components \\
error & the system's own error or refusal on the stated input \\
does not ground & grounding did not terminate within the recorded
ten-second limit \\
inc $m$ & smProbLog probability mass on worlds without stable models \\
\bottomrule
\end{tabular}
\end{center}

{\scriptsize
\setlength{\tabcolsep}{1.25pt}
\begin{longtable}{@{}L{.55cm}L{2.6cm}@{\hspace{2pt}}C{1.15cm}C{1.35cm}C{1.55cm}C{1.25cm}C{1.25cm}C{1.0cm}C{.95cm}C{.95cm}C{1.05cm}@{}}
\caption{Implemented-system comparison.  Native inputs have no prefix;
E = exact translation, A = non-equivalent analogue, and U = no faithful
encoding claimed.}
\label{tab:systemmaster}\\
\toprule
ID & Example & \GK{} & \ProbLog{} & PASTA & plingo &
\shortstack{sm\\ProbLog} &
Tweety & clingo & DLV & \shortstack{s\\(CASP)} \\
\midrule
\endfirsthead
\multicolumn{11}{c}{\tablename\ \thetable\ continued}\\
\toprule
ID & Example & \GK{} & \ProbLog{} & PASTA & plingo &
\shortstack{sm\\ProbLog} &
Tweety & clingo & DLV & \shortstack{s\\(CASP)} \\
\midrule
\endhead
\bottomrule
\endfoot
C1 & independent causes
& $.80$ & E:$.80$ & E:$[.80,.80]$ & E:$.80$ & E:$.80$
& U & U & U & U \\
C2 & conjunctive causes
& $.30$ & E:$.30$ & E:$[.30,.30]$ & E:$.30$ & E:$.30$
& U & U & U & U \\
D1 & evidence for negated conclusion
& \shortstack{$-.80$\\$(.10/.90)$} & \shortstack[l]{A:$.10$/\\$.90$} &
\shortstack[l]{A:$[.10,.10]$/\\$[.90,.90]$} &
\shortstack[l]{A:$.10$/\\$.90$} & \shortstack[l]{A:$.10$/\\$.90$}
& U & \shortstack[l]{A:no/\\yes} & \shortstack[l]{A:no/\\yes} &
\shortstack[l]{A:no/\\yes} \\
D2 & exception condition with opposing support
& $.54$ & A:$.27$ & A:$[.27,.27]$ &
A:$.27$ & A:$.27$
& U & A:no & A:no & A:no \\
D3 & contested premise
& 0 (ign.) & A:$.225$ &
\shortstack[l]{A:\\$[.225,.225]$} &
A:$.225$ & A:$.225$
& U & A:no & A:no & A:no \\
D4 & Nixon defaults
& 0 (ign.) & U & \shortstack[l]{$[0,1]$/\\$[0,1]$} & $.50/.50$ & $.50/.50$
& \shortstack[l]{undec./\\undec.} & \shortstack{2\\models} &
\shortstack{2\\models} & \shortstack{yes/\\yes} \\
D4-P & probabilistic Nixon premises
& \shortstack[l]{A:$+.20$/\\$-.20$} & U &
\shortstack[l]{$[.32,.80]$/\\$[.12,.60]$} &
\shortstack[l]{$.54054$/\\$.40540$} &
\shortstack[l]{$.56$/\\$.36$}
& U & U & U & U \\
D5 & strict priority
& $.30$ & \shortstack[l]{A:$.63$/\\$.30$} &
\shortstack[l]{A:$[.63,.63]$/\\$[.30,.30]$} &
\shortstack[l]{A:$.63$/\\$.30$} & \shortstack[l]{A:$.63$/\\$.30$}
& \shortstack[l]{A:no/\\yes} & \shortstack[l]{A:no/\\yes} &
\shortstack[l]{A:no/\\yes} & \shortstack[l]{A:no/\\yes} \\
D6a & exception undercuts application
& $1$ & A:$1$ & A:$[1,1]$ & A:$1$ & A:$1$
& U & A:yes & A:yes & A:yes \\
D6b & opposite conclusion rebuts
& $1$ & A:$1$ & A:$[1,1]$ & A:$1$ & A:$1$
& U & A:yes & A:yes & A:yes \\
D7 & paired reference-class exception
& \shortstack{$0$ (bal.)\\$(.48/.48)$} & \shortstack[l]{A:$.48$/\\$.48$} &
\shortstack[l]{A:$[.48,.48]$/\\$[.48,.48]$} &
\shortstack[l]{A:$.48$/\\$.48$} & \shortstack[l]{A:$.48$/\\$.48$}
& U & U & U & U \\
EQ1 & equality with opposed evidence
& \shortstack{$.34$\\$(.90/.56)$} & U & A:error &
A:$.79839$ & \shortstack[l]{A:$.82$;\\inc $.504$}
& U & \shortstack[l]{A: no\\models} & \shortstack[l]{A: no\\models} &
\shortstack[l]{A: no\\models} \\
X1 & contradiction beside unrelated query
& $1$ & U & A:error & A:$1$ & \shortstack[l]{A:$1$;\\inc $.42$}
& U & \shortstack[l]{A: no\\models} & \shortstack[l]{A: no\\models} &
\shortstack[l]{A: no\\models} \\
F1 & recursion over function terms
& $.576$ & E:$.576$ & \shortstack[l]{A: does\\not\\ground} &
\shortstack[l]{A: does\\not\\ground} & \shortstack[l]{E: does\\not\\ground}
& U & \shortstack[l]{A: does\\not\\ground} &
\shortstack[l]{A: does\\not\\ground} &
A:yes \\
DJ1 & uncertain classical disjunction
& $.72$ & U & E:$[.72,.80]$ & E:error & A:$.76$
& U & A:yes & A:yes & A:yes \\
\end{longtable}
}

The public repository's \path{comparisons/} package contains the input,
recorded command, and captured output for each cell backed by a run.  Its
\path{README.md} states the package contents, system versions, and rerun
procedure.  \path{manifest.json} and \path{results/table_results.tsv} cover
every table cell in machine-readable form.  Executed cells name their input,
query, and run identifier; each U cell instead records why no faithful
encoding is claimed.  The \GK{} cells reproduce with the public
binary; the external cells require the corresponding system at the stated
version.

\begin{footnotesize}
\paragraph{Encoding notes.}
\textbf{D1.} Compiles the simple exception to negation as failure; the
non-probabilistic encodings return only derivability and make the exception
fact certain.
\par\noindent
\textbf{D2.} Contains facts with confidences \(.7::B\) and \(.3::\neg B\), and
a default with rule confidence \(.9::H\,[\mathrm{unless}\ B]\).  The external
inputs replace that default by an independent $.9$ activation and
replace its exception check by a negation-as-failure guard.  They therefore
do not implement \GK{}'s recursive exception semantics.  The \texttt{neg\_b}
fact mirrors the source input but is disconnected in this analogue; the guard
tests only whether \texttt{b} is absent.
\par\noindent
\textbf{D3.} Represents evidence for the explicit negation by a separate atom and adds a
negation-as-failure guard; the non-probabilistic encodings make both premise
literals certain.
\par\noindent
\textbf{D4.} The two s(CASP) queries succeed in separate partial stable models.
Tweety returns \emph{undecided} for each query because neither opposed DeLP
argument is warranted.
\par\noindent
\textbf{D4-P.} Assigns confidence $.8$ to \(quaker(n)\) and $.6$ to
\(republican(n)\).  The stable-model systems apply their native
negation-as-failure cycle.  \GK{} has no such operator here, so its
equal-priority contrary-gated input is marked A.  It returns signed
confidence \(+.20\) for \(pacifist(n)\); the explicit opposite query is
reported as \(-.20\).
\par\noindent
\textbf{D5.} Gives the positive default confidence $.9$ at priority 2 and the
negative default confidence $.3$ at priority 3.  The external encoding guards
the positive rule by the absence of the $.3$ negative activation.  The Tweety
analogue uses specificity; the non-probabilistic encodings activate both
sides.
\par\noindent
\textbf{D6a--D6b.} \GK{} distinguishes the two rule forms: D6a contains a
potential undercutting exception, whereas D6b uses the explicit opposite
conclusion as its exception condition.  Because neither condition is supported
in these inputs, both queries receive confidence one.  The external
translations do not preserve the rule-form distinction.
\par\noindent
\textbf{D7.} The external encoding explicitly adds rules whose probabilities
reproduce the two output marginals; it does not implement \GK{}'s paired
reference-class operation.  The analogue reproduces the marginals but not the
operation.
\par\noindent
\textbf{EQ1.} Contains \(.9::p(a)\), \(.7::\neg p(b)\), and the uncertain
equality \(.8::(a=b)\).  \GK{} evaluates the query by paramodulation and
nets \(.9\) against \(.7\times.8\).  The external encodings state the
equality as an ordinary probabilistic fact with substitution rules for both
polarities of \(p\); their joint world derives \(p(b)\) and \(\neg p(b)\)
and has no model.  \ProbLog{} cannot state \(\neg p(b)\); its recorded
fresh-atom attempt returns \(.9\) and \(.56\) as unrelated marginals, so
the cell is U.  The s(CASP) input directs the substitution rules toward the
query; with the symmetric rules its goal-directed execution does not
terminate.
\par\noindent
\textbf{X1.} Asserts \(.7::p(a)\) and \(.6::\neg p(a)\) beside the certain
unrelated fact \(q(b)\).  The ASP-based encodings state the pair through
strong negation; smProbLog uses a fresh atom with an incompatibility loop
and prints a negative false mass beside the inconsistent mass \(.42\).
\ProbLog{} programs are consistent by construction: a second probabilistic
fact for the same atom combines by noisy-or, evidence conditioning replaces
the pair by a posterior, and contradictory hard evidence aborts every
query, so the cell is U.
\par\noindent
\textbf{F1.} Applies the rule \(.8::bird(f(X))\leftarrow bird(X)\) twice to
a \(.9\) fact.  \ProbLog{} keeps the program verbatim, as does smProbLog;
the PASTA and plingo files omit the rule weight, which would require a
probabilistic gate for every instance of the infinite Herbrand universe.
The grounding-based runs were stopped after ten seconds.  The s(CASP)
encoding is crisp and answers by top-down execution.
\par\noindent
\textbf{DJ1.} Assigns confidence \(.8\) to the classical disjunction
\(p(a)\lor q(a)\) and \(.9\) to \(\neg p(a)\).  PASTA states the
disjunctive head directly; smProbLog and s(CASP) state it by the standard
even-loop idiom.  The plingo problog front end fails on the disjunctive
head with a parse-transformation error.  An annotated disjunction
distributes \(.8\) among the disjuncts and is a different statement, so the
\ProbLog{} cell is U.
\par\medskip\noindent
The unguarded D3 value $.45$ was computed analytically; no system was run
for it.
\end{footnotesize}

Rows N1 and N2 are non-ground source queries.  Systems requiring a ground
query were run once for each listed binding; the rules remain non-ground.  The
non-probabilistic encodings return binding sets rather than probabilities.

\begin{table}[ht]
\centering
\scriptsize
\caption{Open-query results.  N denotes a native input, E an exact translation
on the stated fragment, and A a non-equivalent binding-set analogue.  Values follow the
binding order in the second column.  The E translations agree on the listed
per-binding probabilities; the A translations reproduce only the binding
set.}
\label{tab:openqueries}
\begin{tabular}{L{.7cm}L{4.4cm}L{4.7cm}L{4.7cm}}
\toprule
ID & Query and binding order & Probabilistic results & Non-probabilistic results \\
\midrule
N1 & \(twobirds(X,Y)\):
\((a,a),(a,b),(b,a),(b,b)\)
& \GK{} N: \((.50,.30,.30,.60)\);
\ProbLog{}, plingo, and smProbLog E: the same values; PASTA E:
\shortstack[l]{\(([.50,.50],[.30,.30],\)\\
\(\phantom{(}[.30,.30],[.60,.60])\)}
& clingo, DLV, s(CASP) A: all four bindings \\
N2 & \(smokes(X)\): \(ann,bob,carl\)
& \GK{} N: \((.80,.688,.1376)\);
\ProbLog{}, plingo, and smProbLog E: the same values; PASTA E:
\shortstack[l]{\(([.80,.80],[.688,.688],\)\\
\(\phantom{(}[.1376,.1376])\)}
& clingo, DLV, s(CASP) A: all three bindings \\
\bottomrule
\end{tabular}
\end{table}

\paragraph{Exact probabilistic fragment.}
C1 and C2 test ground disjunction and conjunction.  N1 and N2 add open queries,
multiple substitutions, and recursion.  \GK{}, \ProbLog{}, PASTA, plingo, and
smProbLog agree on every listed per-binding probability.  The non-probabilistic systems
derive the same bindings in N1 and N2 but return no probabilities, so those
cells are binding-set analogues rather than numerical confirmations.

\paragraph{Probabilities combined with a default.}
In D2 both the facts and the default rule carry input confidences, so the
row tests the combination directly.  \GK{} evaluates the default with confidence $.9$ and a
recursively checked
exception and returns $.54$.  \ProbLog{}, PASTA, plingo, and smProbLog can run
the stated activation-and-negation-as-failure translation, but that translation
returns $.27$ and is marked A.  The non-probabilistic encodings make both facts
and the activation certain and do not derive the query.  D2 therefore shows which part of the \GK{} default operation the
translation fails to preserve, rather than a numerical disagreement under
one shared semantics.

\paragraph{Opposition and ambiguity.}
D1 and D3 show the same issue from the evidence side.  The external analogues
use a guard to exclude worlds in which the separate exception atom is active,
while \GK{} recursively
evaluates opposition and reports either ignorance or signed confidence
favoring the explicit opposite.  In D4 the Nixon conflict is native in all
of the selected nonmonotonic systems, but their output types differ:
ignorance in \GK{}, probability intervals, system-specific probabilities,
undecided arguments, or separate stable models.  These outputs express different semantics.

\paragraph{Probabilistic nonmonotonic cycle.}
D4-P adds probabilities to the Nixon premises, not to its default rules.
PASTA, plingo, and smProbLog handle the resulting probabilistic stable-model
input natively and return different quantities under their respective
semantics.  \GK{} has no native stable-model negation-as-failure cycle in this
case; its \(+.20/-.20\) entry is explicitly an equal-priority
contrary-gated analogue.  No faithful D4-P encoding was constructed for the
remaining systems, so those cells are marked U.

\paragraph{GK-specific default structure.}
D5--D7 isolate strict priority, the distinction between undercutting
exceptions and rebutting evidence, and the paired reference-class construction.  External
inputs can reproduce selected finite outcomes, but they do not preserve those
constructs.  Their A and U entries are therefore translation limits, not
failed runs.  No row among C1--D7 has an operational failure.  In rows EQ1,
X1, F1, and DJ1 the compared result of several cells is the system's own
failure; the comparison package contains each failure output.

\paragraph{Equality with opposed evidence.}
EQ1 attaches a confidence to an equality and lets it carry a negative
literal onto the queried atom.  \GK{} answers by paramodulation with signed
confidence $.34$ and reports the opposing mass $.56$ as conflict.  None of
the compared systems has equality with confidence; their encodings state
the equality as a probabilistic fact with hand-written substitution rules
for the one predicate of the row.  Under that encoding the joint world is
inconsistent: PASTA reports an error, plingo renormalizes to $.79839$, and
smProbLog moves $.504$ to its inconsistency column.  The
non-probabilistic encodings return no models.  With the opposition
removed, a single substitution rule reproduces the \GK{} value exactly, so
the opposed form is the discriminating part of the row.

\paragraph{Contradiction beside an unrelated query.}
X1 asserts contradictory evidence about one atom and queries another.
The query-clause ancestry requirement of Section~\ref{sec:twomodels} keeps
the contradiction local, so \GK{} answers the unrelated query with
confidence one.  The non-probabilistic systems reject the whole program:
clingo and DLV return no models and the s(CASP) query fails.  The
probabilistic stable-model systems differ among themselves: PASTA declines
to answer because a world has no answer set, plingo returns one after
renormalizing away the inconsistent mass $.42$, and smProbLog reports that
mass explicitly.  In \ProbLog{} the pair is not statable, as the EQ1 and X1
encoding notes explain.  X1 thus tests the localization claim of
Section~\ref{sec:twomodels} on the compared implementations.

\paragraph{Recursion over function terms.}
F1 iterates a weighted rule through a function term.  \GK{} and \ProbLog{}
both answer $.576$; both ground on demand, driven by the query.
PASTA, plingo, smProbLog, clingo, and DLV ground the full program first;
none of them terminates on the infinite Herbrand universe, and each run was
stopped after ten seconds.  s(CASP) executes top down and confirms the crisp
conclusion without a probability.  F1 thus distinguishes query-directed
grounding from ground-and-solve; the underlying probability semantics plays
no part in the outcome.

\paragraph{Uncertain disjunction.}
DJ1 attaches confidence $.8$ to a classical disjunction and opposes one
disjunct with confidence $.9$.  \GK{} resolves the surviving disjunct at
$.72$.  The PASTA credal interval $[.72,.80]$ has the \GK{} value as its
lower bound; the upper bound comes from the worlds in which the undefended
disjunction leaves both disjuncts admissible.  smProbLog assigns those
worlds equally to their stable models and returns the point value $.76$.
The plingo front end fails on the disjunctive head, and \ProbLog{} has no
classical disjunction.  The non-probabilistic systems derive the surviving
disjunct without a probability.

Figure~\ref{fig:gkproblogpolarity} gives a separate view of the C1 agreement
and the D3-type difference.  The guarded \ProbLog{} query counts positive-only
worlds, whereas \GK{} retains opposing overlap as conflict.

\begin{figure}[htbp]
\centering
\begin{tikzpicture}
\begin{axis}[
  name=samepolarity,
  at={(0,0)}, anchor=south west,
  width=7.25cm, height=5.5cm,
  xmin=0, xmax=1, ymin=0, ymax=1,
  title={same polarity: $+p,+p$},
  xlabel={input confidence $p$},
  ylabel={reported value},
  xtick={0,.25,.5,.75,1},
  ytick={0,.25,.5,.75,1},
  tick label style={font=\scriptsize},
  label style={font=\footnotesize},
  title style={font=\footnotesize\bfseries},
  axis line style={draw=gkgray},
  tick style={draw=gkgray},
  grid=major,
  grid style={draw=gkgray!16, line width=.3pt},
  clip=false
]
  \addplot[draw=gkteal, line width=1.1pt, domain=0:1, samples=80]
    {2*x-x^2};
  \node[font=\scriptsize, text=gkteal, anchor=north west]
    at (axis description cs:.05,.95) {\GK{} $=$ \ProbLog{}: $2p-p^2$};
  \addplot[draw=gkgray, densely dotted, line width=.6pt]
    coordinates {(.5,0) (.5,1)};
  \addplot[only marks, mark=*, mark size=1.7pt, draw=gkteal, fill=gkteal]
    coordinates {(.5,.75)}
    node[font=\scriptsize, text=gkgray, above left] {$0.75$};
\end{axis}

\begin{axis}[
  at={(7.75cm,0)}, anchor=south west,
  width=7.25cm, height=5.5cm,
  xmin=0, xmax=1, ymin=0, ymax=1,
  title={opposite polarity: $+p,-p$},
  xlabel={input confidence $p$},
  xtick={0,.25,.5,.75,1},
  ytick={0,.25,.5,.75,1},
  yticklabels={},
  tick label style={font=\scriptsize},
  label style={font=\footnotesize},
  title style={font=\footnotesize\bfseries},
  axis line style={draw=gkgray},
  tick style={draw=gkgray},
  grid=major,
  grid style={draw=gkgray!16, line width=.3pt},
  clip=false
]
  \node[font=\scriptsize, text=gkpurple, anchor=north east]
    at (axis cs:.93,.89) {\GK{} report has $c=p$};

  \addplot[draw=gkblue, dashed, line width=1.0pt,
    domain=0:1, samples=80] {x*(1-x)};
  \node[font=\scriptsize, text=gkblue, anchor=south]
    at (axis cs:.56,.29) {\ProbLog{} query $p(1-p)$};

  \addplot[draw=gkteal, line width=1.1pt]
    coordinates {(0,0) (1,0)};
  \node[font=\scriptsize, text=gkteal, anchor=south]
    at (axis cs:.68,.025) {\GK{} $s^+=C=0$};

  \addplot[draw=gkgray, densely dotted, line width=.6pt]
    coordinates {(.5,0) (.5,1)};
  \addplot[only marks, mark=*, mark size=1.7pt, draw=gkblue, fill=gkblue]
    coordinates {(.5,.25)}
    node[font=\scriptsize, text=gkgray, below right] {$0.25$};
\end{axis}
\end{tikzpicture}
\caption{Same-polarity agreement and opposite-polarity divergence.  Left: two
independent positive causes with input confidence $p$ give the same noisy-or value in
\GK{} and \ProbLog{}.  Right: for equal positive and explicitly negated pools,
the checked guarded \ProbLog{} encoding returns $p(1-p)$, while \GK{} returns
$(s^+,s^-,c,g)=(0,0,p,1-p)$ and signed confidence $C=0$.  The markers at
$p=.5$ illustrate the plotted values $.75$ and $.25$; \GK{} conflict is
stated in the figure text rather than plotted as a second verdict.}
\label{fig:gkproblogpolarity}
\end{figure}

The signed-confidence calculation of \cite{tammet2021} remains available in
compatibility mode.  On the balanced premise followed
by a $.9$ rule it returns $.45$, because positive support is propagated
before the balanced premise is examined.  Configuration-wise evaluation returns
total ignorance for the conclusion and identifies the contested premise.  On
one-sided cases the two modes normally coincide; a repository regression test
fixes the expected compatibility-mode output.

\subsection{First-order scope and grounding}

The experiment measures how the tested implementations handle the query
$flies(b1)$ in two settings: a large finite closure irrelevant to the answer
and an infinitely groundable recursive rule.  Table~\ref{tab:fol} reports the
single-machine runs on one fixed host.  The host was Linux on an Intel Core
i7-10875H with eight cores, 16 hardware threads, and 30~GiB of memory.  Every
numeric cell reports wall seconds; $>$ marks the stated external time limit.
The complete commands and limits are in
\path{Examples/asp_comparison/results.md}.  These single-machine runs
illustrate grounding and search on the stated inputs; they do not rank overall
performance.  clingo grounds the finite closure,
I-DLV query mode applies Magic Sets,
s(CASP) evaluates top down, and \GK{} uses query-directed resolution.
The basic and function-term inputs are
\path{Examples/exceptions/gbirds.js} and
\path{Examples/exceptions/gbirds_funsymbs.js}.  The finite-size variants add
$N$ irrelevant \texttt{bird(bK)} facts to the basic input; they are generated
instances, not separate semantic problems.

{\scriptsize
\begin{table}[ht]
\centering
\setlength{\tabcolsep}{3pt}
\caption{Query-focused first-order workload.  Cells reporting a time,
grounding, or stack limit are incomplete runs rather than logical results.
``Cautious answer'' means that the query is true in every stable model.  DLV
results are reported in the text below.}
\label{tab:fol}
\begin{tabular}{L{3.0cm}L{2.8cm}L{2.9cm}L{2.8cm}L{3.0cm}}
\toprule
Input & \multicolumn{4}{c}{Result; wall time (s)} \\
\cmidrule(l){2-5}
& \GK{} & clingo 5.6.2 & I-DLV 1.1.6 query & s(CASP) 1.1.4 \\
\midrule
Basic birds
& answer; $0.10$ & cautious answer; $0.006$
& answer; time not recorded & answer; $0.07$ \\
1,000 constants
& answer; $0.40$
& cautious answer; $10.34$
& answer; $<0.01$
& time limit; $>30$ \\
2,000 constants
& answer; $0.39$
& cautious answer; $99.95$
& answer; $<0.01$
& stack limit; $27.55$ \\
100,000 constants
& answer; $5.83$
& time limit; $>30$
& answer; $0.24$
& stack limit; $19.70$ \\
Recursive function terms
& answer; $2.1$
& grounding limit; $>120$
& not run
& stack limit; $22.73$ \\
\bottomrule
\end{tabular}
\end{table}
}

The 1,000--100,000-constant inputs were generated from one normalized
two-chain specification.  The transitive relation has quadratically many
consequences but is irrelevant to the query.  clingo's completed runs reported
effectively zero solving time after grounding.  I-DLV shows that the comparison
cannot be reduced to query-directed systems versus ASP: its Magic Sets mode
also avoids the irrelevant closure and answered the
100,000-constant query in $.24$ seconds.  s(CASP) avoids grounding but followed
the left-recursive relation in this encoding and reached a time or stack
limit.

DLV 2.1.1 answered the basic, 1,000-, and
2,000-constant cases in under $.01$, $7.29$, and $68.72$ seconds, respectively,
and exceeded a 30-second limit on recursive function terms.  s(CASP) answered
the basic case in $.07$ seconds, reached its 1~GB stack limit on recursive
function terms after $22.73$ seconds, and exceeded 30 seconds on the
transitive-ancestor case.

In these runs, uncertainty reporting did not prevent answers on the tested
first-order inputs.  The certain Steamroller and
Dreadbury problems are found with
signed confidence one.  LCL047-1 with one uncertain axiom returns the per-instance value
$0.8^5=0.32768$, because the five proof uses are distinct ground instances.
The four CSR confidence problems return the same answers and values in default
and compatibility modes.  The uncertain Steamroller and Dreadbury cases use a
flagged retained-proof fallback because their disjunctive-conclusion proofs
are outside the dependency evaluator's supported history forms.  Their signed
values, $.0023$ and $.0057$, are below the default output threshold
\texttt{-confidence .1}, so the result label is \texttt{evidence below
limit}.  This threshold changes the label, not the selected calculation.

The basic birds input is deterministic, so both samplers reproduce the same
accepted and rejected answers as \GK{}.  Sampling the 1,000- through
100,000-constant timing variants would add no semantic evidence and would
mostly repeat expensive grounding.  The recursive function-term birds input,
Steamroller, Dreadbury, LCL047-1, and the CSR inputs are outside both finite
samplers because they contain function terms, equality, or non-ground
first-order structure.  These inputs test prover capability; they are not
used for the three-way numerical comparison.

A separate equality-free translation of a non-Horn set-identity theorem
tests the target languages.  \GK{} returned a 74-clause proof.  The tested
\ProbLog{} file reduced the classical disjunctive query to failure and is
therefore class U because the translation is unsupported; PASTA, clingo, and I-DLV reached
grounding resource bounds on the unbounded term closure; DLV rejected the
recursive term rule under its termination condition; TweetyProject's simple
FOL reasoner rejected the functor-bearing signature; and s(CASP) returned no
answer within the run bound.  Equality-bearing NLP and Dreadbury clauses were
not translated to ASP because target-language equality does not supply
paramodulation in non-Horn clauses.  These failures result from parsing,
grounding, or bounded search.  The public comparison
record named above contains the birds inputs, commands, bounds, and outputs.

\section{Relation to other semantics}
\label{sec:furtherrelated}

This section compares the two \GK{} reference models with related
probabilistic and nonmonotonic semantics.  The systems implemented and
evaluated here are \ProbLog{}, PASTA, plingo, smProbLog, TweetyProject, clingo,
DLV, I-DLV, and s(CASP), using the versions and evidence status of
Section~\ref{sec:comparisonprotocol}.  P-DeLP, LPPOD, CHRiSM, Markov logic,
probabilistic soft logic, subjective logic, and the other argument-strength
frameworks below are related systems or formalisms not experimentally
evaluated here unless a specific run is reported.

\subsection{Probabilistic logic programming}
\ProbLog{} defines a distribution over logic programs from independent
probabilistic choices and computes query success probabilities
\cite{deraedt2007}.  Its inference compiles relevant ground programs to
weighted Boolean formulas or related circuits~\cite{fierens2015}.  Algebraic
model counting generalizes the calculation to semirings~\cite{kimmig2017}.
The ground-instance activation semantics of \GK{} uses the same independent
ground-instance
interpretation on its retained proof set, while the surrounding prover accepts
general first-order clauses, equality, and function terms without a global
grounding phase.  The shared-threshold semantics adds a separate treatment of explicit
opposition and recursively evaluated exception conditions.
Appendix~\ref{sec:causalboundary} distinguishes these forward
calculations from conditioning, explaining away, intervention, and
counterfactual inference.

Stable-model probabilistic systems cover programs where distribution
semantics does not select one two-valued model.  smProbLog distributes the
mass of a total choice over its stable models and reports inconsistent
worlds~\cite{totis2023}.  Credal probabilistic answer-set semantics returns
lower and upper probabilities when several stable models are possible
\cite{cozman2020}; plingo adds several probabilistic modes to clingo
\cite{hahn2025}.  These systems combine probability and nonmonotonicity on
groundable logic programs.  \GK's optional conflict-sensitivity interval is
similar in purpose to a lower/upper sensitivity report, although its base
semantics and construction differ.

\subsection{Statistical relational models}
Markov logic attaches weights to first-order formulas and defines a log-linear
distribution over ground worlds~\cite{richardson2006}; Alchemy implements
learning and approximate or lifted inference for that model.  Probabilistic
soft logic replaces Boolean formulas by soft truth values and uses hinge-loss
Markov random fields~\cite{bach2017}.  These systems answer model-based
probability or maximum a posteriori (MAP) questions.  \GK{} treats input confidences
as externally supplied parameters and neither learns nor calibrates them.
Probabilistic theorem proving provides a lifted
alternative for probabilities of weighted first-order formulas
\cite{gogate2011}; it does not supply \GK's recursive default-justification
procedure.  Numerical agreement is therefore expected only on encodings with
aligned reference semantics.

\subsection{Defaults and answer sets}
Reiter's default logic introduced defaults with justifications and a proof
theory connected to resolution~\cite{reiter1980}.  Prioritized default logics
use specificity or explicit orders~\cite{brewka1994}.  Systems such as DLV
and clingo solve answer-set programs on finite ground programs
\cite{alviano2017,gebser2019}.  s(CASP) evaluates predicate answer-set programs
top down without grounding~\cite{arias2018}.  \GK{} uses a different
architecture.  Candidate proofs are generated by a general first-order
resolution prover, after which default justifications are checked under
decreasing time limits.  These bounds can leave a non-derivability check
incomplete, while the query-directed search can still handle function terms
and large irrelevant domains.

PASTA, plingo, and smProbLog combine numerical choices with stable-model
reasoning in different ways~\cite{azzolini2023,hahn2025,totis2023}.  The
relevant difference is that their native values concern
probabilistic choices and stable models, whereas \GK{} evaluates the region in
which an exception condition is positively usable for one rule application.
Section~\ref{sec:systemcomparison}
reports these native outputs as returned.

Possibilistic argumentation, LPPODs, CHRiSM, probabilistic structured
argumentation, and gradual bipolar argumentation provide further semantic
comparisons.  P-DeLP propagates possibilistic necessity degrees through
arguments and dialectical defeat, with later work adding argument
accrual~\cite{alsinet2008,gomez2013}; LPPODs attach necessity values to rules
and ordered alternatives~\cite{confalonieri2012}; and CHRiSM samples
acceptance of facts and rules~\cite{sneyers2013}.  We compare the numerical
quantities they define and their treatment of undercutting and rebutting, but do not treat a
formalism as an experimental comparator unless its implementation and
translated input were run for the comparison package.

The mechanisms differ in the quantity being measured.
$\beta_{B\mid A}$ is the conditional probability that the exception condition
is positively usable when the rule body is usable.  It determines the
proportions of the blocked and unblocked cases.  Equation~\eqref{eq:blockmix}
combines the corresponding conditional support contributions
$r_{\mathrm{blocked}}$ and $r_{\mathrm{open}}$.
This exception constitutes an undercutting attack.  Rebuttal
is calculated separately at the conclusion atom, retained conflict is
reported, and shared predecessors are enumerated before marginalization.

The implementation differs from the compared systems in the combination of
these capabilities.  ProbLog and
related systems provide probabilistic inference and conditioning on their
supported logic-program fragments but use a
logic-program and relevant-grounding pipeline; Markov logic admits general
weighted formulas but constructs a finite-domain graphical model; probabilistic
ASP supplies quantitative stable-model reasoning on groundable programs; and
s(CASP) supplies non-ground goal-directed defaults without numeric uncertainty.
\GK{} instead uses a non-Horn resolution prover,
including equality and recursively nested function terms, while attaching
input confidences and recursively checked prioritized exceptions to clauses.

\subsection{Conflict and incomplete information}
Belnap--Dunn logics use four logical states corresponding to supported true,
supported false, both, and neither~\cite{belnap1977,fitting1991}.  Recent work
combines this structure with Dempster--Shafer belief functions
\cite{bilkova2023}.  Subjective logic represents belief, disbelief, and
uncertainty with an additional base rate~\cite{josang2016}.  \GK's four fields
are probabilities of shared-threshold regions; they are not the truth values of
a four-valued logic or a subjective-logic opinion.  The similar report shape
reflects the shared need to distinguish contradiction from missing evidence.
The shared-threshold construction has a formal similarity to possibility
distributions: the events
$\{U_A\le a\}$ for varying $a$ nest exactly like the $\alpha$-cuts of a
possibility distribution~\cite{dubois1988}.  The shared-threshold
construction differs from a fuzzy interpretation of the support values in
pooling each side probabilistically before the single comparison and in
drawing the thresholds independently across atoms.

Dempster--Shafer theory represents masses on sets of hypotheses and provides a
rule for combining evidence~\cite{dempster1967,shafer1976}.  Dempster's rule
normalizes away conflicting mass.  \GK{} keeps the conflict region in the
report and reports the Fr\'echet lower bound on unopposed support.  This is appropriate
for the chosen shared-threshold construction; it is not proposed as a general
replacement for evidence-theory combination.
For overlapping bodies of evidence, Den\oe ux's cautious rule is idempotent
and does not assume distinct sources~\cite{denoeux2008}.  \GK{} addresses the
corresponding dependence problem at a different level when proof composition
is visible: shared ground events are represented once in every intersection
term of Equation~\eqref{eq:ie}.

Quantitative bipolar argumentation assigns weights to arguments and combines
support and attack relations~\cite{potyka2020}.  Recent bilateral gradual
semantics provide convergent values even on cyclic weighted bipolar graphs
\cite{wang2026}.  Both approaches combine support and attack at nodes
receiving opposing contributions.  \GK{} additionally records the ground rule
instances and shared activation events of the supporting proofs.  Its defined
cycle policies cover only the cases listed in Appendix~\ref{sec:cycles}.
Prakken gives a structured probabilistic account in which defeasible rule
strengths are conditional probabilities and distinguishes internal from
dialectical argument strength~\cite{prakken2018}.  It addresses uncertain
rules and undercutting attacks, but its
object is an ASPIC+ argument evaluated against a probability distribution;
the \GK{} implementation includes the blocked and unblocked default
calculation, paired reference-class rules, and explicit conflict reporting.
Probability-based measures of argument strength also evaluate necessity and
sufficiency of premises and competing claims under a distribution over
models~\cite{hunter2022}.  That account evaluates argument strength against a
model distribution.  \GK{} instead takes input confidences as parameters and
reports support regions derived from directed rule applications.

\subsection{Pooling rules}
\label{sec:poolingrules}
Linear, logarithmic, multiplicative, maximum, and noisy-or pooling rules have
different characterization properties and generating assumptions
\cite{genest1986,dietrich2016}.  MYCIN certainty factors use another rule for
combining supporting and opposing evidence only in the revised
opposite-sign case~\cite{buchanan1984,heckerman1986}; the original model used
the same pooled-belief minus pooled-disbelief form as \GK's signed confidence
\cite{shortliffe1975}.  The two support values do not determine a unique combination rule; each rule
adds a dependence or normalization assumption.  \GK{} fixes noisy-or for
same-polarity aggregation and shared-threshold opposition resolution for
opposing pools.
Equations
\eqref{eq:signed}--\eqref{eq:frechet} state the dependence property used to
select the latter.

When pooled belief and disbelief are both present, original MYCIN used the signed difference of its pooled
belief and disbelief measures~\cite{shortliffe1975}; \GK{} retains the
positive and negative parts separately.  The later MYCIN/EMYCIN rule is
$(a-b)/(1-b)$ for $a>b$~\cite{buchanan1984,heckerman1986}.  Other alternatives
include the multiplicative normalized difference
$(a-b)/(a(1-b)+(1-a)b)$, normalized Dempster--Shafer belief
$a(1-b)/(1-ab)$~\cite{dempster1967,shafer1976}, and averaging
$(a+1-b)/2$.  These formulas correspond to different dependence or
normalization assumptions.

At $(a,b)=(.7,.4)$, sampling gives unopposed positive support $.2996$ under
the shared-threshold coupling and $.4202$ under independent draws.  The
MYCIN/EMYCIN ratio, multiplicative normalized difference,
Dempster--Shafer belief, and average are $.500$, $.556$, $.583$, and $.650$.
The measured signed difference is approximately $.3$ under both couplings, as
Equation~\eqref{eq:signed} predicts.  The shared-threshold rule therefore yields the conservative lower bound; the
competing formulas are not interchangeable estimates of one quantity.

The provenance-aware calculation in Equation~\eqref{eq:measuredoverlap} is
ordinary union of proof-availability events after shared activation events
have been identified.  A scalar positive/negative support pair alone cannot reproduce the
asymmetric value $.6642$ or the three-proof chain value; the complete
activation-event sets determine both.

\section{Limitations}
\label{sec:limitations}

\paragraph{Search and proof coverage.}
First-order entailment and the non-derivability checks needed by defaults are
undecidable.  Main proof-search limits can omit proofs for the query, proofs
for its explicit negation, or proofs of exception conditions, so additional
search can move a report in either direction.  The threshold-world sampler
follows its entire finite grounded directed program, whereas retained-proof
aggregation is limited to retained proofs; recursive-rule examples therefore
remain a documented retained-proof coverage limitation.  They do not justify
treating each uncertain source statement as one activation event in place of
its ground instances.

\paragraph{Reconstruction and implementation limits.}
Same-polarity retained-proof values are exact only for the retained proof
set when every reconstructed activation-event identifier is ground and replay and
proof-union evaluation remain within their bounds.  When an identifier remains
non-ground, treating occurrences as proof-specific across proofs is an approximation,
not a guaranteed lower bound; the maximum of the individual proof lower bounds
is the guaranteed lower bound retained by the analysis.
A completed dependency-aware value is exact only for the directed derivations
reached by the report-time clause traversal within its stated fragment and
bounds.  Dependency-aware reports marked \texttt{canonical\_atom} and
\texttt{guaranteed} satisfy polarity symmetry.
Retained-proof fallbacks and approximations remain proof-pool decompositions
and are not interpreted as completed atom-level partitions.

Proof replay does not yet reconstruct every equality and rewriting history.
Inclusion--exclusion and joint enumeration have fixed caps.  Their fallbacks
are deterministic and signaled where the public report has a corresponding
flag, but the proof-union approximation applied when more than 20 reduced
activation-event sets are present currently appears as a warning rather than
a report field.  The following
dependency-evaluation limitations can also be unflagged:
\begin{itemize}
\item a non-ground answer can prevent traversal from starting;
\item an index or head-selection miss can prevent the dependency-evaluation trigger;
\item dependency-set overflow can select the retained-proof calculation;
\item the coarse dependency index can merge distinct ground predecessors with
the same predicate-symbol and arity key; and
\item a proof that uses classical factoring is outside the stated fragment
without being detected (Example~12 in Section~\ref{sec:results}).
\end{itemize}

\paragraph{Semantic scope.}
The directed semantics of Section~\ref{sec:premisesearch} excludes
contraposition.  This semantic restriction is unflagged.
The paper specifies selection between the retained-proof and dependency-aware
calculations and proves correspondence between the dependency evaluator and
shared-threshold semantics on the stated fragment.  It does not
define a single joint probability measure over clause activations and atom
thresholds.  A unified measure-theoretic account, or a proof of equivalence to
an established probabilistic logic on a specified syntactic fragment, remains to
be developed.

The paired reference-class complement of Appendix~\ref{sec:pairedexceptions}
assumes that an exception rule partitions its
reference class.  This interpretation is suitable only when the input
confidence was
elicited as a class frequency.  Ordinary source reliability does not license
the complement.  Input authors must distinguish the two uses.

Cycle behavior is defined (Section~\ref{sec:cycles}), but two aspects
remain open.  The credulous, query-relative resolution of even
exception-condition loops
is a design choice; a skeptical alternative would report mutual blocking for
both queries and could be adopted later.  Recursion through a contested atom
still has no completed dependency-aware value: the defined behavior is the
flagged retained-proof fallback obtained by resolving the opposing proof-pool
marginals, not a completed evaluation.  Open premises in generated clause
forms produce retained-proof fallbacks, not completed dependency-aware
evaluations.  Their point values are accompanied by fallback flags and must
not be interpreted as completed shared-threshold values.

\paragraph{Empirical scope.}
The evaluation uses small comparison cases designed to distinguish
the semantics and the commonsense examples introduced in \cite{tammet2021}.
It verifies formulas and implementation branches, but it
does not establish calibration on a large collection of independently measured
real-world frequencies.  Learning or calibrating input confidences from
data is outside the system.

The external-system comparison likewise uses a small set of comparison cases
and hand-checked translations.  It establishes the behavior of
the stated system versions on those cases; it is not a scalability benchmark,
an exhaustive comparison of system capabilities, or evidence that one
semantics subsumes another.  The N/E/A/P/U labels of
Section~\ref{sec:comparisonprotocol} keep those distinctions explicit.

\section{Conclusion}

\GK{} now retains the identities of uncertain premises while combining
proofs and resolves opposing support before it is propagated.  Uncertain
exceptions are evaluated for the rule applications they can block.  Reports
separate positive support, negative support, conflict, and ignorance and
identify detected incomplete calculations.

The retained-proof calculation reproduces proof-union probabilities on the
tested complete proof sets.  The dependency-aware calculation agrees with the
shared-threshold reference model on its stated finite fragment.  Comparisons
with other reasoners show agreement on the one-sided and deterministic
fragments with the same independent-choice and proof-union semantics, and
different outputs when the systems use different semantics for opposition,
defaults, or multiple models.  Explicit \GK{} priorities remain an
encoding-dependent translation limit; no equivalent numerical run is claimed.

The implementation remains bounded.  Proof search can omit explanations, some
report-time limits are unflagged, and several cycle and open-variable cases
return retained-proof fallbacks.  Remaining work includes broader replay
coverage, proof-guided completion of open premises, provenance-aware
aggregation in the threshold-world sampler, a stated criterion for choosing
between credulous and skeptical cycle policies, and empirical calibration of
input confidences.

\section*{Acknowledgments}

We thank Priit J\"arv and Dirk Draheim for ideas and contributions that
influenced the development of \GK{} and several of the concepts underlying
the approach presented in this paper.

\section*{Declaration on generative AI}

During the preparation of this work, the author used generative-AI
assistants---Claude Opus 4.8, Claude Fable 5 (Anthropic) and
GPT-5.6 Sol (OpenAI)---to draft experiment code and figure code,
edit text, and review drafts.  The author reviewed and edited the
resulting material as needed and takes full responsibility for the
content of this paper.

\appendix
\section{\GK{} syntax and report fields}
\label{sec:gksyntax}

This appendix collects the concrete notation and report vocabulary.  The main
text uses mathematical rule notation; concrete inputs use the syntax below.

\paragraph{Concrete \GK{} syntax.}
The mathematical notation used above is independent of the input language.
The following Prolog-style fragment shows the main public input forms:
\begin{verbatim}
0.8::bird(tweety).
0.9::flies(X) :- bird(X).
0.3::-bird(robin).

flies(X) :- bird(X), unless(injured(X), 0).
flies(X) :- bird(X), unless(-flies(X), 2).
flies(X) :- bird(X), unless(-flies(X), 0).

query(flies(tweety)).
query(-flies(tweety)).
query(flies(X)).
\end{verbatim}
The first two lines are a fact and rule with input confidences.  The third uses
explicit classical negation.  The next three rules show an arbitrary exception
condition, a contrary-gated default at explicit priority 2, and a
priority-zero default, respectively.  The final lines are ground, explicitly
negated, and open queries.
\texttt{unless(E,r)} names the exception condition \texttt{E} and the priority
\texttt{r}; zero is incomparable with positive ranks.  The prefix
\texttt{-} is explicit negation, not negation as failure.  \GK{} also accepts
JSON and TPTP input; all forms are clausified before proof search.

\paragraph{All-negative orientation fallback.}
In inputs using the \texttt{isa} convention, an \texttt{isa} literal denotes
a type-membership premise and is treated as a guard.  For an all-negative
clause without blocker metadata, the evaluator first selects the sole
non-\texttt{isa} eligible literal when every other eligible literal is an
\texttt{isa} guard.  Otherwise it uses the source-head marker recorded from an
authored rule before clause sorting.  If neither selection rule applies, it selects the
last eligible literal in the stored clause order after clausification.  The
last fallback is order-dependent.  Blocker and answer literals are not
eligible.

\textbf{Exception limit.}  The command-line option
\texttt{-blockerconfidence <value>} sets $\lambda$ in the exception-limit test
defined in Section~3.3.  The default is $0.5$.

For a query literal $L$, the four fields are positive support $s^+(L)$,
negative support $s^-(L)$, conflict $c(L)$, and ignorance $g(L)$.  Signed
confidence is $C(L)=s^+(L)-s^-(L)$.

With optional report details enabled, the current implementation reports the
following fields; the default output remains unchanged.
\begin{center}
\scriptsize
\begin{tabular}{L{2.6cm}L{4.2cm}L{6.4cm}}
\toprule
Field & Meaning & Values \\
\midrule
\texttt{calculation}
& calculation selected
& \texttt{canonical\_atom}, \texttt{blocked\_flat},
  \texttt{proof\_fallback}, \texttt{flat} \\
\texttt{coverage\_status}
& coverage assessment
& \texttt{complete}, \texttt{incomplete},
  \texttt{unsupported\_fragment} \\
\texttt{polarity\_status}
& polarity-orientation status
& \texttt{guaranteed}, \texttt{not\_guaranteed} \\
\bottomrule
\end{tabular}
\end{center}

Section~\ref{sec:reportstatus} supplies the paper-level interpretation.
\texttt{canonical\_atom} with the required coverage denotes a completed
shared-threshold partition.  \texttt{flat} denotes a direct retained-proof
result, and \texttt{proof\_fallback} a flagged retained-proof fallback; both
are proof-pool decompositions rather than atom-level partitions.
\texttt{blocked\_flat} denotes the local blocked/unblocked default
calculation.  A result outside its stated fragment has no atom-level partition
interpretation.

Common status causes include non-ground answers or premises, unsupported
factoring, nonmonotonic cycles, proof-union or traversal caps, replay failure,
enumeration limits, and deadlines.  Flags report detected causes and
fallbacks; Section~6.6 states the remaining unflagged limitations.

In dependency-aware reports, the sign of
$C(L)=s^+(L)-s^-(L)$ determines accepted-versus-rejected list placement and
the displayed assessment magnitude is $|C(L)|$.  Compatibility mode instead
uses the recursive exception-limit test, so candidate placement can differ
from the direction of the reported tuple.
For a ground answer obtained through an open query, the same four fields and
status vocabulary apply.  A completed within-fragment evaluation gives the same
atom-level tuple as the corresponding ground query.  A dependency-aware report with
\texttt{calculation=canonical\_atom} and
\texttt{polarity\_status=guaranteed} satisfies the component-swap property for
opposite-polarity ground queries.
Direct retained-proof results, flagged retained-proof fallbacks, and
outside-fragment numerical results do not acquire an atom-level partition
interpretation merely from their polarity presentation.  Recognized
incomplete cases carry the relevant cap or deadline cause and flags including
\begin{center}
\texttt{PROOF\_FALLBACK}, \quad
\texttt{SCRUTINY\_INCOMPLETE}, \quad
\texttt{OPEN\_PREMISE}.
\end{center}
The legacy output flag \texttt{SCRUTINY\_INCOMPLETE} denotes incomplete
dependency-aware evaluation.

\section{Further default cases}
\subsection{Paired reference-class exceptions}
\label{sec:pairedexceptions}

Ordinary default blocking does not require this extension: when an exception
condition is
usable, the blocked branch contributes zero.  \GK{} additionally recognizes the
restricted paired construction below, in which an explicit exception rule
partitions the blocked reference class.  It is kept in an appendix because
this class-frequency reading is optional and more specialized than the core
input-confidence and default semantics.  This section supplies the exact meaning of
the optional blocked-branch contribution $r_{\mathrm{blocked}}=p_{\rm exc}$ mentioned in
Section~\ref{sec:defaults} and states when the implementation may apply it.

For the complete ground query instance, let the input contain
\[
bird(a),\qquad 0.6::penguin(a),
\]
\[
0.9::flies(x)\leftarrow bird(x)
\quad[\mathrm{unless}\ penguin(x)],\qquad
0.8::\neg flies(x)\leftarrow penguin(x),
\]
and query $flies(a)$.
The main and exception branches are exclusive for one paired rule instance.
Under the
reference-class interpretation used by \GK, the exception-rule confidence
partitions
the blocked class: the $0.8$ part supports $\neg flies$ and its $0.2$ residue
supports $flies$.  The ordinary body is certain, so
$\beta_{B\mid A}=0.6$, and the report is
\[
  s^+=0.4\cdot0.9+0.6\cdot0.2=0.48,
  \qquad s^-=0.6\cdot0.8=0.48,
\]
with conflict zero and ignorance $0.4\cdot0.1=0.04$.
This reading is licensed when $0.8$ is elicited as the observed frequency of
the negative outcome among members of the exception class and the class is
assumed to have exactly the two displayed outcomes.  A source reliability
score or an arbitrary rule confidence does not license assigning its
complement to the original head.
This is the paired row of Table~\ref{tab:defaultexamples}: the
bounded-prover clause-activation sample returned signed value $-.115$, consistent
with treating the two branches as ordinary derivability events, while
shared-threshold sampling measured
$(.479,.481,0,.040)$ and \GK{} gives $(.48,.48,0,.04)$.

The complement is licensed only by this explicit paired reference-class
construction.  An input confidence $q$ on a rule supporting one conclusion
does not imply a separate input confidence $1-q$ on a rule supporting the
opposite conclusion.  With extra
exception conditions $E_i$ and input confidences $q_i$, define the following
conditional quantities when the conditions have no shared predecessors:
let $u_i$ be the conditional probability that $E_i$ is positively usable
within the blocked class.  The positive, negative, and ignorant conditional
components of the blocked branch are
\[
 b^{+}=\prod_i(1-q_i u_i)-\prod_i(1-u_i),\qquad
 b^{-}=1-\prod_i(1-q_i u_i),\qquad
 b^{g}=\prod_i(1-u_i).
\]
They sum to one.  Their unconditional blocked-branch probabilities are
\[
 s^+_{\rm blocked}=\beta_{B\mid A}b^{+},\qquad
 s^-_{\rm blocked}=\beta_{B\mid A}b^{-},\qquad
 g_{\rm blocked}=\beta_{B\mid A}b^{g}.
\]
The scalar $r_{\mathrm{blocked}}$ in Equation~\eqref{eq:blockmix} refers only
to $b^{+}$, the
blocked branch's contribution to the original head; a complete paired branch
can also contribute to the opposite head and to ignorance.

For one extra condition this reduces to
\[
 s^+_{\rm blocked}=\beta_{B\mid A}u_E(1-q),\qquad
 s^-_{\rm blocked}=\beta_{B\mid A}q u_E,\qquad
 g_{\rm blocked}=\beta_{B\mid A}(1-u_E),
\]
where $u_E$ is the corresponding conditional probability that the extra
condition is positively usable.
Failure of the extra condition therefore contributes only to ignorance; it
does not become support for the main head.

Pairing is local and syntactic.  The exception head must complement the
contrary-gated rule's head under the same ground substitution, and the same
exception literal must occur in the exception rule's body.  A rule that does not
meet these conditions contributes ordinary support for the opposite
conclusion and therefore constitutes a rebutting attack.  Even when paired with
one rule instance, the exception remains a rebutting attack on other, unpaired
derivations of the same head.  Without these restrictions, replacing an input confidence
$q$ by its complement $1-q$ would introduce a confidence value that the
input does not state.

\label{sec:defaultextensions}

\subsection{Priority restrictions}
\label{sec:rankclasses}

Priority also restricts which arguments a blocker check may use: inside a
check at priority $\pi$, support derived through a default of strictly lower
priority contributes nothing.  Only the higher-priority support remains.  The
two cases use
\[
\begin{aligned}
p(a)&\ [\mathrm{unless}\ \neg p(a);\pi_p],\\
.9::\neg p(a)&\ [\mathrm{unless}\ p(a);\pi_n].
\end{aligned}
\]
With $(\pi_p,\pi_n)=(3,1)$, the candidate's only exception proof passes through a
lower-priority default and is excluded, giving $(1,0,0,0)$.  With
$(\pi_p,\pi_n)=(1,3)$, the exception is admissible and the higher-ranked negative
side gives $(0.1,0.9,0,0)$.  The restriction depends only on the input
(priorities and derivation structure), so it is part of the semantics rather
than a search heuristic, and the threshold-world sampler of
Section~\ref{sec:samplers} implements it.
For these two examples, Model~1 gives signed values $+1$ and $-.8$, while
shared-threshold sampling gives $(1,0,0,0)$ and
$(.1,.9,0,0)$, equal to \GK{}.

A priority restriction inherited from an enclosing exception check remains in
force throughout recursive evaluation.  The local joint calculation omits
every contribution excluded by that context.  The current
shared-threshold sampler reports the corresponding contextual priority-zero case
outside its supported fragment.

Priorities can be integers or classes ordered by a taxonomy; a more specific
class defeats a more general one~\cite{tammet2022}.

\subsection{Default cycles}
\label{sec:cycles}

Defaults can block each other through their conclusions.  The defined forms
are summarized before the details.

\begin{center}
\scriptsize
\begin{tabular}{L{4.5cm}L{8.8cm}}
\toprule
Dependency form & Treatment \\
\midrule
Acyclic exception dependencies & core shared-threshold evaluation \\
Two defaults over different conclusion atoms, each with the other default's
conclusion as its exception condition
& query-relative credulous cycle policy \\
Opposite-polarity defaults with explicit positive ranks
& local mutual blocking or strict-priority override \\
Reciprocal opposite-polarity defaults at priority zero
& remove the two internal exception edges and combine active applications as
ordinary unranked support \\
Self-blocking default $p$ unless $p$
& contributes nothing; pure ignorance without other support \\
Recursion through a contested atom
& no completed dependency-aware value; flagged retained-proof fallback \\
\bottomrule
\end{tabular}
\end{center}

This table is exhaustive for the cyclic forms assigned a result here; it is
not a general semantics for cyclic default programs.  The least-fixpoint
operator used for monotone same-polarity cycles is defined in
Section~\ref{sec:samplers}.  Every cycle with an opposition or exception edge
must match one of the remaining rows or is reported outside the fragment.

First, an even loop --- two defaults each naming the other's conclusion as its
exception --- resolves credulously and per query: an exception argument whose
own validity depends on defeating the queried candidate is not used to defeat
that candidate.  With
$p(a)\leftarrow t(a)$ unless $q(a)$, and $q(a)\leftarrow t(a)$ unless
$p(a)$, the query $p(a)$ answers with signed confidence one and the query
$q(a)$ also answers with signed confidence one on the same knowledge base.
The two answers are
jointly inconsistent by design, exactly as credulous extensions of a default
theory are.  In the same loop with rule confidences $0.9$ and $0.8$, the
queried side answers with its own support: $(0.9,0,0,0.1)$.

The local opposing-default case of Section~\ref{sec:localrules} is a pair
of opposite-polarity defaults with the same explicit positive priority; the
query-relative loop above is not of that form.  In the local case the two
threshold conditions mutually block, and their simultaneous region
contributes to ignorance.  When both priorities are omitted they are zero,
and the reciprocal priority-zero rule removes the two internal exception
references before combining the active applications as ordinary unranked
support; the simultaneous region then contributes to conflict.

A pair
$p\ [\mathrm{unless}\ q]$ and
$q\ [\mathrm{unless}\ p]$
has two distinct conclusion atoms and remains governed by the stated
credulous cycle policy.

Second, a self-blocking default ($p(a)$ unless $p(a)$) contributes nothing;
the report is pure ignorance.  Self-blocking is determined from the
application's authored head and is therefore the same in both query
orientations.

Third, recursion through a \emph{contested} atom has no defined
dependency-aware value.  The dependency-aware evaluation returns no completed
result, so the report uses
ordinary opposition between the proof pools and sets the incompleteness flags
\begin{center}
\texttt{PROOF\_FALLBACK}\\
\texttt{SCRUTINY\_INCOMPLETE}.
\end{center}
No dependency-aware value is
reported because that calculation did not complete; the displayed number is
the flagged retained-proof fallback.

The two reference constructions give the following results on these cases.
On the certain even loop, Model~1 gives signed value $+1$, while ST gives
$(1,0,0,0)$ for either closed query, matching the two query-relative \GK{}
answers.  On the same loop with rule confidences $.9$ and $.8$, Model~1
has signed value $+.9$ for the first query and ST gives $(.9,0,0,.1)$;
reversing the query gives the corresponding $.8$ result.  On the self-blocking
default Model~1 gives zero and ST gives $(0,0,0,1)$, matching \GK's pure
ignorance.  On the contested recursive example Model~1 has signed value
$.7-.4=.3$, while ST reports the nonmonotonic cycle outside its supported
fragment; \GK{} returns the same signed fallback together with
conflict and incompleteness flags.

The credulous interpretation of even loops is a design choice; a skeptical
alternative in which both queries report mutual blocking would also be
coherent.  The policy is defined per query and is reproduced by the
threshold-world sampler on its stated fragment; the reasoner's operational
result remains subject to the coverage and time budget of its blocker
searches.

\subsection{Conflict sensitivity}

The report can also list contested predecessor atoms and compute an optional
conflict-sensitivity interval.
For $k$ contested predecessor atoms, the system recomputes the answer for the $2^k$
global assignments that send each listed atom's entire conflict region to
either its positively or its negatively usable state.  One assignment is used
consistently on every downstream occurrence of that atom, so shared
dependencies are preserved during recomputation.  The minimum and maximum
signed confidences over these assignments form a sensitivity interval.  It is
not a probability interval and assumes no distribution over the assignments.

\section{Probabilistic and causal scope}
\label{sec:causalboundary}

\subsection{Conditioning and explaining away}

For the two-cause alarm network,
$P(burglary)=.1$, $P(earthquake)=.2$, and the two alarm rules have confidences
$.9$ and $.6$.  Exact enumeration, \ProbLog{}, and \GK{} all give the aligned
forward value
\[
  P(alarm)=1-(1-.9\cdot.1)(1-.6\cdot.2)=.1992.
\]
Clause-activation and shared-threshold sampling both estimate this same
one-sided value.  Exact conditioning, however, gives a burglary posterior of
$.457831$; observing an earthquake as well reduces it to $.150943$.
Asserting $alarm$ as a new \GK{} fact does not condition the existing clause
choices and cannot express the explaining-away query.

\subsection{Causal intervention}

The firing-squad structural model of \cite{bochman2015} illustrates the
intervention limitation.  Let the court order and structural equations be
\[
C\sim\mathrm{Bernoulli}(.4),\qquad A=C,\qquad B=C,\qquad dies=A\lor B.
\]
The forward model gives $P(dies)=.4$.  After externally
replacing the equation for shooter $A$ by $A=1$ and fixing the context $C=0$,
the transformed model has $B=0$ and $dies=1$.  The corresponding \GK{}
knowledge base entails $dies$ with
signed confidence one and explicitly refutes that shooter $B$ fires.  The run
checks only the externally transformed forward model; \GK{} has no native
\texttt{do} operator.  Back-door,
front-door, and same-context counterfactual values were rechecked by exact
enumeration and Monte Carlo, but are not reported as \GK{} results because the
current query language cannot express them.
On the forward encoding, CA and ST both estimate the same $.4$ one-sided
value.  On the externally transformed deterministic input both return one.
These sampler results validate only the two supplied forward programs;
neither sampler supports intervention or counterfactual queries.

\FloatBarrier

\section{Experiment inputs}
\label{sec:completeinputs}

This appendix gives the source-level logical content of the examples in
Tables~\ref{tab:instances} and~\ref{tab:studyclass}.  Some stored example
files also contain prover output and encodings for other systems;
those logs are not part of the input and are not reproduced here.  No example
was excluded for size: the largest printed example, Example~17, has 23
facts/rules plus its query.  In the compact notation below, omitted
input confidences are one, variables begin with an upper-case letter, commas in a
rule body mean conjunction, a prefix \texttt{-} denotes explicit classical negation,
and \texttt{Q:} introduces the query.  Duplicate uncertain statements, such
as the two $.5::bird(a)$ statements in Example~1, are retained as
distinct input-clause occurrences with distinct identifiers and therefore as
distinct activation events.
Confidence-bearing equivalences below are syntactic shorthand.  Their exact uncertainty
events are the clausified clauses with split confidences described in
Section~3.1, so classically equivalent source formulas need not define the
same activation model.  The inputs printed here are the reproduction record
for the examples.

\subsection{Default calculation inputs}
\label{sec:defaultcaseinputs}

Here \(L\) and \(M\) are proposition names used only in their own examples;
priority follows the semicolon in E4.
\[
\begin{array}{ll}
\mathrm{E1}:&
 .7::B,\quad .3::\neg B,\quad .9::L\ [\mathrm{unless}\ B].\\[1mm]
\mathrm{E2}:&
 .6::P,\quad .9::L\ [\mathrm{unless}\ P],\quad
 .8::\neg L\leftarrow P.\\[1mm]
\mathrm{E3}:&
 .6::P,\quad .5::E,\quad .9::L\ [\mathrm{unless}\ P],\quad
 .8::\neg L\leftarrow P\land E.\\[1mm]
\mathrm{E4}:&
 .9::L\ [\mathrm{unless}\ \neg L;2],\quad
 .3::\neg L\ [\mathrm{unless}\ L;3].\\[1mm]
\mathrm{E5}:&
 .6::S,\quad .8::D\leftarrow S,\quad
 .9::M\leftarrow D\ [\mathrm{unless}\ C],\quad C\leftarrow S.
\end{array}
\]

\subsection{Event-identity test cases}

\begin{small}
\begin{verbatim}
IE1:
  p(a).  p(b).
  .8::r(X) <- p(X).
  Q: r(a) and r(b).

IE2:
  p(a).
  .8::q(X) <- p(X).
  s(X) <- q(X).  t(X) <- q(X).
  g(X) <- s(X), t(X).
  Q: g(a).

IE3:
  .9::p(a).  .85::p(b).
  .8::r(X) <- p(X).
  .7::r(c) <- r(a), r(b).
  Q: r(c).
\end{verbatim}
\end{small}

\subsection{The example inputs}

Examples 13--14, 15--16, and 20--21 share a knowledge base within each pair
and differ only in the query shown.  Printing each shared knowledge base once
is therefore a complete specification of both examples.

\begin{small}
\begin{verbatim}
Example 1:
  .5::bird(a).  .5::bird(a).
  Q: bird(a).

Example 2:
  .5::bird(a).  .6::bird(a).
  Q: bird(a).

Example 3:
  .5::bird(a).  .6::bird(b).
  twobirds(dummy) <- bird(a), bird(b).
  Q: twobirds(dummy).

Example 4:
  .5::bird(a).  .6::bird(b).
  twobirds(X,Y) <- bird(X), bird(Y).
  Q: twobirds(X,Y).

Example 5:
  .5::bird(a).  .5::-bird(a).
  Q: bird(a).
\end{verbatim}
\end{small}
\newpage
\begin{small}
\begin{verbatim}

Example 6:
  .5::bird(a).  .6::bird(a).  .5::-bird(a).
  Q: bird(X).

Example 7:
  .5::bird(a).  .6::bird(a).  .5::-bird(a).
  Q: -bird(X).

Example 8:
  .8::stress(ann).
  .6::influences(ann,bob).  .2::influences(bob,carl).
  smokes(X) <- stress(X).
  smokes(X) <- smokes(Y), influences(Y,X).
  Q: smokes(carl).

Example 9:
  .8::stress(ann).  .4::stress(bob).
  .6::influences(ann,bob).  .2::influences(bob,carl).
  smokes(X) <- stress(X).
  smokes(X) <- smokes(Y), influences(Y,X).
  Q: smokes(carl).

Example 10:
  .8::stress(ann).  .4::stress(bob).
  .6::influences(ann,bob).  .2::influences(bob,carl).
  smokes(X) <- stress(X).
  smokes(X) <- smokes(Y), influences(Y,X).
  Q: smokes(X).

Example 11:
  .3::smokes(X).  .1::friends(X,Y).
  .9::friends(Y,X) <- friends(X,Y).
  .6::susceptible(X).
  .2::smokes(X) <- smokes(Y), friends(X,Y), susceptible(X).
  friends(chris,sam).  smokes(chris).
  Q: smokes(sam).

Example 12:
  .3::smokes(X).  .1::friends(X,Y).
  .9::friends(Y,X) <- friends(X,Y).
  .6::nonconformist(X).
  .2::smokes(X) <- -smokes(Y), friends(X,Y), nonconformist(X).
  friends(chris,sam).  smokes(chris).
  Q: smokes(sam).
\end{verbatim}
\end{small}
\newpage
\begin{small}
\begin{verbatim}

Examples 13 and 14:
  person(john).  person(mary).
  .7::burglary(t1).  .2::earthquake(t1).
  .9::(burglary(t1) and earthquake(t1) <-> alarm(t1)).
  .8::(burglary(t1) and -earthquake(t1) <-> alarm(t1)).
  .1::(-burglary(t1) and earthquake(t1) <-> alarm(t1)).
  .8::(alarm(t1) and person(X) <-> calls(X)).
  .1::(-alarm(t1) and person(X) <-> calls(X)).
  calls(john).  calls(mary).
  Q13: burglary(t1).  Q14: earthquake(t1).

Examples 15 and 16:
  .3::stress(X) <- person(X).
  .2::influences(X,Y) <- person(X), person(Y).
  smokes(X) <- stress(X).
  smokes(Y) <- friend(X,Y), influences(Y,X).
  .4::asthma(X) <- smokes(X).
  person(1). person(2). person(3). person(4).
  friend(1,2). friend(2,1). friend(2,4).
  friend(3,2). friend(4,2).
  smokes(2).  -influences(4,2).
  Q15: smokes(X).  Q16: asthma(X).

Example 17:
  movie(X) <- moviedirectedbydirector(X,Y).
  movie(X) <- moviestaractor(X,Y).
  movie(X) <- movie_base(X).
  moviedirectedbydirector(Y,X) <- directordirectedmovie(X,Y).
  moviestaractor(Y,X) <- actorstarredinmovie(X,Y).
  directordirectedmovie(X,Y) <- directordirectedmovie_base(X,Y).
  actorstarredinmovie(X,Y) <- actorstarredinmovie_base(X,Y).
  book(X) <- bookwriter(X,Y).
  book(X) <- book_base(X).
  bookwriter(X,Y) <- bookwriter_base(X,Y).
  .9375::directordirectedmovie_base(ronhoward,abeautifulmind).
  .995924::movie_base(abeautifulmind).
  .999999::book_base(abeautifulmind).
  .927773::movie_base(casinoroyale).
  .9375::directordirectedmovie_base(martincampbell,casinoroyale).
  .999512::actorstarredinmovie_base(danielcraig,casinoroyale).
  .999996::bookwriter_base(casinoroyale,ianfleming).
  .999998::book_base(casinoroyale).
  .9375::bookwriter_base(sleepyhollow,washingtonirving).
  .96875::actorstarredinmovie_base(christopherwalken,sleepyhollow).
  .976353::movie_base(theadventuresofrobinhood).
  bookwriter_base(theadventuresofrobinhood,howardpyle).
  moviebook(X) <- movie(X), book(X).
  Q: moviebook(X).
\end{verbatim}
\end{small}
\newpage
\begin{small}
\begin{verbatim}

Example 18:
  .5::bird(a).  .5::-bird(a).
  .9::flies(X) <- bird(X).
  Q: flies(X).

Example 19:
  .5::bird(a).  .5::-bird(a).
  .9::flies(X) <- bird(X).
  Q: bird(X).

Examples 20 and 21:
  .5::bird(a).  .5::-bird(a).
  .9::flies(X) <- bird(X).
  .1::-flies(X) <- bird(X).
  .1::flies(X) <- -bird(X).
  .9::-flies(X) <- -bird(X).
  Q20: flies(X).  Q21: bird(X).

Example 22:
  bird(tweety).  penguin(pennie).
  bird(X) <- penguin(X).
  .001::penguin(X) <- bird(X).
  .9::flies(X) <- bird(X).
  -flies(X) <- penguin(X).
  Q: flies(X).

Example 23:
  .5::bird(messy).  .5::-bird(messy).
  bird(tweety).  penguin(pennie).
  bird(X) <- penguin(X).
  .001::penguin(X) <- bird(X).
  .999::-penguin(X) <- bird(X).
  .9::flies(X) <- bird(X).
  .1::-flies(X) <- bird(X).
  .1::flies(X) <- -bird(X).
  .9::-flies(X) <- -bird(X).
  -flies(X) <- penguin(X).
  Q: flies(X).
\end{verbatim}
\end{small}

\subsection{Priority-zero checks}
\label{sec:rank0artifacts}

The public file \path{montecarlo/test_threshold_rank0.py} contains the positive,
negative, certain, one-sided, mixed, and contextual-priority cases with their
expected tuples.  The input in Section~\ref{sec:defaultcalcs} gives the
positive and negative D4-0 queries; the sampler changes only the query
polarity.  \path{Examples/exceptions/nixon.gkp} is the public deterministic
Nixon example.

\section{System comparison inputs and provenance}
\label{sec:crossinputs}

The abstract inputs below define the IDs in
Table~\ref{tab:systemmaster}; concrete
syntax differs by system and is preserved separately so that a surface
translation is not mistaken for a shared semantics.  Omitted input confidences are
one, and $Q$ is the query.
\[
\begin{array}{ll}
\mathrm{C1}:&
.5::a,\ .6::b,\quad Q\leftarrow a,\quad Q\leftarrow b;\\[1mm]
\mathrm{C2}:&
.5::a,\ .6::b,\quad Q\leftarrow a\land b;\\[1mm]
\mathrm{D1}:&
bird(a),\ .9::\neg flies(a),\quad
flies(a)\ [\mathrm{unless}\ \neg flies(a)];\\[1mm]
\mathrm{D2}:&
.7::B,\ .3::\neg B,\quad .9::H\ [\mathrm{unless}\ B];\\[1mm]
\mathrm{D3}:&
.5::B,\ .5::\neg B,\quad .9::H\leftarrow B;\\[1mm]
\mathrm{D4}:&
quaker(n),\ republican(n),\\
& pacifist(n)\ [\mathrm{unless}\ \neg pacifist(n);2],\quad
\neg pacifist(n)\ [\mathrm{unless}\ pacifist(n);2];\\[1mm]
\mathrm{D4\mbox{-}P}:&
.8::quaker(n),\ .6::republican(n),\\
& pacifist(n)\leftarrow quaker(n)\land
\mathop{\mathrm{not}}\neg pacifist(n),\\
& \neg pacifist(n)\leftarrow republican(n)\land
\mathop{\mathrm{not}}pacifist(n);\\[1mm]
\mathrm{D5}:&
.9::L\ [\mathrm{unless}\ \neg L;2],\quad
.3::\neg L\ [\mathrm{unless}\ L;3];\\[1mm]
\mathrm{D6a}:&
flies(a)\ [\mathrm{unless}\ injured(a)];\\
\mathrm{D6b}:&
flies(a)\ [\mathrm{unless}\ \neg flies(a)];\\[1mm]
\mathrm{D7}:&
.6::P,\quad .9::L\ [\mathrm{unless}\ P],\quad
.8::\neg L\leftarrow P;\\[1mm]
\mathrm{EQ1}:&
.9::p(a),\ .7::\neg p(b),\ .8::(a=b),\quad Q=p(a);\\[1mm]
\mathrm{X1}:&
.7::p(a),\ .6::\neg p(a),\ q(b),\quad Q=q(b);\\[1mm]
\mathrm{F1}:&
.9::bird(a),\quad .8::bird(f(X))\leftarrow bird(X),\quad
Q=bird(f(f(a)));\\[1mm]
\mathrm{DJ1}:&
.8::(p(a)\lor q(a)),\ .9::\neg p(a),\quad Q=q(a);\\[1mm]
\mathrm{N1}:&
.5::bird(a),\ .6::bird(b),\quad
twobirds(X,Y)\leftarrow bird(X)\land bird(Y),\\
&Q=twobirds(X,Y);\\[1mm]
\mathrm{N2}:&
.8::stress(ann),\ .4::stress(bob),\\
&.6::influences(ann,bob),\ .2::influences(bob,carl),\\
&smokes(X)\leftarrow stress(X),\\
&smokes(X)\leftarrow influences(Y,X)\land smokes(Y),\quad
Q=smokes(X).
\end{array}
\]
D4-P's displayed rules are the stable-model input.  Its D4-EQ \GK{} analogue
replaces each \(\mathop{\mathrm{not}}\) guard by a contrary-gated \GK{} rule with
a blocker literal encoding its exception condition at priority 2.  D4-0 is the otherwise identical priority-zero
regression with both priorities omitted.  It is not used as a translation of
the stable-model program.  Both query orientations are tested, and the
current implementation returns the component-swapped priority-zero partitions
stated in Section~\ref{sec:defaultcalcs}.  Their inputs and captured outputs are
listed in Appendix~\ref{sec:completeinputs}.

The small encoding differences used in the comparisons are shown explicitly
below.  D1-PL was run in the additional comparison suite.  D3-raw remains
an analytic baseline.  \texttt{not} denotes the target system's negation as
failure.
\begin{small}
\begin{verbatim}
D1-PL (A, run):
  bird(a).  0.9::neg_flies(a).
  flies(a) :- bird(a), not neg_flies(a).
  query(flies(a)).  query(neg_flies(a)).

D2-ST (GK, N):
  0.7::b.  0.3::-b.
  0.9::h :- unless(b).
  query(h).

D2-A (PASTA/plingo/smProbLog, A):
  0.7::b.  0.3::neg_b.  0.9::gate.
  h :- gate, not b.
  query(h).

D3-raw (E) and D3-guard (A):
  0.5::b.  0.5::neg_b.
  0.9::h_raw :- b.
  0.9::h_guard :- b, not neg_b.
  query(h_raw).  query(h_guard).

D4-P (N in stable-model systems):
  0.8::quaker.  0.6::republican.
  pacifist    :- quaker,    not nonpacifist.
  nonpacifist :- republican, not pacifist.
  query(pacifist).  query(nonpacifist).

D4-EQ-positive-query / D4-EQ-negative-query (A relative to D4-P):
  0.8::quaker(n).  0.6::republican(n).
  pacifist(X)  :- quaker(X),    unless(-pacifist(X),2).
  -pacifist(X) :- republican(X), unless(pacifist(X),2).
  query(pacifist(n)).            % positive-query file
  query(-pacifist(n)).           % negative-query file
\end{verbatim}
\end{small}

C3 is represented by the two complete shared-premise files
\path{Examples/confidences/overlap1.js} and
\path{Examples/confidences/overlap3.js}; their proof DAGs are too large to print usefully
here.

The EQ1, X1, F1, and DJ1 encodings are the per-row files of the
\path{comparisons/} package: \path{inputs/gk/e1.gkp}, \path{inputs/pasta/pasta_x1.lp},
\path{inputs/problog/problog_f1.pl}, \path{inputs/asp/asp_dj1.lp}, and the analogously named files
for the other systems; the EQ1 files keep the shorter \path{e1} stem.  The
substitution rules of EQ1, the strong-negation
and loop encodings of X1, and the even-loop disjunction of DJ1 are stated
in the package's case descriptions (\path{CASES.md}); the recorded failure
outputs and exit codes are in its results records.

Table~\ref{tab:crossmanifest} gives selected mappings from the comparison
runs to input specifications or captured files.  The complete
machine-readable manifest has one record for every table cell, including its
query or queries.  The
manifest of the comparison package records the corresponding commands;
\path{probabilistic_defaults_2026/comparison_manifest.json} gives the
initial cross-system mapping, and \path{comparisons/README.md} documents
the additional runs.  Entries beginning
\path{pasta/}, \path{plingo/}, \path{smproblog/}, or \path{gk_inputs/}
are relative to \path{probabilistic_defaults_2026/}.
The Class column uses the N/E/A/P/U labels of
Section~\ref{sec:comparisonprotocol}; U occurs only in the complete
machine-readable manifest, and P does not occur in these mappings.  These classes describe the relation between inputs, not
whether a command completed.

{\scriptsize
\setlength{\tabcolsep}{3pt}
\begin{longtable}{L{1.2cm}L{3.0cm}L{5.25cm}L{2.55cm}L{1.4cm}}
\caption{Selected input and requested-output mappings for the semantic comparison runs.}
\label{tab:crossmanifest}\\
\toprule
ID & System & Input specification or file & Query or requested output & Class \\
\midrule
\endfirsthead
\multicolumn{5}{c}{\tablename\ \thetable\ continued}\\
\toprule
ID & System & Input specification or file & Query or requested output & Class \\
\midrule
\endhead
\bottomrule
\endfoot
C1 & \ProbLog{} 2.2.10 & abstract C1 input above & \texttt{q} & E \\
   & \GK{} & \path{Examples/confidences/cumulate.js} & \texttt{bird(a)} & N \\
   & PASTA 1.0.1 & \path{Examples/system_comparison/pasta_independent_support.lp} & \texttt{q} & E \\
   & plingo 1.1.0 & abstract C1 input above & \texttt{q} & E \\
   & smProbLog 2.1.0.42 & abstract C1 input above & \texttt{q} & E \\
C2 & \ProbLog{} 2.2.10 & abstract C2 input above & \texttt{q} & E \\
   & \GK{} & \path{Examples/confidences/rulemult.js} & \texttt{twobirds(dummy)} & N \\
   & PASTA 1.0.1 & \path{Examples/system_comparison/pasta_conjunctive_support.lp} & \texttt{q} & E \\
   & plingo 1.1.0 & abstract C2 input above & \texttt{q} & E \\
   & smProbLog 2.1.0.42 & abstract C2 input above & \texttt{q} & E \\
D2-ST & \GK{} & abstract D2 input above & \texttt{h} & N \\
D2-A & PASTA 1.0.1 & activation analogue stated in Section~9.2 & \texttt{h} & A \\
   & plingo 1.1.0 & same activation analogue & \texttt{h} & A \\
   & smProbLog 2.1.0.42 & same activation analogue & \texttt{h} & A \\
D3-guard & \GK{} & Example 18 in Appendix~\ref{sec:completeinputs} & \texttt{flies(a)} & N \\
D3-guard & \ProbLog{} 2.2.10 & guarded D3 analogue stated in Section~9.2 & \texttt{flies(a)} & A \\
D4 & \GK{} & \path{Examples/exceptions/nixon.js} & \texttt{dislikeswar(n)} & N \\
D4 & PASTA 1.0.1 & abstract D4 input above & \(pacifist(n)\) & N \\
   & TweetyProject 1.31 & abstract D4 input above & positive argument & N \\
   & I-DLV 1.1.6 & abstract D4 input above & ground program for clasp 3.3.5 & N \\
D4-P & PASTA 1.0.1 & D4-P input in Section~9.2 & \(pacifist(n)\) & N \\
   & plingo 1.1.0 & same D4-P input & \(pacifist(n)\) & N \\
   & smProbLog 2.1.0.42 & same D4-P input & \(pacifist(n)\) & N \\
D4-EQ & \GK{} & D4-EQ input in Section~9.2 & positive & A \\
   & \GK{} & same D4-EQ input & negative & A \\
D1 & \ProbLog{} 2.2.10
   & \path{comparisons/inputs/problog/problog_cases.pl} & \(flies(a)\) & A \\
D4 & clingo 5.6.2
   & \path{comparisons/inputs/asp/asp_cases.lp} & \(pacifist(n)\) & N \\
D4 & DLV 2.1.1
   & \path{comparisons/inputs/asp/asp_cases.lp} & model enumeration & N \\
D4 & s(CASP) 1.1.4
   & \path{comparisons/inputs/scasp/scasp_cases.pl} & \(pacifist(n)\) & N \\
D5 & TweetyProject 1.31
   & \path{comparisons/inputs/tweety/d5_specificity.delp} & positive argument & A \\
EQ1 & \GK{} & \path{comparisons/inputs/gk/e1.gkp} & \(p(a)\) & N \\
   & PASTA 1.0.1 & \path{comparisons/inputs/pasta/pasta_e1.lp} & \(p(a)\) & A \\
X1 & \GK{} & \path{comparisons/inputs/gk/x1.gkp} & \(q(b)\) & N \\
   & smProbLog 2.1.0.42 & \path{comparisons/inputs/smproblog/smproblog_x1.pl} & \(q(b)\) & A \\
F1 & \GK{} & \path{comparisons/inputs/gk/f1.gkp} & \(bird(f(f(a)))\) & N \\
   & \ProbLog{} 2.2.10 & \path{comparisons/inputs/problog/problog_f1.pl} & \(bird(f(f(a)))\) & E \\
DJ1 & \GK{} & \path{comparisons/inputs/gk/dj1.gkp} & \(q(a)\) & N \\
   & PASTA 1.0.1 & \path{comparisons/inputs/pasta/pasta_dj1.lp} & \(q(a)\) & E \\
N1 & \GK{} & \path{comparisons/inputs/gk/n1_study4.gkp} &
   \texttt{twobirds(X,Y)} & N \\
   & \ProbLog{} 2.2.10 & \path{comparisons/inputs/problog/problog_nonground.pl} &
   open query & E \\
   & PASTA 1.0.1 & \path{comparisons/inputs/pasta/pasta_nonground.lp} &
   \texttt{twobirds(a,a)} & E \\
   & plingo 1.1.0 & \path{comparisons/inputs/plingo/plingo_nonground.lp} &
   \texttt{twobirds(a,a)} & E \\
   & smProbLog 2.1.0.42 & \path{comparisons/inputs/smproblog/smproblog_nonground.pl} &
   open query & E \\
   & s(CASP) 1.1.4 & \path{comparisons/inputs/scasp/scasp_nonground.pl} &
   \texttt{n1\_twobirds(X,Y)} & A \\
N2 & \GK{} & \path{comparisons/inputs/gk/n2_study10.gkp} &
   \texttt{smokes(X)} & N \\
   & \ProbLog{} 2.2.10 & \path{comparisons/inputs/problog/problog_nonground.pl} &
   open query & E \\
   & PASTA 1.0.1 & \path{comparisons/inputs/pasta/pasta_nonground.lp} &
   \texttt{smokes(ann)} & E \\
   & plingo 1.1.0 & \path{comparisons/inputs/plingo/plingo_nonground.lp} &
   \texttt{smokes(ann)} & E \\
   & smProbLog 2.1.0.42 & \path{comparisons/inputs/smproblog/smproblog_nonground.pl} &
   open query & E \\
   & s(CASP) 1.1.4 & \path{comparisons/inputs/scasp/scasp_nonground.pl} &
   \texttt{n2\_smokes(X)} & A \\
\end{longtable}
}

The public \GK{} Nixon file uses \texttt{dislikeswar} for the same positive
Nixon conclusion represented as \texttt{pacifist} in the abstract comparison
input.  For D4, I-DLV 1.1.6 produced the ground program and clasp 3.3.5
enumerated two stable models.

The public \path{comparisons/} package contains the concrete system-specific
inputs, commands, and captured outputs for the cells backed by runs in
Table~\ref{tab:systemmaster}.  Its machine-readable manifest records every
cell, including the reason for each U classification.  The abstract inputs
above give system-independent descriptions of the same cases.  The clingo, DLV, I-DLV,
s(CASP), and \GK{} birds inputs are identified in Section~9.3.  The
N/E/A/P/U labels are those of Section~\ref{sec:comparisonprotocol}.

\end{document}